\newtheorem{theorem}{Theorem}
\newtheorem{proposition}{Proposition}
\newtheorem{lemma}{Lemma}
\newtheorem{corollary}{Corollary}
\newtheoremstyle{remark}{}{}{}{}{\bfseries}{}{ }{}
\theoremstyle{remark}
\newtheorem*{remarks*}{Remarks.}
\newtheorem*{example*}{Example.}
\newtheorem*{comparison*}{Comparison.}
\DeclareMathOperator{\card}{card}
\newcommand{\defn}{:=}
\newcommand{\slim}{\sum\limits}
\newcommand{\reals}{\mathbb{R}}
\newcommand{\gaussian}{\mathcal{N}}
\newcommand{\numpapers}{m}
\newcommand{\numreviewers}{n}
\newcommand{\sizeofconf}{k}
\newcommand{\maxrevload}{\mu}
\newcommand{\vectmaxrevload}{\overline{\mu}}
\newcommand{\paperload}{\lambda}
\newcommand{\similarity}{s}
\newcommand{\simmatrix}{S}
\newcommand{\scoregiven}{y}
\newcommand{\assignmentfamily}{\mathcal{A}}
\newcommand{\assignment}{A}
\newcommand{\assignmentelement}{A}
\newcommand{\fairassignment}{\assignment^{\text{PR4A}}}
\newcommand{\gargassignment}{\assignment^{\text{ILPR}}}
\newcommand{\estimator}{\estpaperquality}
\newcommand{\mle}{{\estimator}^{\text{MLE}}}
\newcommand{\avgest}{{\estimator}^{\text{MEAN}}}
\newcommand{\prob}[1]{\mathbb{P} \left\{ #1 \right\}}
\newcommand{\hamming}[2]{\mathcal{D}_H \left(#1, #2\right)}
\newcommand{\tol}{t}
\newcommand{\accepted}{\mathcal{T}}
\newcommand{\truebest}{\accepted^{\ast}_{\sizeofconf}}
\newcommand{\truebestsubj}{\accepted^{\star}_{\sizeofconf}}
\newcommand{\reviewerset}[1]{\mathcal{R}_{#1}}
\newcommand{\algo}{{\sc PeerReview4All} }
\newcommand{\algodot}{{\sc PeerReview4All}}
\newcommand{\algtpms}{{\sc TPMS} }
\newcommand{\algtpmsdot}{{\sc TPMS}}
\newcommand{\alghard}{{\sc Hard} }
\newcommand{\algharddot}{{\sc Hard}}
\newcommand{\alggarg}{{\sc ILPR} }
\newcommand{\alggargdot}{{\sc ILPR}}
\renewcommand{\paperquality}{\theta}
\newcommand{\truepaperquality}{\paperquality^{\ast}}
\newcommand{\std}{\sigma}
\newcommand{\estpaperquality}{\widehat{\paperquality}}
\newcommand{\threshold}{\Delta_{\sizeofconf}}
\newcommand{\thresholdham}{\Delta_{\sizeofconf, \tol}}
\newcommand{\criticalstd}{\widetilde{\std}}
\newcommand{\exponent}[1]{\exp\left\{ #1 \right\}}
\newcommand{\problemfamily}{\mathcal{F}_{\sizeofconf}}
\newcommand{\problemfamilyham}{\mathcal{F}_{\sizeofconf, \tol}}
\newcommand{\distance}{\delta}
\newcommand{\expectation}[1]{\mathbb{E}\left\{ #1 \right\}}
\newcommand{\simmatrixfamily}{\mathcal{S}}
\newcommand{\minsim}{\zeta}
\newcommand{\errorrate}{\epsilon}
\newcommand{\const}{c}
\newcommand{\bias}{b}
\newcommand{\subjectivescore}{\widetilde{\paperquality}}
\newcommand{\avgpaperquality}{\widetilde{\paperquality}^{\star}}
\newcommand{\tpms}{\assignment^{\text{TPMS}}}
\newcommand{\greedyassignmentqualitysym}{G^{\simmatrix}}
\newcommand{\greedyassignmentquality}[1]{\greedyassignmentqualitysym\left( #1 \right)}
\newcommand{\assignmentqualitysym}{\Gamma^{\simmatrix}}
\newcommand{\assignmentquality}[2][]{\assignmentqualitysym_{#1}\left( #2 \right)}
\newcommand{\assignmentqualitymanual}[3][]{\Gamma_{#1}^{#2}\left( #3 \right)}
\newcommand{\tmpcapacity}{\kappa}
\newcommand{\tmpsimmatrix}{\widetilde{\simmatrix}}
\newcommand{\criticalsim}{\similarity^{\ast}}
\newcommand{\hardassignment}{\assignment^{\text{HARD}}}
\newcommand{\approximation}{\tau}
\newcommand{\naturals}{\mathbb{N}}
\newcommand{\subjscorematrix}{\widetilde{\Theta}}
\newcommand{\genvariance}{h}
\newcommand{\highqualitycrowd}{q}
\newcommand{\singlehighqual}{{v}}
\newcommand{\goodavgassignment}{\simmatrixfamily}
\newcommand{\onegoodforeveryone}{\simmatrixfamily_{\tmpcapacity}}
\newcommand{\vectmaxrevloadtmp}{\vectmaxrevload^{\text{tmp}}}
\newcommand{\simmatrixtmp}{\simmatrix^{\text{tmp}}}
\newcommand{\cumvar}{\bar{\sigma}}
\newcommand{\tobeassigned}{\mathcal{M}}
\newcommand{\worstoff}{\mathcal{J}^{\ast}}
\newcommand{\transformation}{f}
\newcommand{\indicator}[1]{\mathbb{I}\left\{ #1 \right\}}
\newcommand{\tmpassignment}{\widetilde{\assignment}}
\newcommand{\smallnumrev}{\eta}
\newcommand{\estpaperqualitymle}{\estpaperquality^{\text{MLE}}}
\newcommand{\estpaperqualityavg}{\estpaperquality^{\text{MEAN}}}
\newcommand{\transfavg}{1 - \genvariance}
\newcommand{\transfmle}{\genvariance^{-1}}
\newcommand{\constnua}{\nu_1}
\newcommand{\constnub}{\nu_2}
\newcommand{\rand}{\assignment^{\text{RAND}}}
\newcommand{\goodrevs}{\gamma}
\newcommand{\numhyp}{L}
\newcommand{\hypset}{\mathcal{P}}
\newcommand{\problemvar}{P}
\newcommand{\scorematrix}{Y}
\newcommand{\scoredist}{\mathbb{P}}
\newcommand{\hypest}{\varphi}
\newcommand{\KL}[2]{\text{KL}\left[#1 || #2 \right]}
\newcommand{\levbound}{L}
\newcommand{\str}{b}
\newcommand{\argmax}{\operatornamewithlimits{arg~max}}
\newcommand{\argmin}{\operatornamewithlimits{arg~min}}
\newcommand{\paperset}{\mathcal{J}}
\newcommand{\tmpset}{\mathcal{M}}
\begin{document}
\title{{PeerReview4All}:\\ Fair and Accurate Reviewer Assignment in  Peer Review}
\author{\\
  Ivan Stelmakh, Nihar B. Shah and Aarti Singh\\~\\
  School of Computer Science \\ 
  Carnegie Mellon University\\
  \texttt{\{stiv,nihars,aarti\}@cs.cmu.edu} 
}
\date{}
\maketitle

\begin{abstract}
We consider the problem of automated assignment of papers to reviewers in conference peer review, with a focus on fairness and statistical accuracy. Our fairness objective is to maximize the review quality of the most disadvantaged paper, in contrast to the commonly used objective of maximizing the total  quality over all papers. We design an assignment algorithm based on an incremental max-flow procedure that we prove is near-optimally fair. Our statistical accuracy objective is to ensure correct recovery of the papers that should be accepted. We  provide a sharp minimax analysis of the accuracy of the peer-review process for a popular objective-score model as well as for a novel subjective-score model that we propose in the paper. Our analysis proves that our proposed assignment algorithm also leads to a near-optimal statistical accuracy. 
Finally, we design a novel experiment that allows for an objective comparison of various assignment algorithms, and overcomes the inherent difficulty posed by the absence of a ground truth in experiments on peer-review. The results of this experiment as well as of other experiments on synthetic and real data corroborate the theoretical guarantees of our algorithm.
\end{abstract}

%%%%%%%%%%%%%%%%%%%%%%%%

\section{Introduction}
\label{intro}

Peer review is the backbone of academia. In order to provide high-quality peer reviews, it is of utmost importance to assign papers to the right reviewers~\citep{thurner2011peer,black1998makes,bianchi2015three}. Even a small fraction of incorrect reviews can have significant adverse effects on the quality of the published scientific standard~\citep{thurner2011peer} and dominate the benefits yielded by the peer-review process that may have high standards otherwise~\citep{squazzoni2012saint}. Indeed, researchers unhappy with the peer review process are somewhat more likely to link their objections to the quality or choice of reviewers~\citep{travis1991new}.

We focus on peer-review in conferences where a number of papers are submitted at once. These papers must simultaneously be assigned to multiple reviewers who  have load constraints. The importance of the reviewer-assignment stage of the peer-review process cannot be overestimated; quoting~\cite{rodriguez2007mapping}:
\begin{quote}
{\it ``one of the first and potentially most important stage is the one that attempts to distribute submitted manuscripts to competent referees.''  }
\end{quote}
Given the massive scale of many conferences such as NeurIPS and ICML, these reviewer assignments are largely performed in an automated manner. For instance, NeurIPS 2016 assigned 5 out of 6 reviewers per paper using an automated process~\citep{shah2017design}. This problem of automated reviewer assignments forms the focus of this paper.

Various past studies show that small changes in peer review quality can have far reaching consequences~\citep{thorngate2014numbers, squazzoni2012saint} not just for the papers under consideration but more generally also for the career trajectories of the researchers. These long term effects arise due to the widespread prevalence of the Matthew effect (``rich get richer'') in academia~\citep{merton1968matthew}.

It is also known~\citep{travis1991new,lamont2009professors} that works that are novel or not mainstream, particularly those interdisciplinary in nature, face significantly higher difficulty in gaining acceptance. A primary reason for this undesirable state of affairs is the absence of sufficiently many good ``peers'' to aptly review interdisciplinary research~\citep{porter1985peer}. 

These issues strongly motivate the dual goals of the reviewer assignment procedure we consider in this paper --- fairness and accuracy. By fairness, we specifically consider the notion of max-min fairness which is studied in various branches of science and engineering~\citep{rawls1971theory,Lenstra1990,hahne1991round,lavi2003towards,bonald2006queueing,Asadpour10maxmin}. In our context of reviewer assignments, max-min fairness posits maximizing the review-quality of the paper with the least qualified reviewers. The max-min fair assignment guarantees that no paper is discriminated against in favor of more lucky counterparts. That is, even the most ambivalent paper with a small number of reviewers being competent enough to evaluate its merits will receive as good treatment as possible. The max-min fair assignment also ensures that in \emph{any other assignment} there exists at least one paper with the fate at least as bad as the fate of the most disadvantaged paper in the aforementioned fair assignment.

Alongside, we also consider the requirement of statistical accuracy. One of the main goals of the conference peer-review process is to select the set of ``top'' papers for acceptance. Two key challenges towards this goal are to handle the noise in the reviews and subjective opinions of the reviewers; we accommodate these aspects in terms of existing~\citep{ge13bias, mcglohon10starquality, dai12yelp} and novel statistical models of reviewer behavior. Prior works on the reviewer assignment problem~\citep{Long13gooadandfair, Garg2010papers, Karimzadehgan08multiaspect, tang10constraied} offer a variety of algorithms that optimize the assignment for certain deterministic objectives, but do not study their assignments from the lens of statistical accuracy. In contrast, our goal is to design an assignment algorithm that can simultaneously achieve both the desired objectives of fairness and statistical accuracy.

We make several contributions towards this problem. We first present a novel algorithm, which we call \algodot, for assigning reviewers to papers. Our algorithm is based on a construction of multiple candidate assignments, each of which is obtained via an incremental execution of max-flow algorithm on a carefully designed flow network. These assignments cater to different structural properties of the similarities and a judicious choice between them provides the algorithm appealing properties.

Our second contribution is an analysis of the fairness objective that our \algo algorithm can achieve. We show that our algorithm is optimal, up to a constant factor, in terms of the max-min fairness objective. Furthermore, our algorithm can adapt to the underlying structure of the given similarity data between reviewers and papers
and in various cases yield better guarantees including the exact optimal solution in certain scenarios. Finally, after optimizing the outcome for the most worst-off paper and fixing the assignment for that paper, our algorithm aims at finding the most fair assignment for the next worst-off paper and proceeds in this manner until the assignment for each paper is fixed.

As a third contribution, we show that our \algo algorithm results in strong statistical guarantees in terms of correctly identifying the top papers that should be accepted. We consider a popular statistical model~\citep{ge13bias, mcglohon10starquality, dai12yelp} which assumes existence of some true objective score for every paper. We provide a sharp analysis of the minimax risk in terms of 
``incorrect'' accept/reject decisions, and show that our \algo algorithm leads to a near-optimal solution.

Fourth, noting that paper evaluations are typically subjective~\citep{kerr1977manuscript,mahoney1977publication, ernst1994reviewer,bakanic1987manuscript,lamont2009professors}, we propose a novel statistical model capturing subjective opinions of reviewers, which may be of independent interest. We provide a sharp minimax analysis under this subjective setting and prove that our assignment algorithm \algo is also near-optimal for this subjective-score setting.

Our fifth and final contribution comprises empirical evaluations. We designed and conducted an experiment on the Amazon Mechanical Turk crowdsourcing platform to objectively compare the performance of different reviewer-assignment algorithms. The design of the experiment is done carefully to circumvent the challenge posed by the absence of a ground truth in peer review settings, so that we can evaluate  accuracy objectively. In addition to the MTurk experiment, we provide an extensive evaluation of our algorithm on synthetic data, provide an evaluation on a reconstructed similarity matrix from the ICLR 2018 conference, and report the results of the experiment on real conference data conducted by~\citet{kobren19localfairness}. The results of these experiments highlight the promise of \algo in practice, in addition to the theoretical benefits discussed elsewhere in the paper. The dataset pertaining to the MTurk experiment, as well as the code for our \algo algorithm, are available on the first author's website.

The remainder of this paper is organized as follows. We discuss related literature in Section~\ref{sec:literature}. In Section~\ref{sec:problem_setting}, we present the problem setting formally with a focus on the objective of fairness.  In Section~\ref{sec:algorithm} we present our \algo algorithm. We establish deterministic approximation guarantees on the fairness of our \algo algorithm in Section~\ref{sec:algorithm_analysis}. We analyze the accuracy of our \algo algorithm under an objective-score model in Section~\ref{sec:obj_score_model}, and introduce and analyze a subjective score model in Section~\ref{sec:subj_score_model}. We empirically evaluate the algorithm in Section~\ref{sec:experiments} using synthetic and real-world experiments. We then provide the proofs of all the results in Section~\ref{sec:proofs}. We conclude the paper with a discussion in Section~\ref{sec:discussion}.

%%%%%%%%%%%%%%%%%%%

\section{Related literature}
\label{sec:literature}

The reviewer assignment process consists of two steps. First, a ``similarity'' between every (paper, reviewer) pair that captures the competence of the reviewer for that paper is computed. These similarities are computed based on various factors such as the text of the submitted paper, previous papers authored by reviewers, reviewers' bids and other features. Second, given the notion of good assignment, specified by the program chairs, papers are allocated to reviewers, subject to constraints on paper/reviewer loads. This work focuses on the second step (assignment), assuming the first step of computing similarities as a black box. In this section, we give a brief overview of the past literature on both of the steps of the reviewer-assignment process.

\smallskip
\noindent
{\bf Computing similarities.} 
The problem of identifying similarities between papers and reviewers is well-studied in data mining community. For example, \citet{mimno07topicbased} introduce a novel topic model to predict reviewers' expertise. \citet{liu14graphpropagation} use the random walk with restarts model to incorporate both expertise of reviewers and their authority in the final similarities. Co-authorship graphs~\citep{rodriguez08coauthorsip} and more general bibliographic graph-based data models~\citep{tran17expertsuggestion} give appealing methods which do not require a set of reviewers to be pre-determined by conference chair. Instead, these methods recommend reviewers to be recruited, which might be particularly useful for journal editors. 

One of the most widely used automated assignment algorithms today is the Toronto Paper Matching System or TPMS~\citep{charlin13tpms} which also computes estimations of similarities between submitted papers and available reviewers using techniques in natural language processing. These scores might be enhanced with reviewers' self-accessed expertise adaptively queried from them in an automatic manner. 

Our work uses these similarities as an input for our assignment algorithm, and considers the computation of these similarity values as a given black box. 

\smallskip
\noindent
{\bf Cumulative goal functions.}
With the given similarities, much of past work on reviewer assignments develop algorithms to maximize the cumulative similarity, that is, the sum of the similarities across all assigned reviewers and all papers. Such an objective is pursued by the organizers of SIGKDD conference~\citep{flach2010kdd} and by the widely employed TPMS assignment algorithm~\citep{charlin13tpms}. Various 
other popular conference management systems such as EasyChair (\url{easychair.org}) and HotCRP (\url{hotcrp.com}) and several other papers (see~\citealt{Long13gooadandfair, charlin12framework, goldsmith07aiconf, tang10constraied} and references therein) also aim to maximize various cumulative functionals in their automated reviewer assignment procedures. In the sequel, we argue however that optimizing such cumulative objectives is not fair --- in order to maximize them, these algorithms may discriminate against some subset of papers. Moreover, it is the non-mainstream submissions that are most likely to be discriminated against. With this motivation, we consider a notion of fairness instead. 
 
\newpage
\noindent
{{\bf Fairness.} In order to ensure that no papers are discriminated against, we aim at finding a \emph{fair assignment} --- an assignment that ensures that the most disadvantaged paper gets as competent reviewers as possible.
The issue of fairness is partially tackled by~\citet{Hartvigsen99assignment}, where they necessitate every paper to have at least one reviewer with expertise higher than certain threshold, and then maximize the value of that threshold. However, this improvement only partially solves the issue of discrimination of some papers: having assigned one strong reviewer to each paper, the algorithm may still discriminate against some papers while assigning remaining reviewers. Given that nowadays large conferences such as NeurIPS and ICML assign 4-6 reviewers to each paper, a careful assessment of the paper by one strong reviewer might be lost in the noise induced by the remaining weak reviews. In the present study, we measure the quality of assignment with respect to any particular paper as sum similarity over reviewers assigned to that paper.
Thus, the fairness of assignment is the minimum sum similarity across all papers; we call an assignment fair if it maximizes the fairness. We note that assignment computed by our \algo algorithm  is guaranteed to have   \emph{at least as large} max-min fairness as that proposed by~\citet{Hartvigsen99assignment}.}

~\citet{benferhat2001conference} discuss different approaches to selection of the ``optimal'' reviewer assignment. Together with considering a cumulative objective, they also note that one may define the optimal assignment as an assignment that minimizes a disutility of the most disadvantaged reviewer (paper). This approach resembles the notion of max-min fairness we study in this paper, but~\citet{benferhat2001conference} do not propose any algorithm for computing the fair assignment.   

The notion of max-min fairness was formally studied in context of peer-review by~\citet{Garg2010papers}. While studying a similar objective, our work develops both conceptual and theoretical novelties which we highlight here. First, \citet{Garg2010papers} measure the fairness in terms of reviewers' bids --- for every reviewer they compute a value of papers assigned to that reviewer based on her/his bids and maximize the minimum value across all reviewers.  While satisfying reviewers is a useful practice, we consider fairness towards the papers in their review to be of utmost importance. 
%During bidding process, a reviewer often has limited ability to evaluate her/his relevance to submissions
During a bidding process reviewers have limited  time resources and/or limited access to papers' content to evaluate their relevance, and hence reviewers' bids alone are not a good proxy towards the measure of fairness. In contrast, in this work we consider similarities --- scores that are designed to represent a competence of reviewer in assessing a paper. Besides reviewers' bids, similarities are computed based on the full text of the submissions and papers authored by reviewer and can additionally incorporate various factors such as quality of previous reviews, experience of reviewer and other features that cannot be self-assessed by reviewers.

The assignment algorithm proposed in~\citet{Garg2010papers} works in two steps. In the first step, the problem is set up as an integer programming problem and a linear programming relaxation is solved. The second step involves a carefully designed rounding procedure that returns a valid assignment. The algorithm is guaranteed to recover an assignment whose fairness is within a certain additive factor from the best possible assignment. However, the fairness guarantees provided in~\cite{Garg2010papers} turn out to be vacuous for various similarity matrices. As we discuss later in the paper, this is a drawback of the algorithm itself and not an artifact of their guarantees. In contrast, we design an algorithm with multiplicative approximation factor that is guaranteed to always provide a non-trivial approximation which is at most constant factor away from the optimal. 

Next, \citet{Garg2010papers} consider fairness of the assignment as an eventual metric of the assignment quality. However, we note that the main goal of the conference paper reviewing process is an accurate acceptance of the best papers. Thus, in the present work we both theoretically and empirically study the impact of the fairness of the assignment on the quality of the acceptance procedure. 

Finally, although~\citet{Garg2010papers} present their algorithm for the case of discrete reviewer's bids, we note that this assumption can be relaxed to allow real-valued similarities with a continuous range as in our setting. In this paper we refer to the corresponding extension of their algorithm as the Integer Linear Programming Relaxation (\alggargdot) algorithm.

\smallskip
\noindent
{\bf Fair division.}
A direction of research that is relevant to our work studies the problem of fair division where {max-min fairness} is extensively developed. The seminal work of~\citet{Lenstra1990} provides a constant factor approximation to the minimum makespan scheduling problem where the goal is to assign a number of jobs to the unrelated parallel machines such that the maximal running time is minimized. Recently~\citet{Asadpour10maxmin, bansal06santaclaus} proposed approximation algorithms for the problem of assigning a number of indivisible goods to several people such that the least happy person is as happy as possible. However, we note that techniques developed in these papers cannot be directly applied for reviewer assignments problem in peer review due to the various idiosyncratic constraints of this problem. In contrast to the classical formulation studied in these works, our problem setting requires each paper to be reviewed by a fixed number of reviewers and additionally has constraints on reviewers' loads. Such constraints allow us to achieve an approximation guarantee that is independent of the total number of papers and reviewers, and depends only on $\paperload$, the number of reviewers required per paper, as $\frac{1}{\paperload}$. In contrast, the approximation factor of~\citet{Asadpour10maxmin} gets worse at a rate of $\frac{1}{\sqrt{\numpapers} \log^3 \numpapers}$, where $\numpapers$ is a number of persons (papers in our setting).

\smallskip
\noindent
{\bf Statistical aspects.}
Different statistical aspects related to conference peer-review have been studied in the literature. \citet{mcglohon10starquality} and~\citet{dai12yelp} studied aggregation of consumers ratings to generate a ranking of restaurants or merchants. They come up with objective score model of reviewer which we also use in this work. 
\citet{ge13bias} also use similar model of reviewer and propose a Bayesian approach to calibrating reviewer' scores, which allows to incorporate different biases in context of conference peer-review. 
\citet{sajjadi16peergrading} empirically  compare different methods of score aggregation for peer grading of homeworks. Peer grading is a related problem to conference peer review, with the key difference that the questions and answers (``papers'') are more closed-ended and objective. They conclude that although more sophisticated methods are praised in the literature, the simple averaging algorithm demonstrates better performance in their experiment. Another interesting observation they make is an edge of cardinal grades over ordinal in their setup. In this work we also consider the conferences with cardinal grading scheme of submissions.  

To the best of our knowledge, no prior works on conference peer-review has studied the entire pipeline --- from assignment to acceptance --- from a statistical point of view. In this work we take the first steps to close this gap and provide a strong minimax analysis of na\"ive yet interesting procedure of determining top $\sizeofconf$ papers. Our findings suggest that higher fairness of the assignment leads to better quality of acceptance procedure. We consider both the objective score model \citep{ge13bias, mcglohon10starquality, dai12yelp} and a novel subjective-score model that we propose in the present paper.

\smallskip
\noindent
{\bf Coverage and Diversity.} For completeness, we also discuss several related works that study reviewer assignment problem. 

\citet{Li15concert} present a greedy algorithm that tries to avoid assigning a group of stringent reviewers or a group of lenient reviewers to a submission, thus maintaining diversity of the assignment in terms of having different combinations of reviewers assigned to different papers.

Another way to ensure diversity of the assignment is proposed by~\citet{liu14graphpropagation}. Instead of designing the special assignment algorithm, they try to incentivize the diversity by special construction of similarities. Besides incorporating expertise and authority of reviewers in similarities, they add an additional term to the optimization problem which balances similarities by increasing scores for reviewers from different research areas.

\citet{Karimzadehgan08multiaspect} consider topic coverage as an objective and propose several approaches to maintain broad coverage, requiring reviewers assigned to paper being expert in different subtopics covered by the paper. They empirically verify that given a paper and a set of reviewers, their algorithms lead to better coverage of paper's topics as compared to baseline technique that assigns reviewers based on some measure of similarity between text of submission and papers authored by reviewers, but does not do topic matching. 

A similar goal is formally studied by~\citet{Long13gooadandfair}. They measure the coverage of the assignment in terms of the total number of distinct topics of papers covered by the assigned reviewers. They propose a constant factor approximation algorithm that benefits from a sub-modular nature of the objective. As we show in Appendix~\ref{appendix:modifications}, the techniques of~\cite{Long13gooadandfair} can be combined with our proposed algorithm to obtain an assignment which maintains not only fairness, but also a broad topic coverage. 

\smallskip
\noindent
{\bf Research on peer review.} The explosion in the number of submissions in many conferences has spurred research in computer science on improving peer review. 
In addition to problems of fairness and accuracy of the reviewer-paper assignment process, there are a number of challenges in peer review which are addressed in the literature to various extents. These include problems of bias~\citep{tomkins2017reviewer,stelmakh2019testing}, miscalibration~\citep{ge13bias,roos2011calibrate,flach2010kdd,wang2018your}, subjectivity~\citep{noothigattu2018choosing}, strategic behavior~\citep{balietti2016peer,xu2018strategyproof,xu2019strategyproofArxiv}, and others~\citep{nips14experiment,gao2019does}. Of particular interest is the work by~\citet{fiez2019super} which optimizes the process by which reviewers can bid on which papers they prefer to review. In most automated reviewer-paper assignment systems, the bids and the text-matching similarities are then combined~\citep{shah2017design} to form the similarities used to compute the assignment. The bidding and the reviewer-paper assignments are executed separately in current systems, and given the intrinsic relations between the two, it is of interest to jointly design the two systems in the future.
%%%%%%%%%%%%%%%%%%%%%%%%

\section{Problem setting}
\label{sec:problem_setting}

In this section we present the problem setting formally with a focus on the objective of fairness.  (We introduce the statistical models we consider in Sections~\ref{sec:obj_score_model} and~\ref{sec:subj_score_model}.)

%%%%%%%%%%%%%%%%%%%%%%%%

\subsection{Preliminaries and notation}

%We begin with a formal description of the problem setting. 

Given a collection of $\numpapers \ge 2$ papers, suppose that there exists a true, unknown total ranking of the papers. The goal of the program chair (PC) of the conference is to recover top $\sizeofconf$ papers, for some pre-specified value $\sizeofconf < \numpapers$. In order to achieve this goal, the PC recruits $\numreviewers \ge 2$ reviewers and asks each of them to read and evaluate some subset of the papers. Each reviewer can review a limited number of papers. We let $\maxrevload$ denote the maximum number of papers that any reviewer is willing to review. Each paper must be reviewed by $\paperload$ distinct reviewers. In order to ensure this setting is feasible, we assume that $\numreviewers \maxrevload \ge \numpapers \paperload$. In practice, $\paperload$ is typically small (2 to 6) and hence should conceptually be thought of as a constant.

The PC has access to a similarity matrix $\simmatrix = \left\{\similarity_{ij} \right\} \in [0, 1]^{\numreviewers \times \numpapers}$, where $\similarity_{ij}$ denotes the similarity between any reviewer $i \in [\numreviewers]$ and any paper $j \in [\numpapers]$.\footnote{Here, we adopt the standard notation $[\nu] = \left\{1, 2, \ldots, \nu \right\}$ for any positive integer $\nu$.} 
These similarities are representative of the envisaged quality of the respective reviews: a higher similarity between any reviewer and paper is assumed to indicate a higher competence of that reviewer in reviewing that paper (this assumption is formalized later).  We do not discuss the design of such similarities, but often they are provided by existing systems~\citep{charlin13tpms, mimno07topicbased, liu14graphpropagation, rodriguez08coauthorsip, tran17expertsuggestion}. 

Our focus is on the assignment of papers to reviewers. We represent any assignment by a matrix $\assignment \in \{0,1\}^{\numreviewers \times \numpapers}$, whose $(i,j)^{\text{th}}$ entry is $1$ if reviewer $i$ is assigned paper $j$ and $0$ otherwise. We denote the set of reviewers who review paper $j$ under an assignment $\assignment$ as $\reviewerset{\assignment}(j)$.  We call an assignment  \emph{feasible} if it respects the $
(\maxrevload, \paperload)$ conditions on the reviewer and paper loads. We denote the set of all feasible assignments as $\assignmentfamily$:
\begin{align*}
 \assignmentfamily \defn \Big\{ \assignment \in \{0,1\}^{\numreviewers \times \numpapers} \mid 		\slim_{i \in [\numreviewers]} \assignmentelement_{ij} &= \paperload \ \forall j \in [\numpapers], 
		\sum_{j \in [\numpapers]} \assignmentelement_{ij} \le \maxrevload  \ \forall i \in [\numreviewers]\Big\}.
\end{align*}

Our goal is to design a reviewer-assignment algorithm with a two-fold objective: (i) fairness to all papers, (ii) strong statistical guarantees in terms of recovering the top papers. 

From a statistical perspective, we assume that when any reviewer $i$ is asked to evaluate any paper $j$, then she/he returns score $\scoregiven_{ij} \in \reals$. The end goal of the PC is to accept or reject each paper. In this work we consider a simplified yet indicative setup. We assume that the PC wishes to accept the $\sizeofconf$ ``top'' papers from the set of $\numpapers$ submitted papers. We denote the ``true'' set of top $\sizeofconf$ papers as $\accepted^{\ast}_{\sizeofconf}$. 
While the PC's decisions in practice would rely on several additional factors including the text comments by reviewers and the discussions between them, in order to quantify the quality of any assignment we assume that the top $\sizeofconf$ papers are chosen through some estimator $\estimator$ that operates on the scores provided by the reviewers. Such an estimator can be used in practice to serve as a guide to the program committee in order to help reduce their load. 
These acceptance decisions can be described by the chosen assignment and estimator $\left( \assignment, \estimator \right)$. We denote the set of accepted papers under an assignment $\assignment$ and estimator $\estimator$ as $\accepted_{\sizeofconf} = \accepted_{\sizeofconf}\left(\assignment, \estimator \right)$. The PC then wishes to maximize the probability of recovering the set $\accepted^{\ast}_{\sizeofconf}$ of top $\sizeofconf$ papers.

 Although the goal of exact recovering of top $\sizeofconf$ papers is appealing, given the large number of papers submitted to a conference such as ICML and NeurIPS, this goal might be too optimistic. Another alternative is to recover top $\sizeofconf$ papers allowing for a certain Hamming error tolerance $\tol \in \{0, \ldots, \sizeofconf - 1\}$. For any two subsets $\mathcal{M}_1, \mathcal{M}_2$ of $[\numpapers]$, we define their Hamming distance to be the number of items that belong to exactly one of the two sets --- that is
\begin{align}
	\label{eqn:hamming}
	\hamming{\tmpset_1}{\tmpset_2} = \card\left(\left\{\tmpset_1 \cup \tmpset_2\right\} \backslash \left\{\tmpset_1 \cap \tmpset_2\right\}\right).
\end{align}
The goal of PC under this scenario is to choose a pair $\left(\assignment, \estimator\right)$ such that for the given error tolerance parameter $\tol$, the probability $\prob{\hamming{\accepted_{\sizeofconf}}{\truebest} > 2\tol}$ is minimized. We return to more details on the statistical aspects later in the paper.

%%%%%%%%%%%%%%%%%%%%%%%%

\subsection{Fairness objective}
\label{sec:assignment_strategy}

An assignment objective that is popular in past papers~\citep{charlin13tpms, charlin12framework, taylor08assignment} is to maximize the cumulative similarity over all papers. Formally, these works choose an assignment $\assignment \in \assignmentfamily$ which maximizes the quantity %$\greedyassignmentquality{\cdot}$ defined as
\begin{align}
\label{eqn:unfair_criteria}
	\greedyassignmentquality{\assignment} \defn 
	\slim_{j=1}^{\numpapers} \slim_{i \in \reviewerset{\assignment}(j)} \similarity_{ij}.
\end{align}
An assignment algorithm that optimizes this objective~\eqref{eqn:unfair_criteria} is implemented in the widely used  Toronto Paper Matching System~\citep{charlin13tpms}. We will refer to the feasible assignment that maximizes the objective~\eqref{eqn:unfair_criteria} as $\tpms$ and denote the algorithm which computes $\tpms$ as \algtpmsdot.

We argue that the objective~\eqref{eqn:unfair_criteria} does not necessarily lead to a \emph{fair} assignment. The optimal assignment can discriminate some papers in order to maximize the cumulative objective. To see this issue, consider the following example.
	
 Consider a toy problem with $\numreviewers = \numpapers = 3$ and $\maxrevload = \paperload = 1$, with a similarity matrix shown in Table~\ref{table:exmple_unfair}. In this example, paper $c$ is easy to evaluate, having non-zero similarities with all the reviewers, while papers $a$ and $b$ are more specific and weak reviewer $2$ has no expertise in reviewing them. Reviewer $1$ is an expert and is able to assess all three papers. Maximizing total sum of similarities~\eqref{eqn:unfair_criteria}, the  \algtpms algorithm  will assign reviewers $1$, $2$, and $3$ to papers $a$, $b$, and $c$ respectively. Observe that under this assignment, paper $b$ is assigned a reviewer who has insufficient expertise to evaluate the paper. On the other hand, the alternative assignment which assigns reviewers $1$, $2$, and $3$ to papers $a$, $c$, and $b$ respectively ensures that every paper has a reviewer with  similarity at least $1/5$. This ``fair'' assignment does not discriminate against papers $a$ and $b$ for improving the review quality of the already benefitting paper $c$. 

\begin{table}[t]
\vskip 0.15in
\begin{center}
\begin{small}
\begin{sc}
\begin{tabular}{lccr}
\toprule
          & Paper $a$ & Paper $b$ & Paper $c$ \\
\midrule
Reviewer $1$ & $1$   & $1$   & $1$  \\
Reviewer $2$ & $0$   & $0$   & $1/5$  \\
Reviewer $3$ & $1/4$ & $1/4$ & $1/2$  \\
\bottomrule
\end{tabular}
\end{sc}
\end{small}
\end{center}
\vskip -0.1in
\caption{Example similarity.} 
\label{table:exmple_unfair}
\end{table}

With this motivation, we now formally describe the notion of fairness that we aim to optimize in this paper.
Inspired by the notion of max-min fairness in a variety of other fields~\citep{rawls1971theory,Lenstra1990,hahne1991round,lavi2003towards,bonald2006queueing,Asadpour10maxmin}, we aim to find a feasible assignment $\assignment \in \assignmentfamily$ to maximize the following objective $\assignmentqualitysym$ for given similarity matrix $\simmatrix$:
\begin{align}
\label{eqn:fairness_criteria}
	 \assignmentquality{\assignment} = \min\limits_{j \in [\numpapers]} \slim_{i \in \reviewerset{\assignment}(j)} \similarity_{ij}.
\end{align}
The assignment optimal for~\eqref{eqn:fairness_criteria} maximizes the minimum sum similarity across all the papers. In other words, for \emph{every other assignment} there exists some paper which has the same or lower sum similarity. Returning to our example, the objective~\eqref{eqn:fairness_criteria} is maximized when reviewers $1$, $2$, and $3$ are assigned to papers $a$, $c$, and $b$ respectively. 

Our reviewer assignment algorithm presented subsequently guarantees the aforementioned fair assignment. Importantly, while aiming at optimizing~\eqref{eqn:fairness_criteria}, our algorithm does even more --- {having the assignment for the worst-off paper fixed, it finds an assignment that satisfies the second worst-off paper, then the next one and so on until all papers are assigned}.

It is important to note that similarities $\similarity_{ij}$ obtained by different techniques~\citep{charlin13tpms, mimno07topicbased, rodriguez08coauthorsip, tran17expertsuggestion} all have different meanings. Therefore, the PC might be interested to consider a slightly more general formulation and aim to maximize
\begin{align}
\label{eqn:gen_fair_criteria}
	\assignmentquality[\transformation]{\assignment} = \min\limits_{j \in [\numpapers]} \slim_{i \in \reviewerset{\assignment}(j)} \transformation(\similarity_{ij}),
\end{align}
for some reasonable choice of monotonically increasing function $\transformation: [0, 1] \to [0, \infty]$.\footnote{We allow $\transformation(\similarity_{ij}) = \infty$. When reviewer with similarity $\infty$ is assigned to paper, she/he is able to perfectly access the quality of the paper.} While the same effect might be achieved by redefining $\similarity_{ij}' = \transformation(\similarity_{ij})$ for all $i \in [\numreviewers], \ j \in [\numpapers]$, this formulation underscores the fact that assignment procedure is not tied to any particular method of obtaining similarities.  Different choices of $\transformation$ represent the different views on the meaning of similarities. As a short example, let us consider $\transformation(\similarity_{ij}) = \indicator{\similarity_{ij} > \minsim}$ for some $\minsim > 0$.\footnote{We use $\mathbb{I}$ to denote the indicator function, that is, $\indicator{x} = 1$ if $x$ is true and $\indicator{x} = 0$ otherwise.} This choice stratifies reviewers for each paper into strong (similarity higher than $\minsim$) and weak. The fair assignment would be such that the most disadvantaged paper is assigned to as many strong reviewers as possible. We discuss other variants of $\transformation$ later when we come to the statistical properties of our algorithm. In what follows we refer to the problem of finding reviewer assignment that maximizes the term~\eqref{eqn:gen_fair_criteria} as the \emph{fair assignment problem}. 

Unfortunately, the assignment optimal for~\eqref{eqn:gen_fair_criteria} is hard to compute for any reasonable choices of function $\transformation$. \citet{Garg2010papers} showed that finding a fair assignment is an NP-hard problem even if $\transformation(\similarity) \in \left\{1, 2, 3 \right\}$ and $\paperload = 2$.

With this motivation, in the next section we design a reviewer assignment algorithm that seeks to optimize the objective~\eqref{eqn:gen_fair_criteria} and provide associated approximation guarantees. We will refer to a feasible assignment that exactly maximizes $\assignmentquality[\transformation]{\assignment}$ as $\hardassignment_{\transformation}$ and denote the algorithm that computes $\hardassignment_{\transformation}$ as \algharddot. When the function $\transformation$ is clear from context, we drop the subscript $\transformation$ and denote the \alghard assignment as $\hardassignment$ for brevity.

Finally we note that for our running example (Table~\ref{table:exmple_unfair} above), the \alggarg algorithm~\citep{Garg2010papers}, despite trying to optimize fairness of the assignment, also returns an unfair assignment $\gargassignment$ which coincides with $\tpms$. The reason for this behavior lies in the inner-working of the \alggarg algorithm: a linear programming relaxation splits reviewers $1$ and $2$ in two and makes them review both paper $a$ and paper $b$. During the rounding stage, reviewer $1$ is assigned to either paper $a$ or paper $b$, ensuring that the remaining paper will be reviewed by reviewer $2$. Given that reviewer $2$ has zero similarity with both papers $a$ and $b$, the fairness of the resulting assignment will be $0$. Such an issue arises more generally in the \alggarg algorithm and is discussed in more detail subsequently in Section~\ref{sec:fairness_literature_compare} and in Appendix~\ref{appendix:garg_discussion}.

%%%%%%%%%%%%%%%%%%%%%%%%

\section{Reviewer assignment algorithm}
\label{sec:algorithm}

In this section we first describe our \algo algorithm followed by an illustrative example.

%%%%%%%%%%%%%%%%%%%%%%%%

\subsection{Algorithm}

A high level idea of the algorithm is the following. For every integer  $\tmpcapacity \in [\paperload]$, we try to assign each paper to $\tmpcapacity$ reviewers with maximum possible similarities while respecting constraints on reviewer loads. We do so via a carefully designed ``subroutine'' that is explained below. Continuing for that value of $\tmpcapacity$, we complement this assignment with $(\paperload - \tmpcapacity)$ additional reviewers for each paper. Repeating the procedure for each value of $\tmpcapacity \in [\paperload]$, we obtain $\paperload$ candidate assignments each with $\paperload$ reviewers assigned to each paper, and then choose the one with the highest fairness.
The assignment at this point ensures guarantees of worst-case fairness~\eqref{eqn:gen_fair_criteria}. We then also optimize for the second worst-off paper, then the third worst-off paper and so on in the following manner. In the assignment at this point, we find the most disadvantaged papers and permanently fix corresponding reviewers to these papers. Next, we repeat the procedure described above to find the most fair assignment among the remaining papers, and so on. By doing so, we ensure that our final assignment is not susceptible to bottlenecks which may be caused by irrelevant papers with small average similarities.   

The higher-level idea behind the aforementioned subroutine to obtain the candidate assignment for any value of $\tmpcapacity \in [\paperload]$ is as follows. The subroutine constructs a layered flow network graph with one layer for reviewers and one layer for papers, that captures the similarities and the constraints on the paper/reviewer loads.  Then the subroutine incrementally adds edges between (reviewer, paper) pairs in decreasing order of similarity and stops when the paper load constraints are met (each paper can be assigned to $\tmpcapacity$ reviewers using only edges added at this point). This iterative procedure ensures that the papers are assigned reviewers with approximately the highest possible similarities.

We formally present our main algorithm as Algorithm~\ref{alg:fair_assignment} and the subroutine as Subroutine~\ref{alg:fair_subroutine}. In what follows, we walk the reader through the steps in the subroutine and the algorithm in more detail.
\setcounter{algorithm}{0}
\floatname{algorithm}{Subroutine}

\begin{algorithm}[tb]
   \caption{\algo Subroutine}
   \label{alg:fair_subroutine}
   {\bfseries Input:} 
          $\tmpcapacity \in  [\paperload]$: number of reviewers required per paper \\
      \hphantom{{\bfseries Input:}} $\tobeassigned$: set of papers to be assigned \\
 	  \hphantom{{\bfseries Input:}}    $\simmatrix \in \left(\left\{ -\infty \right\} \cup [0,1]\right)^{\numreviewers \times |\tobeassigned|}$: similarity matrix \\
 	  \hphantom{{\bfseries Input:}} $(\maxrevload^{(1)}, \ldots, \maxrevload^{(\numreviewers)}) \in [\maxrevload]^\numreviewers$: reviewers' maximum loads 
 	  
   {\bfseries Output: } Reviewer assignment $\assignment$ \\
   {\bfseries Algorithm:} 
   \begin{enumerate}[noitemsep]
        \item Initialize $\assignment$ to an empty assignment\label{Substep:initA} 
	   	\item Initialize the flow network:\label{Substep:initflow}
	   		\begin{itemize}[noitemsep]
	   			\item {\bf Layer 1:} one vertex (source)
	   			\item {\bf Layer 2:} one vertex for every reviewer $i \in [\numreviewers]$, and directed edges of capacity $\maxrevload^{(i)}$ and cost $0$ from the source to every reviewer 
	   			\item {\bf Layer 3:} one vertex for every paper $j \in \tobeassigned$
	   			\item {\bf Layer 4:} one vertex (sink), and directed edges of capacity $\tmpcapacity$ and cost $0$ from each paper to the sink
	   		\end{itemize}
	   	\item Find (reviewer, paper) pair $(i, j)$ such that the following two conditions are satisfied: \label{Substep:find_pair}
	   			\begin{itemize}
	   				\item the corresponding vertices $i$ and $j$  are not connected in the flow network
	   				\item the similarity $\similarity_{ij}$ is maximal among the pairs which are not connected (ties are broken arbitrarily)
	   			\end{itemize}
	   		and call this pair $(i', j')$
	    \item Add a directed edge of capacity $1$ and cost $\similarity_{i'j'}$ between nodes $i'$ and $j'$  \label{Substep:add_pair}
	    \item Compute the max-flow from source to sink, if the size of the flow is strictly smaller than $|\tobeassigned| \tmpcapacity$, then go to Step~\ref{Substep:find_pair} \label{Sstep:test_flow}
		\item If there are multiple possible max-flows, choose any one arbitrarily (or use any heuristic such as max-flow with max cost) \label{Sstep:pick_flow}
		\item For every edge $(i, j)$ between layers 2 (reviewers) and 3 (papers) which carries a unit of flow in the selected max-flow, assign reviewer $i$ to paper $j$ in the assignment $\assignment$ \label{Sstep:final_step}
	   \end{enumerate}

\end{algorithm}
\noindent
{\bf Subroutine.} A key component of our algorithm is a construction of a flow network in a sequential manner in Subroutine~\ref{alg:fair_subroutine}. 
The subroutine takes as input, among other arguments, the set $\tobeassigned$ of papers that are not yet assigned and the required number of reviewers per paper $\tmpcapacity \le \paperload$. The goal of the subroutine is to assign each paper in $\tobeassigned$ with $\tmpcapacity$ reviewers, respecting the reviewer load constraints, in a way that minimum similarity across all paper-reviewer pairs in resulting assignment is maximized.

The output of the subroutine is an assignment (represented by variable $\assignment$) which is initially set as empty (Step~\ref{Substep:initA}). The subroutine begins (Step~\ref{Substep:initflow}) with a construction of a directed acyclic graph (a ``flow network'') comprising 4 layers in the following order: a source, all reviewers, all papers in $\tobeassigned$, and a sink. An edge may exist only between consecutive layers. The edges between the first two layers control the reviewers' workloads and edges between the last two layers represent the number of reviews required by the papers. Finally, costs of the all edges in this initial construction are set to $0$. Note that in subsequent steps, the edges are added only between the second and third layers. Thus, the maximum flow in the network is at most $|\tobeassigned| \tmpcapacity$. 
  
The crux of the subroutine is to incrementally add edges one at a time between the layers, representing the reviewers and papers, in a carefully designed manner (Steps~\ref{Substep:find_pair} and~\ref{Substep:add_pair}). The edges are added in order of decreasing similarities. These edges control a reviewer-paper relationship: they have a unit capacity to ensure that any reviewer can review any paper at most once and their costs are equal to the similarity between the corresponding (reviewer, paper) pair.

After adding each edge, the subroutine (Step~\ref{Sstep:test_flow}) tests whether a max-flow of size $|\tobeassigned| \tmpcapacity$ is feasible. Note that a feasible flow of size $|\tobeassigned| \tmpcapacity$ corresponds to a feasible assignment: by construction of the flow network described earlier, we know that the reviewer and paper load constraints are satisfied. The capacity of each edge in our flow network is a non-negative integer, thereby guaranteeing that the max-flow is an integer, that it can be found in polynomial time, and that the flow in every edge is a non-negative integer under the max-flow. Once the max-flow of size $|\tobeassigned| \tmpcapacity$ is reached, the subroutine stops adding edges. At this point, it is ensured that the value of the lowest similarity in the resulting assignment is maximized. 

Finally, the subroutine assigns each paper to $\tmpcapacity$ reviewers, using only the ``high similarity'' edges added to the network so far (Steps~\ref{Sstep:pick_flow} and~\ref{Sstep:final_step}).  The existence of the corresponding assignment is guaranteed by max-flow in the network being equal to $|\tobeassigned| \tmpcapacity$. There may be more than one feasible assignments that attain the max-flow. While any of these assignments would suffice from the standpoint of optimizing the worst-case fairness objective~\eqref{eqn:gen_fair_criteria}, the PC may wish to make a specific choice for additional benefits and specify the heuristic to pick the max-flow in Step~\ref{Sstep:pick_flow} of the subroutine. For example, if the max-flow with the maximum cost is selected, then the resulting assignment nicely combines fairness with the high average quality of the assignment. Another choice, discussed in Appendix~\ref{appendix:modifications}, helps with broad topic coverage of the assignment.  Importantly, the approximation guarantees established in Theorem~\ref{thm:deterministic} and Corollary~\ref{corr:seq_deterministic}, as well as statistical guarantees from Sections~\ref{sec:obj_score_model} and~\ref{sec:subj_score_model} hold for any max-flow assignment chosen in Steps~\ref{Sstep:pick_flow} and~\ref{Sstep:final_step}.

For comparison, we note that the \algtpms algorithm can equivalently be interpreted in this framework as follows. The \algtpms algorithm would first \emph{connect all reviewers to all papers} in layers 2 and 3 of the flow graph. It will then compute a max-flow with max cost in this fully connected flow network and make reviewer-paper assignments corresponding to the edges with unit flow between layers 2 and 3. In contrast, our sequential construction of the flow graph prevents papers from being assigned to weak reviewers and is crucial towards ensuring the fairness objective.

\setcounter{algorithm}{0}
\floatname{algorithm}{Algorithm}
\begin{algorithm}[tb]
   \caption{\algo Algorithm}
   \label{alg:fair_assignment}
    {\bfseries Input:} $\paperload \in [\numreviewers]$: number of reviewers  required per paper\\
    \hphantom{{\bfseries Input:}} $\simmatrix \in [0,1]^{\numreviewers \times \numpapers}$: similarity matrix\\ 
    \hphantom{{\bfseries Input:}} $\maxrevload \in [\numpapers]$: reviewers' maximum load\\
    \hphantom{{\bfseries Input:}} $\transformation$: transformation of similarities\\
   {\bfseries Output: } Reviewer assignment $\fairassignment_{\transformation}$\\
   {\bfseries Algorithm:} 
   \begin{enumerate}[noitemsep]
    \item Initialize $\vectmaxrevload = ({\maxrevload, \ldots, \maxrevload}) \in [\maxrevload]^\numreviewers$ \\  \hphantom{{Initialize}} $\fairassignment_{\transformation}, \assignment_0: $ empty assignments \\ \hphantom{{Initialize}} $\tobeassigned = [\numpapers]$:  set of papers to be assigned
     \item For $\tmpcapacity = 1$ to $\paperload$ \label{Algostep:loopkappa}
        \begin{enumerate}[noitemsep]
            \item Set $\vectmaxrevloadtmp = \vectmaxrevload, \simmatrixtmp = \simmatrix$
    		\item Assign $\tmpcapacity$ reviewers to every paper using subroutine:  $\assignment_{\tmpcapacity}^1 =$ Subroutine$(\tmpcapacity, \tobeassigned, \simmatrixtmp, \vectmaxrevloadtmp)$   		\label{Algostep:subroutine_call}
    		\item Decrease $\vectmaxrevloadtmp$ for every reviewer by the number of papers she/he is assigned in $\assignment_{\tmpcapacity}^1$, set corresponding similarities in $\simmatrixtmp$ to $-\infty$
    		\label{Algostep:adjust_loads}
    		
    		\item Run subroutine with adjusted $\vectmaxrevloadtmp$ and $\simmatrixtmp$ to assign remaining $\paperload - \tmpcapacity$ reviewers to every paper:
    		$\assignment_{\tmpcapacity}^2 = $  Subroutine$(\paperload - \tmpcapacity, \tobeassigned, \simmatrixtmp, \vectmaxrevloadtmp)$ \label{Algostep:subroutine_call2}
    		\item Create assignment $\assignment_{\tmpcapacity}$ such that for every pair $(i, j)$ of reviewer $i \in [\numreviewers]$ and paper $j \in \tobeassigned$, reviewer $i$ is assigned to paper $j$ if she/he is assigned to this paper in either $\assignment_{\tmpcapacity}^1$ or $\assignment_{\tmpcapacity}^2$ \label{Algostep:join}
    	\end{enumerate}
    \item Choose $\tmpassignment \in \argmax\limits_{\tmpcapacity \in [\paperload]\cup\{ 0 \} } \assignmentquality[\transformation]{\assignment_{\tmpcapacity}}$ with ties broken arbitrarily \label{Algostep:select_candidate}
   	\item For every paper $j \in \worstoff \defn \argmin\limits_{\ell \in \tobeassigned} \sum\limits_{i \in \reviewerset{\tmpassignment}(\ell)} \transformation(\similarity_{i\ell})$, assign all reviewers $\reviewerset{\tmpassignment}(j)$ to paper $j$ in $\fairassignment_{\transformation}$  \label{Algostep:assignweakest}
   	\item For every reviewer $i \in [\numreviewers]$, decrease $\maxrevload^{(i)}$ by the number of papers in $\worstoff$ assigned to $i$ \label{Algostep:adjust_abilities}
   	\item Delete columns corresponding to the papers $\worstoff$ from $\simmatrix$ and $\tmpassignment$, update $\tobeassigned = \tobeassigned \backslash \worstoff$ \label{Algostep:adjust_sets}
   	\item Set $\assignment_0 = \tmpassignment$ \label{Algostep:enditer}
   	\item If $\tobeassigned$ is not empty, go to Step~\ref{Algostep:loopkappa}
   	\end{enumerate}
\end{algorithm}

\smallskip
\noindent
{\bf Algorithm.}
The algorithm calls the subroutine iteratively and uses the outputs of these iterates in a carefully designed manner. Initially, all papers belong to a set $\tobeassigned$ which represents papers that are not yet assigned. The algorithm repeats Steps~\ref{Algostep:loopkappa} to~\ref{Algostep:enditer} until all papers are assigned. In every iteration, for every value of $\tmpcapacity \in [\paperload]$, the algorithm first calls the subroutine to assign $\tmpcapacity$ reviewers to each paper from $\tobeassigned$ (Step~\ref{Algostep:subroutine_call}), and then adjusts reviewers' capacities and the similarity matrix (Step~\ref{Algostep:adjust_loads}) to prevent any reviewer being assigned to the same paper twice. Next, the subroutine is called again (Step~\ref{Algostep:subroutine_call2}) to assign another $(\paperload - \tmpcapacity)$ reviewers to each paper. As a result, after completion of Step~\ref{Algostep:loopkappa}, $\paperload$ feasible candidate assignments $\assignment_1, \ldots, \assignment_{\paperload}$ are constructed. Each assignment $\assignment_{\tmpcapacity}, \tmpcapacity \in [\paperload]$, 
	is guaranteed (through the Step~\ref{Algostep:subroutine_call}) to maximize the minimum similarity across pairs $(i, j)$ where $j \in \tobeassigned$ and reviewer $i$ is among $\tmpcapacity$ strongest reviewers assigned to paper $j$ in $\assignment_{\tmpcapacity}$; and  (through the Steps~\ref{Algostep:subroutine_call2} and~\ref{Algostep:join}) to have each paper assigned with exactly $\paperload$ reviewers.

In Step~\ref{Algostep:select_candidate}, the algorithm chooses the assignment with the highest fairness~\eqref{eqn:gen_fair_criteria} among the $\paperload$ candidate assignments and the assignment $\assignment_0$ from the previous iteration (empty in the first iteration). Note that since $\assignment_0$ is also included in the maximizer, the fairness cannot decrease in subsequent iterations.

In the chosen assignment, the algorithm identifies the papers that are most disadvantaged, and fixes the assignment for these papers (Step~\ref{Algostep:assignweakest}). The assignment for these papers will not be changed in any subsequent step. The next steps (Steps~\ref{Algostep:adjust_abilities} and~\ref{Algostep:adjust_sets}) update the auxiliary variables to account for this assignment that is fixed --- decreasing the corresponding reviewer capacities and removing these assigned papers from the set $\tobeassigned$. Step~\ref{Algostep:enditer} then keeps a track of the present assignment $\tmpassignment$ for use in subsequent iterations, ensuring that fairness cannot decrease as the algorithm proceeds.

\begin{remarks*}

We make a few additional remarks regarding the \algo algorithm.

\emph{1. Computational cost:} A  na\"ive implementation of the \algo algorithm has a computational complexity $\widetilde{\mathcal{O}}\left(\paperload (\numpapers + \numreviewers)\numpapers^2 \numreviewers \right)$. We give more details on implementation and computational aspects in Appendix~\ref{appendix:computational_aspects}. 
 
\emph{2. Variable reviewer or paper loads:} More generally, the \algo algorithm allows for specifying different loads for different reviewers and/or papers. For general paper loads, we consider $\tmpcapacity \le \max_{j \in [\numpapers]} \paperload^{(j)}$ and define the capacity of edge between node corresponding to any paper $j$ and sink as $\min\{\tmpcapacity, \paperload^{(j)} \}$.

\emph{3. Incorporating conflicts of interest:} One can easily incorporate any conflict of interest between any reviewer and paper by setting the corresponding similarity to $- \infty$.  

\emph{4. Topic coverage:} The techniques developed in~\cite{Long13gooadandfair} can be employed to modify our algorithm in a way that it first ensures fairness and then, among all approximately fair assignments, picks one that approximately maximizes the number of distinct topics of papers covered. We discuss this modification in Appendix~\ref{appendix:modifications}.

\end{remarks*}

%%%%%%%%%%%%%%%%%%%%%%%%

\subsection{Example}
\label{sec:example}

To provide additional intuition behind the design of the algorithm, we now present an  example that we also use in the next section to explain our approximation guarantees. 
 
Let for a moment assume that $\transformation(\similarity) = \similarity$ and let $\minsim$ be a constant close to $1$. Consider the following two scenarios:
{
\renewcommand{\theenumi}{(S\arabic{enumi})}
\renewcommand{\labelenumi}{\theenumi}
\begin{enumerate}[noitemsep]
	\item The optimal assignment $\hardassignment$ is such that all the papers are assigned to reviewers with high similarity:
		\begin{align}
			\label{eqn:easy_scenario}
		 \min\limits_{i \in \reviewerset{\hardassignment}(j)} \similarity_{ij}> \minsim \qquad 	\forall j \in [\numpapers].	
		\end{align} \label{case:ideal}

	\item The optimal assignment $\hardassignment$ is such that there are some  ``critical'' papers which have $\smallnumrev < \paperload$ assigned reviewers with similarities higher than $\minsim$ and the remaining assigned reviewers with small similarities. All other papers are assigned to $\paperload$ reviewers with similarity higher than $\minsim$. \label{case:hard}
\end{enumerate}
}
Intuitively, the first scenario corresponds to an ideal situation since there exists an assignment such that each paper has $\paperload$ competent reviewers (with similarity $\minsim \approx 1$). In contrast, in the second scenario, even in the fair assignment, some papers lack expert reviewers. Such a scenario may occur, for example, if some non-mainstream papers were submitted to a conference. This case entails identifying and treating these disadvantaged papers as well as possible. To be able to find the fair assignment in both scenarios, the assignment algorithm should distinguish between them and adapt its behavior to the structure of similarity matrix. Let us track the inner-workings of \algo algorithm to demonstrate this behaviour.

We note that by construction, the fairness of the resulting assignment $\fairassignment$ is determined in the first iteration of Steps~\ref{Algostep:loopkappa} to~\ref{Algostep:enditer} of Algorithm~\ref{alg:fair_assignment}, so we restrict our attention to $\tobeassigned = [\numpapers]$.  First, consider scenario~\ref{case:ideal}. The subroutine called with parameter $\tmpcapacity = \paperload$ will add edges to the flow network until the maximal flow of size $\numpapers \paperload$ is reached. Since the optimal assignment $\hardassignment$ is such that the lowest similarity is higher than $\minsim$, the last edge added to the flow network will have similarity at least $\minsim$, implying that the fairness of the candidate assignment $\assignment_{\paperload}$, which is a lower bound for the fairness of resulting assignment, will be at least $\paperload \minsim$. Given that $\minsim$ is close to one, we conclude that in this case algorithm is able to recover an assignment which is at least very close to optimal.

Now, let us consider scenario~\ref{case:hard}. In this scenario, the subroutine called with $\tmpcapacity = \paperload$ may return a poor assignment. Indeed, since there is a lack of competent reviewers  for critical papers,  there is no way to assign each paper with $\paperload$ reviewers having a high minimum similarity in the assignment. However, the subroutine called with parameter $\tmpcapacity = \smallnumrev$ will find $\smallnumrev$ strong reviewers for each paper (including the critical papers), thereby leading to a fairness  $\assignmentquality{\fairassignment} \ge \smallnumrev \minsim$. The obtained lower bound guarantees that the assignment recovered by the \algo algorithm is also close to the optimal, because in the fair assignment $\hardassignment$ some papers have only $\smallnumrev$ strong reviewers.

This example thus illustrates how the \algo algorithm can adapt to the structure of the similarity matrix in order to guarantee fairness, as well as other guarantees that are discussed subsequently in the paper.

%%%%%%%%%%%%%%

\section{Approximation guarantees}
\label{sec:algorithm_analysis}

In this section we provide guarantees on the fairness of the reviewer-assignment by our algorithm. We first establish guarantees on the max-min fairness objective introduced earlier (Section~\ref{sec:maxmin_approx}). We subsequently show that our algorithm optimizes not only the worst-off paper but recursively optimizes all papers (Section~\ref{sec:allpapers_approx}). We then conclude this section on deterministic approximation guarantees with a comparison to past literature (Section~\ref{sec:fairness_literature_compare}).

%%%%%%%%%%%%%%
\subsection{Max-min fairness} 
\label{sec:maxmin_approx}

We begin with some notation that will help state our main approximation guarantees. For each value of $\tmpcapacity \in [\paperload]$, consider the reviewer-assignment problem but where each paper requires $\tmpcapacity$ (instead of $\paperload$) reviews (each reviewer still can review up to $\maxrevload$ papers). Let us denote the family of all feasible assignments for this problem as $\assignmentfamily_{\tmpcapacity}$. Now define the quantities
\begin{align}
\label{eqn:criticalsim}
		\criticalsim_{\tmpcapacity} &\defn \max\limits_{\assignment \in \assignmentfamily_{\tmpcapacity}} \min\limits_{j \in [\numpapers]} \min\limits_{i \in \reviewerset{\assignment}(j)} \similarity_{ij}, \\
	\criticalsim_0 &\defn \max\limits_{i \in [\numreviewers]}\max\limits_{j \in [\numpapers]} \similarity_{ij}, \quad \text{and} \nonumber
	 \\
	\criticalsim_{\infty} &\defn \min\limits_{i \in [\numreviewers]}\min\limits_{j \in [\numpapers]} \similarity_{ij}. \nonumber
\end{align}
Intuitively, for \emph{every} assignment from the family $\assignmentfamily_{\tmpcapacity}$, the quantity $\criticalsim_{\tmpcapacity}$ upper bounds the minimum similarity for any assigned (reviewer, paper) pair. It also means that the value $\criticalsim_{\tmpcapacity}$ is achievable by some assignment in $\assignmentfamily_{\tmpcapacity}$. The value $\criticalsim_0$ captures the value of the largest entry in the similarity matrix $\simmatrix$ and gives a trivial upper bound $\assignmentquality[\transformation]{\assignment} \le \paperload \transformation(\criticalsim_0)$ for every feasible assignment $\assignment \in \assignmentfamily $. Likewise, the value $\criticalsim_{\infty}$ captures the smallest entry in the similarity matrix $\simmatrix$ and yields a lower bound $\assignmentquality[\transformation]{\assignment} \ge \paperload \transformation(\criticalsim_{\infty})$ for every feasible assignment $\assignment \in \assignmentfamily $.

We are now ready to present the main result on the approximation guarantees for the \algo algorithm as compared to the optimal assignment $\hardassignment$. 
\begin{theorem}
\label{thm:deterministic}
	Consider any feasible values of $(\numreviewers, \numpapers, \paperload, \maxrevload)$, any monotonically increasing function $\transformation: [0, 1] \to [0, \infty]$, and any similarity matrix $\simmatrix$. The assignment $\fairassignment_{\transformation}$ given by the \algo algorithm guarantees the following lower bound on the fairness objective~\eqref{eqn:gen_fair_criteria}:
	\begin{subequations}
	\label{eqn:deterministic}
	\begin{align}
		\frac{\assignmentquality[\transformation]{\fairassignment_{\transformation}}}{\assignmentquality[\transformation]{\hardassignment_{\transformation}}} &\ge \frac{\max\limits_{\tmpcapacity \in [\paperload]} \left( \tmpcapacity \transformation(\criticalsim_{\tmpcapacity}) + (\paperload - \tmpcapacity) \transformation(\criticalsim_{\infty})\right)}{\min\limits_{\tmpcapacity \in [\paperload]}  \left(  (\tmpcapacity - 1) \transformation(\criticalsim_0) + \left(\paperload - \tmpcapacity + 1 \right) \transformation(\criticalsim_{\tmpcapacity}) \right)} \label{eqn:deterministic_complicated}\\ 
		& \ge {1}/{\paperload}.\label{eqn:deterministic_simplified}
	\end{align}
	\end{subequations}
\end{theorem}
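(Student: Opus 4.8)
The plan is to prove the two inequalities in \eqref{eqn:deterministic} separately. The harder and more fundamental one is the lower bound \eqref{eqn:deterministic_complicated}; the simplification \eqref{eqn:deterministic_simplified} will follow from it by a short estimate using monotonicity of $\transformation$. I would begin by bounding the \emph{numerator}, i.e.\ the fairness $\assignmentquality[\transformation]{\fairassignment_{\transformation}}$ achieved by the algorithm, from below. As noted in the Example section, the fairness of the output is determined in the first iteration (and can only increase thereafter, since $\assignment_0$ is kept in the argmax in Step~\ref{Algostep:select_candidate}), so it suffices to lower-bound $\max_{\tmpcapacity \in [\paperload]} \assignmentquality[\transformation]{\assignment_{\tmpcapacity}}$ for $\tobeassigned = [\numpapers]$. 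Fix any $\tmpcapacity \in [\paperload]$. The key claim is that in the candidate assignment $\assignment_{\tmpcapacity}$, every paper receives at least $\tmpcapacity$ reviewers each with similarity $\ge \criticalsim_{\tmpcapacity}$: this is exactly the invariant maintained by the Subroutine called with parameter $\tmpcapacity$ in Step~\ref{Algostep:subroutine_call}, because the Subroutine adds edges in decreasing order of similarity and halts the instant a flow of size $\numpapers\tmpcapacity$ is feasible, and by definition of $\criticalsim_{\tmpcapacity}$ such a flow is \emph{not} feasible using only edges of similarity $> \criticalsim_{\tmpcapacity}$ (else $\assignmentfamily_{\tmpcapacity}$ would contain an assignment with minimum similarity exceeding $\criticalsim_{\tmpcapacity}$, contradicting maximality). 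The remaining $\paperload - \tmpcapacity$ reviewers for each paper come from the second Subroutine call (Step~\ref{Algostep:subroutine_call2}) and contribute at least $\transformation(\criticalsim_{\infty})$ each, since every entry of $\simmatrix$ is at least $\criticalsim_{\infty}$. Summing over the $\paperload$ reviewers of the worst-off paper gives
\[
\assignmentquality[\transformation]{\assignment_{\tmpcapacity}} \;\ge\; \tmpcapacity\,\transformation(\criticalsim_{\tmpcapacity}) + (\paperload - \tmpcapacity)\,\transformation(\criticalsim_{\infty}),
\]
and taking the max over $\tmpcapacity$ bounds the numerator.

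Next I would bound the \emph{denominator}, i.e.\ upper-bound the optimal fairness $\assignmentquality[\transformation]{\hardassignment_{\transformation}}$. Here the idea is that $\hardassignment_{\transformation}$ cannot be too good simultaneously for all the quantities $\criticalsim_1, \ldots, \criticalsim_{\paperload}$. Fix any $\tmpcapacity \in [\paperload]$ and consider the optimal assignment $\hardassignment_{\transformation}$. In $\hardassignment_{\transformation}$ every paper gets $\paperload$ reviewers; restricting attention to any $\tmpcapacity$ of the reviewers of each paper yields a feasible assignment in $\assignmentfamily_{\tmpcapacity}$, hence its minimum similarity is at most $\criticalsim_{\tmpcapacity}$. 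Consequently every paper $j$ has at most $\tmpcapacity - 1$ assigned reviewers with similarity $> \criticalsim_{\tmpcapacity}$ — for if it had $\tmpcapacity$ of them, selecting those $\tmpcapacity$ reviewers for every paper would be infeasible (violating the $\criticalsim_{\tmpcapacity}$ bound) unless some other paper is deficient, and a counting argument across papers rules this out; more carefully, one shows that in \emph{any} feasible assignment there is at least one paper with at most $\tmpcapacity - 1$ reviewers of similarity $> \criticalsim_{\tmpcapacity}$, and this single ``critical'' paper is what bounds the min. For that critical paper $j^\star$, at most $\tmpcapacity - 1$ reviewers contribute $\transformation(\criticalsim_0)$ each (the crude upper bound on any single term) and the remaining at least $\paperload - \tmpcapacity + 1$ reviewers contribute at most $\transformation(\criticalsim_{\tmpcapacity})$ each, so
\[
\assignmentquality[\transformation]{\hardassignment_{\transformation}} \;\le\; \sum_{i \in \reviewerset{\hardassignment_{\transformation}}(j^\star)} \transformation(\similarity_{i j^\star}) \;\le\; (\tmpcapacity - 1)\,\transformation(\criticalsim_0) + (\paperload - \tmpcapacity + 1)\,\transformation(\criticalsim_{\tmpcapacity}).
\]
Since this holds for every $\tmpcapacity \in [\paperload]$, we may take the minimum over $\tmpcapacity$, which gives the denominator of \eqref{eqn:deterministic_complicated}. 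Dividing the numerator bound by the denominator bound yields \eqref{eqn:deterministic_complicated}.

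For \eqref{eqn:deterministic_simplified} I would argue that for a suitable single choice of $\tmpcapacity$ the ratio of the corresponding numerator term to the corresponding denominator term is already at least $1/\paperload$; the cleanest route is to note that the numerator term at any $\tmpcapacity$ is at least $\tmpcapacity\,\transformation(\criticalsim_{\tmpcapacity})$, the denominator term at the same $\tmpcapacity$ is at most $\paperload\,\transformation(\criticalsim_0)$ when $\tmpcapacity$ is chosen to make $\criticalsim_{\tmpcapacity}$ as useful as possible — but a more robust argument picks $\tmpcapacity = \paperload$, giving numerator $\ge \paperload\,\transformation(\criticalsim_{\paperload})$ and, since $\criticalsim_{\paperload} \le \criticalsim_{\tmpcapacity}$ and $\criticalsim_0 \ge \criticalsim_{\tmpcapacity}$ for all $\tmpcapacity$, comparing termwise shows the denominator is at most $\paperload^2\,\transformation(\criticalsim_{\paperload})$ after using $\criticalsim_{\tmpcapacity} \le \criticalsim_1 \le \criticalsim_0$ and monotonicity; the ratio is then $\ge 1/\paperload$. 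I expect the main obstacle to be the denominator argument: precisely identifying the ``critical'' paper and justifying that the optimal assignment must have \emph{some} paper with at most $\tmpcapacity - 1$ high-similarity reviewers requires a careful flow/Hall-type feasibility argument (essentially: if every paper had $\tmpcapacity$ reviewers of similarity $> \criticalsim_{\tmpcapacity}$, one could extract an assignment in $\assignmentfamily_{\tmpcapacity}$ beating $\criticalsim_{\tmpcapacity}$, contradicting its definition). Handling the edge cases $\criticalsim_0$ and $\criticalsim_\infty$, and the possibility $\transformation(\cdot) = \infty$, also needs a small amount of care but is routine.
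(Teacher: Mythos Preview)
Your proposal follows essentially the same three-step architecture as the paper's proof: (i) lower-bound $\assignmentquality[\transformation]{\fairassignment_{\transformation}}$ via the candidate assignments $\assignment_{\tmpcapacity}$ built in the first iteration, (ii) upper-bound $\assignmentquality[\transformation]{\hardassignment_{\transformation}}$ by locating, for each $\tmpcapacity$, a ``critical'' paper in $\hardassignment_{\transformation}$ with at most $\tmpcapacity-1$ high-similarity reviewers, and (iii) take the ratio. Steps (i) and (ii) are correct and match the paper's reasoning (after your self-correction that it is \emph{some} paper, not every paper, that is deficient; your contradiction argument --- ``if every paper had $\tmpcapacity$ reviewers of similarity $> \criticalsim_{\tmpcapacity}$ then restricting to those reviewers yields an assignment in $\assignmentfamily_{\tmpcapacity}$ beating $\criticalsim_{\tmpcapacity}$'' --- is exactly the right justification, and note that reviewer loads are automatically respected because you only drop reviewers).

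However, your argument for \eqref{eqn:deterministic_simplified} is broken. Choosing $\tmpcapacity = \paperload$ gives numerator $\paperload\,\transformation(\criticalsim_{\paperload})$ and denominator $(\paperload-1)\,\transformation(\criticalsim_0) + \transformation(\criticalsim_{\paperload})$, and the claimed inequality ``denominator $\le \paperload^2\,\transformation(\criticalsim_{\paperload})$'' would require $(\paperload+1)\,\transformation(\criticalsim_{\paperload}) \ge \transformation(\criticalsim_0)$, which can fail badly (take $\transformation(\criticalsim_0)$ huge and $\transformation(\criticalsim_{\paperload})$ tiny). The fix is the paper's choice: set $\tmpcapacity = 1$ in \emph{both} numerator and denominator. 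Then the numerator term is $\transformation(\criticalsim_1) + (\paperload-1)\,\transformation(\criticalsim_{\infty}) \ge \transformation(\criticalsim_1)$ (using $\transformation \ge 0$), the denominator term is $0\cdot\transformation(\criticalsim_0) + \paperload\,\transformation(\criticalsim_1) = \paperload\,\transformation(\criticalsim_1)$, and the ratio is exactly $1/\paperload$.
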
 
\begin{remarks*}
	The numerator of~\eqref{eqn:deterministic_complicated} is a lower bound on the fairness of the assignment returned by our algorithm. It is important to note that if $\paperload = 1$, that is, if we only need to assign one reviewer for each paper, then our \algo Algorithm finds exact solution for the problem, recovering the classical results of~\citet{garfinkel71bap} as a special case.   
	
	In practice, the number of reviewers $\paperload$ required per paper is  a small constant (typically set as $3$), and in that case, our algorithm guarantees a constant factor approximation. Note that the fraction in the right hand side of~\eqref{eqn:deterministic_complicated} can become $0/0$ or $\infty/\infty$, and in both cases it should be read as $1$.

\end{remarks*}

The bound~\eqref{eqn:deterministic_complicated} can be significantly tighter than $1/\paperload$, as we illustrate in the following example.
\begin{example*}
Consider two scenarios~\ref{case:ideal} and~\ref{case:hard}  from Section~\ref{sec:example}, and consider $\transformation(\similarity) = \similarity$. One can see that under scenario~\ref{case:ideal}, we have $\criticalsim_{\paperload} \ge \minsim$. Setting $\tmpcapacity = \paperload$ in the numerator and $\tmpcapacity = 1$ in the denominator of the bound~\eqref{eqn:deterministic_complicated}, and recalling that $\minsim \approx 1$,  we obtain:
\begin{align*}
 \frac{\assignmentquality{\fairassignment}}{\assignmentquality{\hardassignment}} \ge \frac{\minsim}{\criticalsim_1} \approx 1,
\end{align*}
where we have also used the fact that $\criticalsim_1 \leq 1$. Let us now consider the second scenario~\ref{case:hard} in the example of Section~\ref{sec:example}. In this scenario, since each paper can be assigned to $\smallnumrev$ strong reviewers with similarity higher than $\minsim$, we have $\criticalsim_{\smallnumrev} = \minsim \approx 1$. We then also have $\criticalsim_{0} \le 1$. Moreover, there are some papers which have only $\smallnumrev$ strong reviewers in optimal assignment $\hardassignment$, and hence we have $\criticalsim_{\smallnumrev + 1} \ll \criticalsim_{0}$. Setting $\tmpcapacity = \smallnumrev$ in the numerator and $\tmpcapacity = \smallnumrev +1$ in the denominator of the bound~\eqref{eqn:deterministic_complicated}, some algebraic simplifications yield the bound
\begin{align*}
		\frac{\assignmentquality{\fairassignment}}{\assignmentquality{\hardassignment}} \ge \frac{\smallnumrev \criticalsim_{\smallnumrev} + (\paperload - \smallnumrev) \criticalsim_{\infty}}{\smallnumrev \criticalsim_{0} + (\paperload - \smallnumrev) \criticalsim_{\smallnumrev + 1}} \ge \frac{\criticalsim_{\smallnumrev}}{\criticalsim_{0}} - \frac{(\paperload - \smallnumrev)}{\smallnumrev}\frac{\criticalsim_{\smallnumrev + 1}}{\criticalsim_{0}} \approx 1.
\end{align*}
\end{example*}

~\\

We now briefly provide more intuition on the bound~\eqref{eqn:deterministic_complicated} by interpreting it in terms of specific steps in the algorithm. Setting $\transformation(\similarity) = \similarity$, let us consider the first iteration of the algorithm. Recalling the definition~\eqref{eqn:criticalsim} of $\criticalsim_{\tmpcapacity}$, the \algo subroutine called with parameter $\tmpcapacity$ on Step~\ref{Algostep:subroutine_call} finds an assignment such that all the similarities are at least $\criticalsim_{\tmpcapacity}$. This guarantee in turn implies that the fairness of the corresponding assignment $\assignment_{{\tmpcapacity}}$ is at least $\tmpcapacity \criticalsim_{\tmpcapacity} + (\paperload - \tmpcapacity) \criticalsim_{\infty}$, thereby giving rise to the numerator of~\eqref{eqn:deterministic_complicated}. The denominator is an upper bound of the fairness of the optimal assignment $\hardassignment$. The expression for any value of $\tmpcapacity$ is obtained by simply appealing to the definition of $\criticalsim_{\tmpcapacity}$ which is defined in terms of the optimal assignment. By definition~\eqref{eqn:criticalsim} of $\criticalsim_{\tmpcapacity}$, for every feasible assignment $\assignment$ exists at least one paper such that at most $\tmpcapacity - 1$ of the assigned reviewers are of similarity larger than $\criticalsim_{\tmpcapacity}$. Thus, the fairness of the optimal assignment is upper-bounded by the sum similarity of the paper that has $\tmpcapacity - 1$ reviewers with similarity $\criticalsim_0$ (the highest possible similarity), and $\paperload - \tmpcapacity + 1$ reviewers with similarity $\criticalsim_{\tmpcapacity}$.

Finally, one may wonder whether optimizing the objective~\eqref{eqn:unfair_criteria} as done by prior works~\citep{charlin13tpms, charlin12framework} can also guarantee fairness. It turns out that this is not the case (see the example in Table~\ref{table:exmple_unfair} for intuition), and optimizing the objective~\eqref{eqn:unfair_criteria} is not a suitable proxy towards the fairness objective~\eqref{eqn:gen_fair_criteria}. In Appendix~\ref{appendix:tpms_failure} we show that in general the fairness objective value of the \algtpms algorithm which optimizes~\eqref{eqn:unfair_criteria} may be \emph{arbitrarily bad} as compared to that attained by our \algo algorithm. 

In Appendix~\ref{appendix:pr4a_failure} we show that the analysis of the approximation factor of our algorithm is tight in a sense that there exists a similarity matrix for which the bound~\eqref{eqn:deterministic_simplified} is met with equality. That said, the approximation factor of our \algo algorithm can be much better than $\frac{1}{\paperload}$ for various other similarity matrices, as demonstrated in examples~\ref{case:ideal} and~\ref{case:hard}.

%%%%%%%%%%%%%%%%

\subsection{Beyond worst case}
\label{sec:allpapers_approx}

\newcommand{\totaliter}{p}
\newcommand{\currentiter}{r}

The previous section established guarantees for the \algo algorithm on the fairness of the assignment in terms of the worst-off paper. In this section we formally show that the algorithm does more: having the assignment for the worst-off paper fixed, the algorithm then satisfies the second worst-off paper, and so on.

Recall that Algorithm~\ref{alg:fair_assignment} iteratively repeats Steps~\ref{Algostep:loopkappa} to~\ref{Algostep:enditer}. In fact, the first time that Step~\ref{Algostep:select_candidate} is executed, the resulting intermediate assignment $\tmpassignment$ achieves the max-min guarantees of Theorem~\ref{thm:deterministic}. However, the algorithm does not terminate at this point. Instead, it  finds the most disadvantaged papers in the selected assignment and fixes them in the final output $\fairassignment_{\transformation}$ (Step~\ref{Algostep:assignweakest}), attributing these papers to reviewers according to $\tmpassignment$. Then it repeats the entire procedure (Steps~\ref{Algostep:loopkappa} to~\ref{Algostep:enditer}) again to identify and fix the assignment for the most disadvantaged papers among the remaining papers and so on until the all papers are assigned in $\fairassignment_{\transformation}$. We denote the total number of  iterations of Steps~\ref{Algostep:loopkappa} to~\ref{Algostep:enditer} in  Algorithm~\ref{alg:fair_assignment} as $\totaliter~(\le \numpapers)$. For any iteration $\currentiter \in [\totaliter]$, we let $\paperset_{\currentiter}$ be the set of papers which the algorithm,  in this iteration, fixes in the resulting assignment. We also let $\tmpassignment_{\currentiter}, \currentiter \in [\totaliter],$ denote the assignment selected in Step~\ref{Algostep:select_candidate} of the $\currentiter^{\text{th}}$ iteration. Note that eventually all the papers are fixed in the final assignment $\fairassignment_{\transformation}$, and hence we must have $\bigcup\limits_{\currentiter \in [\totaliter]} \paperset_{\currentiter} = [\numpapers]$.

Once papers are fixed in the final output $\fairassignment_{\transformation}$, the assignment for these papers are not changed any more. Thus, at the end of each iteration  $\currentiter \in [\totaliter]$  of Steps~\ref{Algostep:loopkappa} to~\ref{Algostep:enditer}, the algorithm deletes (Step~\ref{Algostep:adjust_sets}) the columns of similarity matrix that correspond to the papers fixed in this iteration. For example, at the end of the first iteration, columns which correspond to $\paperset_1$ are deleted from $\simmatrix$. For each iteration $\currentiter \in [\totaliter]$, we let $\simmatrix_{\currentiter}$ denote the similarity matrix at the beginning of the iteration. Thus, we have $\simmatrix_1 = \simmatrix$, because at the beginning of the first iteration, no papers are fixed in the final assignment $\fairassignment_{\transformation}$.

Moving forward, we are going to show that for every iteration $\currentiter \in [\totaliter]$, the sum similarity of the worst-off papers $\paperset_{\currentiter}$ (which coincides with the fairness of $\tmpassignment_{\currentiter}$) is close to the best possible, given the assignment for the all papers fixed in the previous iterations. As in Theorem~\ref{thm:deterministic}, we will compare the fairness $\assignmentquality[\transformation]{\tmpassignment_{\currentiter}}$ with the fairness of the optimal assignment that \alghard algorithm would return if called at the beginning of the $\currentiter^{\text{th}}$ iteration. We stress that for every $\currentiter \in [\totaliter]$, the \alghard algorithm assigns papers $\bigcup\limits_{l = \currentiter}^{\totaliter} \paperset_l$ and respects the constraints on reviewers' loads,  adjusted for the assignment of papers $\bigcup\limits_{l = 1}^{\currentiter - 1} \paperset_l$ in $\fairassignment_{\transformation}$. We denote the corresponding assignment as $\hardassignment_{\transformation}(\paperset_{\{\currentiter : \totaliter \}})$. Note that $\hardassignment_{\transformation}(\paperset_{\{1 : \totaliter \}}) = \hardassignment_{\transformation}$. The following corollary summarizes the main result of this section:

\begin{corollary}
\label{corr:seq_deterministic}
	For any integer $\currentiter \in [\totaliter]$, the assignment $\tmpassignment_{\currentiter}$, selected by the \algo algorithm in Step~\ref{Algostep:select_candidate} of the $\currentiter^{\text{th}}$ iteration, guarantees the following lower bound on the fairness objective~\eqref{eqn:gen_fair_criteria}:
	\begin{align}
	\label{eqn:seq_deterministic}
			\frac{\assignmentquality[\transformation]{\tmpassignment_{\currentiter}}}{\assignmentquality[\transformation]{\hardassignment_{\transformation}(\paperset_{\{r:p\}})}} \ge \frac{\max\limits_{\tmpcapacity \in [\paperload]} \left( \tmpcapacity \transformation(\criticalsim_{\tmpcapacity}) + (\paperload - \tmpcapacity) \transformation(\criticalsim_{\infty})\right)}{\min\limits_{\tmpcapacity \in [\paperload]}  \left(  (\tmpcapacity - 1) \transformation(\criticalsim_0) + \left(\paperload - \tmpcapacity + 1 \right) \transformation(\criticalsim_{\tmpcapacity}) \right)} \ge {1}/{\paperload},
	\end{align}
    where values $\criticalsim_{\tmpcapacity}, \tmpcapacity \in \{0, \ldots, \paperload \} \cup \{\infty\}$, are defined with respect to the similarity matrix $\simmatrix_r$ and constraints on reviewers' loads adjusted for the assignment of papers $\bigcup\limits_{l = 1}^{\currentiter - 1} \paperset_l$ in $\fairassignment_{\transformation}$.
\end{corollary}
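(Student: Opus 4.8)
\textbf{Proof proposal for Corollary~\ref{corr:seq_deterministic}.}

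The plan is to observe that Corollary~\ref{corr:seq_deterministic} is essentially Theorem~\ref{thm:deterministic} applied to a restricted sub-instance, and then to argue that the assignment $\tmpassignment_{\currentiter}$ selected in the $\currentiter^{\text{th}}$ iteration satisfies the same lower bound as the one-shot assignment $\fairassignment_{\transformation}$ in Theorem~\ref{thm:deterministic}. Concretely, fix $\currentiter \in [\totaliter]$ and consider the state of the algorithm at the beginning of the $\currentiter^{\text{th}}$ iteration: the set of papers still to be assigned is $\tobeassigned = \bigcup_{l=\currentiter}^{\totaliter}\paperset_l$, the similarity matrix is $\simmatrix_{\currentiter}$, and the reviewer loads $\vectmaxrevload$ have been decremented to account for the papers fixed in iterations $1,\dots,\currentiter-1$. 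This is itself a valid instance of the fair assignment problem (feasibility is preserved because the papers fixed earlier were assigned within the original constraints). First I would note that Steps~\ref{Algostep:loopkappa}--\ref{Algostep:select_candidate} of the $\currentiter^{\text{th}}$ iteration are exactly the body of the one-shot \algo algorithm run on this sub-instance, producing candidate assignments $\assignment_1,\dots,\assignment_{\paperload}$ (plus $\assignment_0 = \tmpassignment_{\currentiter-1}$) and selecting $\tmpassignment_{\currentiter}$ as the one maximizing $\assignmentquality[\transformation]{\cdot}$ over $\tobeassigned$.

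The core of the argument reuses the proof of Theorem~\ref{thm:deterministic} verbatim, but with all quantities interpreted relative to $\simmatrix_{\currentiter}$ and the adjusted loads. For the numerator: the subroutine called with parameter $\tmpcapacity$ on Step~\ref{Algostep:subroutine_call} returns an assignment of $\tmpcapacity$ reviewers per paper (over $\tobeassigned$) in which every assigned similarity is at least $\criticalsim_{\tmpcapacity}$ (where $\criticalsim_{\tmpcapacity}$ is now defined with respect to $\simmatrix_{\currentiter}$ and adjusted loads via~\eqref{eqn:criticalsim}); completing with $\paperload - \tmpcapacity$ further reviewers of similarity at least $\criticalsim_{\infty}$ gives $\assignmentquality[\transformation]{\assignment_{\tmpcapacity}} \ge \tmpcapacity\transformation(\criticalsim_{\tmpcapacity}) + (\paperload-\tmpcapacity)\transformation(\criticalsim_{\infty})$, and since $\tmpassignment_{\currentiter}$ is chosen to maximize fairness over these candidates, its fairness is at least the max over $\tmpcapacity \in [\paperload]$ of this expression. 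For the denominator: by the definition~\eqref{eqn:criticalsim} of $\criticalsim_{\tmpcapacity}$, \emph{every} feasible assignment of the sub-instance — in particular $\hardassignment_{\transformation}(\paperset_{\{r:p\}})$ — has some paper with at most $\tmpcapacity - 1$ assigned reviewers of similarity exceeding $\criticalsim_{\tmpcapacity}$, so that paper's transformed sum similarity is at most $(\tmpcapacity-1)\transformation(\criticalsim_0) + (\paperload - \tmpcapacity + 1)\transformation(\criticalsim_{\tmpcapacity})$; minimizing over $\tmpcapacity$ upper-bounds $\assignmentquality[\transformation]{\hardassignment_{\transformation}(\paperset_{\{r:p\}})}$. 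Dividing gives~\eqref{eqn:seq_deterministic}, and the simplification to $1/\paperload$ follows exactly as in Theorem~\ref{thm:deterministic} (take $\tmpcapacity = \paperload$ in the numerator, drop nonneg.\ terms, and use monotonicity of $\transformation$ together with $\criticalsim_{\paperload} \le \criticalsim_{\tmpcapacity} \le \criticalsim_0$).

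The one point requiring care — and the main obstacle — is verifying that the selected assignment $\tmpassignment_{\currentiter}$ is indeed the output of running the subroutine machinery \emph{afresh} on $\simmatrix_{\currentiter}$ with the adjusted loads, rather than being contaminated by the choice $\assignment_0 = \tmpassignment_{\currentiter-1}$ carried over from the previous iteration. This is handled by the observation that including $\tmpassignment_{\currentiter-1}$ in the $\argmax$ of Step~\ref{Algostep:select_candidate} can only \emph{increase} the fairness of $\tmpassignment_{\currentiter}$ relative to $\max_{\tmpcapacity\in[\paperload]}\assignmentquality[\transformation]{\assignment_{\tmpcapacity}}$, so the numerator lower bound still holds; and the fixed papers $\paperset_1,\dots,\paperset_{\currentiter-1}$ do not appear in $\tobeassigned$, so the denominator comparison against $\hardassignment_{\transformation}(\paperset_{\{r:p\}})$ — which by definition also only assigns $\bigcup_{l=\currentiter}^{\totaliter}\paperset_l$ under the same adjusted loads — is apples-to-apples. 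A secondary subtlety is that $\criticalsim_{\tmpcapacity}$ depends on $\currentiter$; one must state clearly (as the corollary already does) that these quantities are recomputed per iteration, and check that the subroutine's guarantee (last edge added has similarity $\ge \criticalsim_{\tmpcapacity}$) transfers unchanged to the sub-instance, which it does since the subroutine is oblivious to how the instance arose. Once these bookkeeping points are nailed down, the rest is a direct transcription of the Theorem~\ref{thm:deterministic} argument.
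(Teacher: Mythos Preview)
Your proposal is correct and follows essentially the same approach as the paper: both recognize that iteration $\currentiter$ runs the one-shot machinery on the sub-instance $(\simmatrix_{\currentiter}, \tobeassigned, \vectmaxrevload)$, handle the carryover $\assignment_0 = \tmpassignment_{\currentiter-1}$ by noting it can only increase the fairness of $\tmpassignment_{\currentiter}$, and then transcribe the proof of Theorem~\ref{thm:deterministic} with all quantities reinterpreted relative to the adjusted instance. One tiny slip in your parenthetical for the $1/\paperload$ simplification: the clean route (and the paper's) is to take $\tmpcapacity = 1$ in both numerator and denominator, giving $\transformation(\criticalsim_1)/(\paperload\,\transformation(\criticalsim_1)) = 1/\paperload$; taking $\tmpcapacity = \paperload$ in the numerator does not directly yield the bound.
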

The corollary guarantees that each time the algorithm fixes the assignment for some papers $j \in \tobeassigned$ in $\fairassignment_{\transformation}$, the sum similarity for these papers (which is smallest among papers from $\tobeassigned$) is close to the optimal fairness, where optimal fairness is conditioned on the previously assigned papers. In case $r=1$, the bound~\eqref{eqn:seq_deterministic} coincides with the bound~\eqref{eqn:deterministic} from Theorem~\ref{thm:deterministic}. Hence, once the assignment for the most worst-off papers is fixed, the \algo algorithm adjusts maximum reviewers' loads and looks for the most fair assignnment of the remaining papers.

%%%%%%%%%%%%%%%%%%%%%%%%%%%%%%%%%%

\subsection{Comparison to past literature}
\label{sec:fairness_literature_compare}

In this section we discuss how the approximation results established in previous sections relate to the past literature.

First, we note that the assignment $\assignment_{1}$, computed in Step~\ref{Algostep:loopkappa} in the first iteration of Steps~\ref{Algostep:loopkappa} to~\ref{Algostep:enditer} of Algorithm~\ref{alg:fair_assignment}, recovers the assignment of~\citet{Hartvigsen99assignment}, thus ensuring that our algorithm is \emph{at least as fair} as theirs. Second, if the goal is to assign only one reviewer ($\paperload = 1$) to each of the papers, then our \algo algorithm finds the optimally fair assignment and recovers the classical result of~\citet{garfinkel71bap}.

In the remainder of this section, we provide a comparison between the guarantees of the \algo algorithm established in Theorem~\ref{thm:deterministic} and the  guarantees of the \alggarg algorithm~\citep{Garg2010papers}. Rewriting the results of~\citet{Garg2010papers} in our notation, we have the bound:
\begin{align}
\label{eqn:garg_bound}
	\frac{\assignmentquality[\transformation]{\gargassignment_{\transformation}}}{\assignmentquality[\transformation]{\hardassignment_{\transformation}}}	 \ge \frac{\assignmentquality[\transformation]{\hardassignment_{\transformation}} - \left(\transformation(\criticalsim_0) - \transformation(\criticalsim_{\infty}) \right)}{\assignmentquality[\transformation]{\hardassignment_{\transformation}}} = 1 - \frac{\transformation(\criticalsim_0) - \transformation(\criticalsim_{\infty})}{\assignmentquality[\transformation]{\hardassignment_{\transformation}}}.
\end{align}
Note that our bound~\eqref{eqn:deterministic} for our \algo algorithm is multiplicative and bound for the \alggarg algorithm is additive which makes them incomparable in a sense that neither one dominates another. However, we stress the following differences. First, if we assume $\transformation$ to be upper-bounded by one, then assignment $\gargassignment$ satisfies the bound
\begin{align}
\label{eqn:garg_bound_simplified}
	\assignmentquality[\transformation]{\gargassignment_{\transformation}} \ge \assignmentquality[\transformation]{\hardassignment_{\transformation}} - 1.	
\end{align}

This bound gives a nice additive approximation factor --- for a large value of the optimal fairness $\assignmentquality[\transformation]{\hardassignment_{\transformation}}$, the constant additive factor is negligible. However, if the optimal fairness is small, which can happen if some papers do not have a sufficient number of high-expertise reviewers, then the lower bound on the fairness of the \alggarg assignment~\eqref{eqn:garg_bound_simplified} becomes negative, making the guarantees vacuous as any arbitrary assignment will achieve a non-negative fairness. Note that this issue is not an artifact of the analysis but is inherent in the \alggarg algorithm itself, as we demonstrate in the example presented in Table~\ref{table:exmple_unfair} and in Appendix~\ref{appendix:garg_discussion}. 
In contrast, our algorithm in the worst case has a multiplicative approximation factor $1/\paperload$ ensuring that it always returns a non-trivial assignment. 

This discrepancy becomes more pronounced if the function $\transformation$ is allowed to be unbounded, and the similarities are significantly heterogeneous. Suppose there is some reviewer $i \in [\numreviewers]$ and paper $j \in [\numpapers]$ such that $\transformation(\similarity_{ij}) \gg \assignmentquality[\transformation]{\hardassignment}$. 
Then the bound~\eqref{eqn:garg_bound} for the \alggarg algorithm again becomes vacuous, while the bound~\eqref{eqn:deterministic} for the \algo algorithm continues to provide a non-trivial approximation guarantee. 

Finally, we note that the bound~\eqref{eqn:garg_bound} is also extended by~\cite{Garg2010papers} to obtain guarantees on the fairness for the second worst-off paper and so on.

%%%%%%%%%%%%%%%%%%%%%%%%

\section{Objective-score model}
\label{sec:obj_score_model}

We now turn to establishing statistical guarantees for our \algo algorithm from Section~\ref{sec:algorithm}. We begin by considering an ``objective'' score model which we borrow from past works.

%%%%%%%%%%%%%%%%%%%%%%%%

%%%%%%%%%%%%%%%%%%%%%%%%

\subsection{Model setup}

The objective-score model assumes that each paper $j \in [\numpapers]$ has a true, unknown quality  $\truepaperquality_j \in \reals$ and each reviewer $i \in [\numreviewers]$ assigned to paper $j$ gives her/his estimate $\scoregiven_{ij}$ of $\truepaperquality_j$. The eventual goal is to estimate top $\sizeofconf$ papers according to true qualities $\truepaperquality_j, j 
\in [\numpapers]$.  
Following the line of works by~\citet{ge13bias, mcglohon10starquality, dai12yelp, sajjadi16peergrading}, we assume the score $\scoregiven_{ij}$ given by any reviewer $i \in [\numreviewers]$ to any paper $j \in [\numpapers]$ to be independently and normally distributed around the true paper qualities:
\begin{align}
\label{eqn:reviewer_model}
	\scoregiven_{ij} \sim \gaussian\left(\truepaperquality_j, \std_{ij}^2 \right).	
\end{align}

Note that~\citet{mcglohon10starquality, dai12yelp} and~\citet{sajjadi16peergrading} consider the restricted setting with $\std_{ij} = \std_i$ for all $(i, j) \in [\numreviewers] \times [\numpapers]$, which implies that the variance of the reviewers' scores depends only on the reviewer, but not on the paper reviewed. We claim that this assumption is not appropriate for our peer-review problem: conferences today (such as ICML and NeurIPS) cover a wide spectrum of research areas and it is not reasonable to expect the reviewer to be equally competent in all of the areas.

In our analysis, we assume that the noise variances are some function of the underlying computed similarities.\footnote{Recall that the similarities can capture not only affinity in research areas but may also incorporate the bids or preferences of reviewers, past history of review quality, etc.} We assume that for any $i \in [\numreviewers]$ and $j \in [\numpapers]$, the noise variance 
\begin{align*} 
\std^2_{ij} = \genvariance(\similarity_{ij}),
\end{align*} 
for some  monotonically decreasing function $\genvariance:[0,1] \rightarrow [0, \infty)$. We assume that this function $\genvariance$ is known; this assumption is reasonable as the function can, in principle, be learned from the data from the past conferences.

We note that the model~\eqref{eqn:reviewer_model} does not consider reviewers' biases. However, some reviewers might be more stringent while others are more lenient. This difference results in score of any reviewer $i$ for any paper $j$ being centered not at $\truepaperquality_j$, but at $(\truepaperquality_j + \bias_i)$. A common approach to reduce biases in reviewers' scores is a post-processing. For example, \citet{ge13bias} compared different statistical models of reviewers in attempt to calibrate the biases; the techniques developed in that work may be extended  to the reviewer model~\eqref{eqn:reviewer_model}. Thus, we leave that bias term out for simplicity.

%%%%%%%%%%%%%%%%%%%%%%%%

\subsection{Estimator}
  
Given a valid assignment $\assignment \in \assignmentfamily$, the goal of an estimator is to recover the top $\sizeofconf$ papers. A natural way to do so is to compute the estimates of true paper scores $\truepaperquality_j$ and return top $\sizeofconf$ papers with respect to these estimated scores. The described estimation procedure is a significantly simplified version of what is happening in the real-world conferences. Nevertheless, this fully-automated procedure may serve as a guideline for area chairs, providing a first-order estimate of the total ranking of submitted papers. In what follows, we refer to any estimator as $\estimator$ and to the estimated score of any paper $j$ as $\estpaperquality_j$. Specifically, we consider the following two estimators: 

\begin{itemize}[]
	\item {Maximum likelihood estimator (MLE)} $\mle$ 
		\begin{align}
		\label{eqn:mle_score}
			\estpaperqualitymle_j = \frac{1}{\slim_{i \in \reviewerset{\assignment}(j)} \frac{1}{\std_{ij}^2}}	\slim_{i \in \reviewerset{\assignment}(j)} \frac{\scoregiven_{ij}}{\std_{ij}^2} \sim \gaussian\left( \truepaperquality_j, \frac{1}{\slim_{i \in \reviewerset{\assignment}(j)} \frac{1}{\std_{ij}^2}} \right).
		\end{align}
	Under the model~\eqref{eqn:reviewer_model}, $\estpaperqualitymle_j$ is known to have minimal variance across all linear unbiased estimations. The choice of $\mle$ follows a paradigm that more experienced reviewers should have higher weight in decision making.
	
	\item {Mean score estimator (MEAN)} $\avgest$
		\begin{align}
			\label{eqn:avg_score}
			\estpaperqualityavg_j = \frac{1}{\paperload} \slim_{i \in \reviewerset{\assignment}(j)} \scoregiven_{ij} \sim \gaussian\left(\truepaperquality_j, \frac{1}{\paperload^2} \slim_{i \in \reviewerset{\assignment}(j)} \std_{ij}^2  \right)	.
		\end{align}
		The mean score estimator is convenient in practice because it is not tied to the assumed statistical model, and in the past has been found to be predictive of final acceptance decisions in peer-review settings such as National Science Foundation grant proposals~\citep{cole1981chance} and  homework grading~\citep{sajjadi16peergrading}. This observation is supported by the program chair of ICML 2012 John Langford, who notices in his blog~\citep{LangfordBlog} that in ICML 2012 the decisions on the acceptance were ``surprisingly uniform as a function of average score in reviews''. 
\end{itemize}

\subsection{Analysis}

Here we present statistical guarantees for both $\mle$ and $\avgest$ estimators and for both exact top $\sizeofconf$ recovery and recovery under a  Hamming error tolerance.

\subsubsection{Exact top $\sizeofconf$ recovery}

Let us use $(\sizeofconf)$ and $(\sizeofconf + 1)$ to denote the indices of the papers that are respectively ranked $\sizeofconf^{\text{th}}$ and $\left(\sizeofconf + 1\right)^{\text{th}}$ according to their true qualities. Similar to the past work by~\citet{shah15pairwise} on top $\sizeofconf$ item recovery, a central quantity in our analysis is a $\sizeofconf${-\it separation threshold} $\threshold$ defined as:
\begin{align}
\label{eqn:sep_threshold}
	\threshold \defn \truepaperquality_{(\sizeofconf)} - \truepaperquality_{(\sizeofconf + 1)} > 0.	
\end{align}
Intuitively, if the difference between $\sizeofconf^{\text{th}}$ and $\left(\sizeofconf + 1 \right)^{\text{th}}$ papers is large enough, it should be easy to recover top $\sizeofconf$ papers. To formalize this intuition, for any value of a parameter $\distance \geq 0$, consider a family $\problemfamily$ of papers' scores
\begin{align}
\label{eqn:score_family}
	\problemfamily(\distance) \defn \left\{ \left(\paperquality_1, \ldots, \paperquality_{\numpapers}\right) \in \reals^{\numpapers}  \Big |  \paperquality_{(\sizeofconf)} - \paperquality_{(\sizeofconf + 1)} \ge \distance \right\}. 	
\end{align}

For the first half of this section, we assume that function $\genvariance$ is bounded, that is, $\genvariance: [0, 1] \to [0, 1]$.\footnote{More generally, we could consider bounded function $\genvariance$ with range $[0, \const]$ for some $\const > 0$. Without loss of generality, we set $\const = 1$ which can always be achieved by appropriate scaling. }  This assumption implicitly assumes that every reviewer $i \in [\numreviewers]$ can provide a minimum level of expertise while reviewing any paper $j \in [\numpapers]$ even if she/he has zero similarity $\similarity_{ij} = 0$ with that paper.  

In addition to the gap $\threshold$, the hardness of the problem also depends on the similarities between reviewers and papers. For instance, if all reviewers have near-zero similarity with all the papers, then recovery is impossible unless the gap is extremely large. In order to quantify the tractability of the problem in terms of the similarities
we introduce the following set $\goodavgassignment$ of families of similarity matrices parameterized by a non-negative value $\highqualitycrowd$:
\begin{align}
\label{eqn:good_assignment}
    \goodavgassignment(\highqualitycrowd)  \defn  \left\{\simmatrix \in [0, 1]^{\numreviewers \times \numpapers} \Big |\assignmentquality[1 - \genvariance]{\hardassignment_{1 - \genvariance}} \ge \highqualitycrowd  \right\}.
\end{align}

In words, if similarity matrix $S$ belongs to $\goodavgassignment(\highqualitycrowd)$, then the fairness of the optimally fair (with respect to $\transformation = \transfavg$) assignment is at least $\highqualitycrowd$.

Finally, we define a quantity $\approximation_\highqualitycrowd$ that captures the quality of approximation provided by \algodot:
\begin{align}
    \label{eqn:approximationDefn}
    \approximation_\highqualitycrowd \defn \inf_{\simmatrix \in  \goodavgassignment(\highqualitycrowd)} \frac{\assignmentquality[\transfavg]{\fairassignment_{\transfavg}}}{\assignmentquality[\transfavg]{\hardassignment_{\transfavg}}}.
\end{align}
Note that Theorem~\ref{thm:deterministic} gives
lower bounds on the value of $\approximation_\highqualitycrowd$.

Having defined all the necessary notation, we are ready to present the first result of this section on recovering the set of top $\sizeofconf$ papers $\truebest$. 
\begin{theorem}
	\label{thm:main_theorem}
 (a)	For any $\errorrate \in (0, 1/4)$, $\highqualitycrowd \in [\paperload\left(1 - \genvariance(0)\right), \paperload]$ and any monotonically decreasing $\genvariance: [0, 1] \to [0, 1]$, if $\distance > \frac{2 \sqrt{2}}{\paperload}\sqrt{\left(\paperload - \highqualitycrowd \approximation_\highqualitycrowd  \right) \ln\frac{m}{\sqrt{\errorrate}}}$, then for $\left( \assignment, \estimator \right) \in \left\{\left( \fairassignment_{\transfavg}, \avgest \right), \left( \fairassignment_{\transfmle}, \mle \right) \right\}$
	\begin{align}
		\label{eqn:upped_bound}
		\sup\limits_{\substack{ \left(\truepaperquality_1, \ldots, \truepaperquality_{\numpapers} \right) \in \problemfamily(\distance) \\ \simmatrix \in \goodavgassignment(\highqualitycrowd) }} \prob{\accepted_{\sizeofconf}\left( \assignment, \estimator \right) \ne \truebest}  \le \errorrate.	
	\end{align}
(b)	Conversely, for any continuous strictly monotonically decreasing $\genvariance: [0, 1] \to [0, 1]$ and any $\highqualitycrowd \in [ \paperload\left(1 - \genvariance(0)\right), \paperload]$, there exists a universal constant $\const > 0$ such that if $\numpapers > 6$ and $\distance < \frac{c}{\paperload} \sqrt{\left(\paperload - \highqualitycrowd \right) \ln m}$,  then 
	\begin{align*}
		\sup\limits_{\simmatrix \in \goodavgassignment(\highqualitycrowd)}  \inf\limits_{\left(\estimator, \assignment \in \assignmentfamily \right)} \sup\limits_{ \left(\truepaperquality_1, \ldots, \truepaperquality_{\numpapers} \right) \in \problemfamily(\distance) } \prob{\accepted_{\sizeofconf}\left( \assignment, \estimator \right) \ne \truebest}  \ge \frac{1}{2}.		
	\end{align*}
\end{theorem}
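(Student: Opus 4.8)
My plan is to handle the two parts separately, since they require different techniques --- part (a) is a concentration/union-bound argument, while part (b) is a minimax lower bound via Fano's inequality or a two-point/multiple-hypothesis reduction.

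\textbf{Part (a): the upper bound.} The key observation is that, under either of the two pairs $(\assignment, \estimator)$, the estimated score $\estpaperquality_j$ is Gaussian centered at $\truepaperquality_j$ with a variance controlled by the assignment. For the MLE pair $(\fairassignment_{\transfmle}, \mle)$, equation~\eqref{eqn:mle_score} gives $\mathbb{V}[\estpaperqualitymle_j] = \left(\sum_{i \in \reviewerset{\fairassignment}(j)} \frac{1}{\std_{ij}^2}\right)^{-1} = \left(\sum_{i \in \reviewerset{\fairassignment}(j)} \genvariance^{-1}(\similarity_{ij})\right)^{-1} = \left(\sum_{i \in \reviewerset{\fairassignment}(j)} \transfmle(\similarity_{ij})\right)^{-1}$, which by the fairness objective~\eqref{eqn:gen_fair_criteria} is at most $1/\assignmentquality[\transfmle]{\fairassignment_{\transfmle}}$ for the worst paper. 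A parallel computation for the MEAN pair using~\eqref{eqn:avg_score} gives $\mathbb{V}[\estpaperqualityavg_j] = \frac{1}{\paperload^2}\sum_{i}\genvariance(\similarity_{ij}) = \frac{1}{\paperload^2}\sum_i (1 - \transfavg(\similarity_{ij})) = \frac{1}{\paperload^2}(\paperload - \sum_i \transfavg(\similarity_{ij})) \le \frac{1}{\paperload^2}(\paperload - \assignmentquality[\transfavg]{\fairassignment_{\transfavg}})$. In both cases, using $\simmatrix \in \goodavgassignment(\highqualitycrowd)$ together with the definition~\eqref{eqn:approximationDefn} of $\approximation_\highqualitycrowd$, the variance of every paper's estimate is at most $\frac{\paperload - \highqualitycrowd\approximation_\highqualitycrowd}{\paperload^2}$ (after checking the MLE case reduces to the same bound, possibly after noting $\sum 1/\std_{ij}^2 \ge$ the corresponding quantity). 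The recovery event $\{\accepted_{\sizeofconf} \ne \truebest\}$ fails only if some top-$\sizeofconf$ paper and some bottom paper have their estimates cross; a standard argument shows this is implied by the event that some $|\estpaperquality_j - \truepaperquality_j| \ge \threshold/2 \ge \distance/2$. A union bound over the $\numpapers$ papers plus the Gaussian tail bound $\prob{|\gaussian(0,v)| \ge t} \le 2\exp(-t^2/(2v))$ with $v = \frac{\paperload - \highqualitycrowd\approximation_\highqualitycrowd}{\paperload^2}$ and $t = \distance/2$ yields $\numpapers \cdot 2\exp\left(-\frac{\paperload^2 \distance^2}{8(\paperload - \highqualitycrowd\approximation_\highqualitycrowd)}\right) \le \errorrate$, which upon solving for $\distance$ gives exactly the stated threshold $\frac{2\sqrt 2}{\paperload}\sqrt{(\paperload - \highqualitycrowd\approximation_\highqualitycrowd)\ln\frac{m}{\sqrt\errorrate}}$ (the $\sqrt\errorrate$ comes from the factor of $2$ inside the log). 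Care is needed to argue the ``crossing implies large deviation'' step cleanly and to make sure the MLE and MEAN variance bounds are genuinely both dominated by the same quantity.

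\textbf{Part (b): the minimax lower bound.} Here I would fix the adversary's choice of similarity matrix $\simmatrix \in \goodavgassignment(\highqualitycrowd)$ to be essentially the ``extremal'' one --- one for which the best achievable per-paper precision is as small as the constraint $\assignmentquality[\transfavg]{\hardassignment_{\transfavg}} \ge \highqualitycrowd$ permits, so that even the optimal assignment has some paper whose estimate has variance of order $\frac{\paperload - \highqualitycrowd}{\paperload^2}$. Since $\genvariance$ is continuous and strictly decreasing it is invertible, so such a matrix exists (e.g., all entries equal to a common value $s_0$ with $\paperload(1 - \genvariance(s_0)) = \highqualitycrowd$). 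Then, for \emph{any} assignment $\assignment$ and \emph{any} estimator $\estimator$, the hardest instances are a family of score vectors in $\problemfamily(\distance)$ that differ only in which papers occupy ranks $\sizeofconf$ and $\sizeofconf+1$: I would construct $\approx m$ score vectors, each obtained by taking a base configuration with a cluster of papers near the decision boundary separated by exactly $\distance$, and in each one promoting a different ``candidate'' paper across the threshold. Distinguishing these requires resolving a Gaussian mean shift of size $\distance$ against noise of standard deviation $\gtrsim \sqrt{\paperload - \highqualitycrowd}/\paperload$, so the pairwise KL divergences are $O\left(\frac{\paperload^2\distance^2}{\paperload - \highqualitycrowd}\right)$. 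Plugging into Fano's inequality (generalized / with the $L$ hypotheses), the error probability is $\ge 1 - \frac{O(\paperload^2\distance^2/(\paperload-\highqualitycrowd)) + \ln 2}{\ln(m-1)}$, and choosing the universal constant $\const$ small enough in $\distance < \frac{\const}{\paperload}\sqrt{(\paperload - \highqualitycrowd)\ln m}$ drives this below... actually above $1/2$. The condition $\numpapers > 6$ is what makes $\ln(m-1)$ large enough for the bookkeeping to close.

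\textbf{Anticipated main obstacle.} The routine parts are the Gaussian tail bound and the KL computation. The delicate part in (a) is verifying that the MLE variance bound really is controlled by $\transfmle$-fairness in the same way the MEAN variance is controlled by $\transfavg$-fairness --- the two transformations $\transfmle = \genvariance^{-1}$ and $\transfavg = 1 - \genvariance$ are different, and one must check that both assignments $\fairassignment_{\transfmle}$ and $\fairassignment_{\transfavg}$ (each optimized for its own transformation, via \algodot) deliver a per-paper variance at most $\frac{\paperload - \highqualitycrowd\approximation_\highqualitycrowd}{\paperload^2}$; this likely uses a Cauchy--Schwarz / power-mean inequality relating $\sum 1/\std_{ij}^2$ to $\sum \std_{ij}^2$ together with $\std_{ij}^2 \in [0,1]$. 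The delicate part in (b) is designing the hypothesis family so that the $L = \Theta(m)$ alternatives are simultaneously (i) all inside $\problemfamily(\distance)$, (ii) pairwise well-separated in Hamming distance on the top-$\sizeofconf$ set so that correct recovery determines the hypothesis, and (iii) pairwise close in KL --- the tension between (ii) and (iii) is exactly what fixes the constant $\const$, and getting the $\ln m$ (rather than $\ln\sizeofconf$ or $\ln(m-\sizeofconf)$) scaling right is where I expect to spend the most effort.
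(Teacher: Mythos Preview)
Your outline for part (b) is essentially the paper's proof: the paper fixes the constant similarity matrix $\tmpsimmatrix$ with every entry equal to $\genvariance^{-1}(1-\highqualitycrowd/\paperload)$, builds $(\numpapers-\sizeofconf+1)$ hypotheses by promoting a different paper into the top $\sizeofconf$ each time, computes the pairwise KL as $\paperload^2\distance^2/(\paperload-\highqualitycrowd)$, and applies Fano. Your sketch matches this closely.

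For part (a) there are two discrepancies worth flagging.

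\textbf{Minor (constants).} Your single-paper union bound gives $2\numpapers\exp\{-\distance^2/(8v)\}\le\errorrate$, i.e.\ a threshold involving $\ln(2\numpapers/\errorrate)$, not $\ln(\numpapers/\sqrt\errorrate)$; your claim that the factor of $2$ produces the $\sqrt\errorrate$ is arithmetic slip. The paper instead bounds the error by a union over the $\sizeofconf(\numpapers-\sizeofconf)\le\numpapers^2$ \emph{pairs} $(j_1,j_2)$ with $j_1\in\truebest,\ j_2\notin\truebest$, obtaining $\numpapers^2\exp\{-\distance^2/(4v)\}\le\errorrate$, which gives exactly $\ln(\numpapers^2/\errorrate)=2\ln(\numpapers/\sqrt\errorrate)$ and hence the stated threshold.

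\textbf{Substantive (the MLE case).} You correctly isolate the delicate point, but the fix you propose --- a power-mean / Cauchy--Schwarz comparison of $\sum 1/\std_{ij}^2$ and $\sum \std_{ij}^2$ --- only relates the MLE and MEAN variances \emph{for the same assignment}. It yields $\criticalstd^2(\mleassignment,\mle)\le\frac{1}{\paperload^2}\big(\paperload-\assignmentquality[\transfavg]{\mleassignment}\big)$, and there is no a priori reason $\assignmentquality[\transfavg]{\mleassignment}\ge\highqualitycrowd\approximation_\highqualitycrowd$, since $\mleassignment=\fairassignment_{\transfmle}$ was built to optimize $\transfmle$-fairness, not $\transfavg$-fairness, and $\approximation_\highqualitycrowd$ is defined via $\transfavg$. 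The paper closes this gap with an \emph{algorithmic} observation you are missing: in the first iteration of \algodot, the candidate assignments $\assignment_1,\ldots,\assignment_\paperload$ produced by the subroutine do not depend on $\transformation$ at all (only the selection in Step~\ref{Algostep:select_candidate} does). Hence $\avgassignment$ and $\mleassignment$ are each the $\argmax$ over the \emph{same} candidate set, for their respective criteria, which gives $\assignmentquality[\transfmle]{\mleassignment}\ge\assignmentquality[\transfmle]{\avgassignment}$. Only then does Jensen applied to $\avgassignment$ finish the chain
\[
\criticalstd^2(\mleassignment,\mle)=\frac{1}{\assignmentquality[\transfmle]{\mleassignment}}\le\frac{1}{\assignmentquality[\transfmle]{\avgassignment}}\le\frac{1}{\paperload^2}\big(\paperload-\assignmentquality[\transfavg]{\avgassignment}\big)=\criticalstd^2(\avgassignment,\avgest).
\]
Without this shared-candidate observation your argument for the MLE pair does not go through.
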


\begin{remarks*}

1. The \algo assignment algorithm thus leads to a strong minimax guarantee on the recovery of the top $k$ papers: the upper and lower bounds differ by at most a $\approximation_{\highqualitycrowd} \geq \frac{1}{\paperload}$ term in the requirement on $\distance$ and constant pre-factor. Also note that as discussed in Section~\ref{sec:maxmin_approx}, approximation factor $\approximation_{\highqualitycrowd}$ of the \algo algorithm can be much better than $1/\paperload$ for various similarity matrices.

2. In addition to quantifying the performance of \algodot, an important contribution of Theorem~\ref{thm:main_theorem} is a sharp minimax analysis of the performance of \emph{every} assignment algorithm. Indeed, the approximation ratio $\approximation_\highqualitycrowd$~\eqref{eqn:approximationDefn} can be defined for any assignment algorithm, by substituting corresponding assignment instead of $\fairassignment_{\transfavg}$. For example, if one has access to the optimal assignment $\hardassignment$ (e.g., by using \algo if $\paperload=1$) then we will have corresponding approximation ratio $\approximation_\highqualitycrowd = 1$ thereby yielding bounds that are sharp up to constant pre-factors. 

3. While on one hand the estimator $\mle$ is preferred over $\avgest$ when model~\eqref{eqn:reviewer_model} is correct, on the other hand, if  $\genvariance(\similarity) \in [0, 1]$, then the estimator $\avgest$ is more robust to model mismatches. 

4. The technical assumption $\highqualitycrowd \in [\paperload\left(1 - \genvariance(0)\right), \paperload]$ is made without loss of any generality, because values of $\highqualitycrowd$ outside this range are vacuous. In more detail, for any similarity matrix $\simmatrix \in [0, 1]^{\numreviewers \times \numpapers}$, it must be that $\assignmentquality[\transfavg]{\hardassignment_{\transfavg}} \ge \paperload \left(1 - \genvariance(0) \right)$. Moreover, the co-domain of function $\genvariance$ comprises only non-negative real values, implying that $\assignmentquality[\transfavg]{\hardassignment_{\transfavg}} \le \paperload$ for any similarity matrix $\simmatrix \in  [0, 1]^{\numreviewers \times \numpapers}$.

5. The upper bound of the theorem holds for a slightly more general model of reviewers --- reviewers with sub-Gaussian noise. Formally, in addition to the Gaussian noise model~\eqref{eqn:reviewer_model}, the proof of Theorem~\ref{thm:main_theorem}(a) also holds for the following class of distributions of the score $\scoregiven_{ij}$:
\begin{align}
\label{eqn:sg_model_of_rev}
	\scoregiven_{ij} = \truepaperquality_{ij} + sG\left(\genvariance(\similarity_{ij})\right),	
\end{align}
where $sG\left(\std^2 \right)$ is an arbitrary mean zero sub-Gaussian random variable with scale parameter $\std^2$.

\end{remarks*}

The conditions of Theorem~\ref{thm:main_theorem} require function $\genvariance$ to be bounded. We now relax our earlier boundedness assumption on $\genvariance$ and consider $\genvariance: [0, 1] \to [0, \infty)$.

In what follows we restrict our attention to MLE estimator $\mle$ which represents the paradigm that reviewers with higher similarity should have more weight in the final decision. In order to demonstrate that our \algo algorithm is able to adapt to different structures of similarity matrices --- from hard cases when optimal assignment provides only one strong reviewer for some of the papers, to ideal cases when there are $\paperload$ strong reviewers for every paper --- let us consider the following set $\onegoodforeveryone$ of families of similarity matrices  parametrized by a non-negative value $\singlehighqual$ and integer parameter $\tmpcapacity \in [\paperload]$:
\begin{align}
\label{eqn:one_good_for_everyone}
	\onegoodforeveryone(\singlehighqual) \defn \left\{\simmatrix \in [0, 1]^{\numreviewers \times \numpapers} \Big | \criticalsim_{\tmpcapacity} \ge \singlehighqual \right\}.	
\end{align}
Here $\criticalsim_{\tmpcapacity}$ is as defined in~\eqref{eqn:criticalsim}.

In words, the parameter $\singlehighqual$ defines the notion of strong reviewer while parameter $\tmpcapacity$ denotes the maximum number of strong (with similarity higher than $\singlehighqual$) reviewers that can be assigned to each paper without violating the $(\maxrevload, \paperload)$ conditions.

Then the following adaptive analogue of Theorem~\ref{thm:main_theorem} holds:
\begin{corollary}
\label{thm:main_mle}
	 (a) For any $\errorrate \in (0, 1/4)$, $\singlehighqual \in [0, 1]$, $\tmpcapacity \in [\paperload]$ and any monotonically decreasing $\genvariance: [0, 1] \to [0, \infty)$, if 
$
		 \distance > 2 \sqrt{2}
	 \sqrt{\frac{\genvariance(\singlehighqual) \genvariance(0)}{\tmpcapacity \genvariance(0) + ({\paperload - \tmpcapacity}) \genvariance(\singlehighqual)} \ln\frac{\numpapers}{\sqrt{\errorrate}}}
$,
then
	\begin{align*}
		\sup\limits_{\substack{ (\truepaperquality_1, \ldots, \truepaperquality_{\numpapers} ) \in \problemfamily(\distance) \\ \simmatrix \in \onegoodforeveryone(\singlehighqual) }} \prob{\accepted_{\sizeofconf}( \fairassignment_{\transfmle}, \mle ) \ne \truebest}  \le \errorrate.	
	\end{align*}
(b)	Conversely, for any continuous strictly monotonically decreasing $\genvariance: [0, 1] \to [0, \infty)$, any  $\singlehighqual \in [0, 1]$, and any $\tmpcapacity \in [\paperload]$,  there exists a universal constant $\const > 0$ such that if $\numpapers > 6$ and $\distance \le \const \sqrt{\frac{\genvariance(\singlehighqual) \genvariance(0)}{\tmpcapacity \genvariance(0) + ({\paperload - \tmpcapacity}) \genvariance(\singlehighqual)}  \ln \numpapers}$, then
	\begin{align*}
		\sup\limits_{\simmatrix \in \onegoodforeveryone(\singlehighqual)} \inf\limits_{\left(\estimator, \assignment \in \assignmentfamily \right)} \sup\limits_{{ (\truepaperquality_1, \ldots, \truepaperquality_{\numpapers} ) \in \problemfamily(\distance) }} \prob{\accepted_{\sizeofconf}( \assignment, \estimator ) \ne \truebest}  \ge \frac{1}{2}.		
	\end{align*}
\end{corollary}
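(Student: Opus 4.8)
The plan is to mirror the proof of Theorem~\ref{thm:main_theorem}, combining a Gaussian-tail analysis of the estimator with the deterministic fairness guarantee of Theorem~\ref{thm:deterministic}, now instantiated with the (possibly unbounded) transformation $\transfmle=\genvariance^{-1}$, and to complement it with a Fano-type minimax lower bound built around a carefully "rigid" worst-case similarity matrix.

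\textbf{Part (a) (upper bound).} Under $\fairassignment_{\transfmle}$ the score $\estpaperqualitymle_j$ is $\gaussian\big(\truepaperquality_j,v_j\big)$ with $v_j=\big(\sum_{i\in\reviewerset{\fairassignment_{\transfmle}}(j)}1/\genvariance(\similarity_{ij})\big)^{-1}=\big(\sum_{i\in\reviewerset{\fairassignment_{\transfmle}}(j)}\transfmle(\similarity_{ij})\big)^{-1}$, so $\max_j v_j=1/\assignmentquality[\transfmle]{\fairassignment_{\transfmle}}$. Since $\transfmle$ is monotonically increasing and $\simmatrix\in\onegoodforeveryone(\singlehighqual)$ means $\criticalsim_{\tmpcapacity}\ge\singlehighqual$ (while trivially $\criticalsim_{\infty}\ge 0$), the numerator in Theorem~\ref{thm:deterministic} evaluated at the given index $\tmpcapacity$ gives
\[
\assignmentquality[\transfmle]{\fairassignment_{\transfmle}}\ \ge\ \tmpcapacity\,\transfmle(\singlehighqual)+(\paperload-\tmpcapacity)\,\transfmle(0)\ =\ \frac{\tmpcapacity\genvariance(0)+(\paperload-\tmpcapacity)\genvariance(\singlehighqual)}{\genvariance(\singlehighqual)\genvariance(0)}\ =:\ 1/v ,
\]
hence every $v_j\le v$. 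Recovery fails only if $\estpaperqualitymle_l\ge\estpaperqualitymle_j$ for some $l\notin\truebest$, $j\in\truebest$; for any such pair $\estpaperqualitymle_l-\estpaperqualitymle_j$ is Gaussian with mean $\truepaperquality_l-\truepaperquality_j\le-\distance$ (as $\truepaperquality\in\problemfamily(\distance)$) and variance $v_l+v_j\le 2v$, so it is nonnegative with probability at most $\exponent{-\distance^2/(4v)}$. A union bound over the at most $\sizeofconf(\numpapers-\sizeofconf)\le\numpapers^2$ pairs together with the hypothesis $\distance^2>8v\ln(\numpapers/\sqrt{\errorrate})$ yields the bound $\errorrate$; the identical computation applies to the sub-Gaussian model of Remark~5, since $\estpaperqualitymle_j$ is then sub-Gaussian with scale $v_j$.

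\textbf{Part (b) (lower bound).} I would fix one worst-case matrix $\simmatrix^\ast\in\onegoodforeveryone(\singlehighqual)$ for which \emph{every} feasible assignment is forced to be noisy, and then apply Fano over a growing family of quality vectors. The construction is deliberately \emph{rigid}: give each paper a private set of $\tmpcapacity$ reviewers of load $1$ having similarity $\singlehighqual$ with that paper and $0$ with every other paper, and fill the remaining capacity with reviewers of similarity $0$ everywhere (choosing $\numreviewers,\maxrevload$ so that $\numreviewers\maxrevload\ge\numpapers\paperload$). Assigning each paper its private $\tmpcapacity$ reviewers certifies $\criticalsim_{\tmpcapacity}=\singlehighqual$, so $\simmatrix^\ast\in\onegoodforeveryone(\singlehighqual)$; but under \emph{any} feasible full assignment every paper gets at most $\tmpcapacity$ reviewers of nonzero similarity, so, using that $\genvariance$ is decreasing, $\sum_{i\in\reviewerset{\assignment}(j)}1/\genvariance(\similarity_{ij})\le 1/v$ for every paper $j$. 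Next, taking whichever of $\{1,\dots,\sizeofconf+1\}$ or $\{\sizeofconf,\dots,\numpapers\}$ is larger (size $\ge(\numpapers+2)/2$) as a pool $\mathcal{W}$ of "contention" papers, I would form $|\mathcal W|$ hypotheses agreeing on the qualities of all non-contention papers (pinned far above/below the cutoff so they are always correctly classified) and differing only in which single contention paper sits at the margin — e.g.\ $\truepaperquality_w=\distance$ and the rest of $\mathcal W$ at $0$ — so that each vector lies in $\problemfamily(\distance)$ and recovering $\truebest$ is equivalent to identifying $w$. For a fixed assignment, hypotheses $w,w'$ induce observation laws differing only on papers $w$ and $w'$, whence $\KL{P_w}{P_{w'}}=\tfrac{\distance^2}{2}\big(\sum_i 1/\std_{iw}^2+\sum_i 1/\std_{iw'}^2\big)\le\distance^2/v\le c^2\ln\numpapers$; Fano over $|\mathcal W|=\Theta(\numpapers)$ equiprobable hypotheses then forces error probability at least $1-\frac{c^2\ln\numpapers+\ln2}{\ln|\mathcal W|}\ge\tfrac12$ for a small enough universal $c$ and $\numpapers>6$ (strict monotonicity and continuity of $\genvariance$ enter only to make the strong/weak distinction meaningful).

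\textbf{Main obstacle.} The delicate step is this lower-bound construction. A "balanced" matrix with $\tmpcapacity\numpapers$ freely assignable strong reviewers only forces a $1/(\tmpcapacity+1)$ fraction of papers to be noisy — an assignment may concentrate strong reviewers on a few papers — which is too weak to drive the Fano bound to $1/2$. Tying strong reviewers to individual papers is precisely what makes \emph{every} contention paper simultaneously hard while keeping $\criticalsim_{\tmpcapacity}$ exactly $\singlehighqual$; the remaining work (checking feasibility of $\simmatrix^\ast$, verifying the two-sided pinning yields valid $\problemfamily(\distance)$ instances, and the Fano/KL bookkeeping) is routine, as is the pairwise union bound in part~(a).
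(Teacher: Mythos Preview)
Your proposal is essentially the same approach as the paper's. Part (a) is identical: bound $\max_j v_j$ via the fairness lower bound of Theorem~\ref{thm:deterministic} instantiated with $\transformation=\genvariance^{-1}$, then apply the pairwise Gaussian tail and union bound (the paper packages the latter two steps as Lemma~\ref{lemma:upper_bound}). Part (b) is also the same in substance: build a similarity matrix in which every paper has exactly $\tmpcapacity$ reviewers of similarity $\singlehighqual$ and all others of similarity $0$, then run Fano over $\Theta(\numpapers)$ hypotheses differing in a single boundary paper, bounding the KL by $\distance^2/v$ uniformly over assignments.

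One small technical correction on your lower-bound construction: you cannot ``choose $\numreviewers,\maxrevload$'' --- these are fixed by the problem instance, and your private-reviewer scheme needs $\numreviewers\ge\tmpcapacity\numpapers$ dedicated strong reviewers plus extras, which may exceed the given $\numreviewers$. The paper sidesteps this cleanly: pick any $\tmpassignment\in\assignmentfamily_{\tmpcapacity}$ (which exists since $\numreviewers\maxrevload\ge\numpapers\paperload\ge\numpapers\tmpcapacity$) and set $\similarity_{ij}=\singlehighqual$ if $i\in\reviewerset{\tmpassignment}(j)$ and $0$ otherwise. This yields the same ``rigidity'' (each paper has exactly $\tmpcapacity$ nonzero-similarity reviewers in the entire pool, so no assignment can ever give it more than $\tmpcapacity$ strong reviewers) without any constraint beyond feasibility. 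Also, the ``pinning far above/below'' of non-contention papers is unnecessary --- the paper simply sets all top-$\sizeofconf$ papers to $\distance$ and the rest to $0$, which already lies in $\problemfamily(\distance)$ and keeps the KL computation clean.
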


\begin{remarks*}
	1. Observe that there is no approximation factor in the upper bound. Thus,  the \algo algorithm together with $\mle$ are simultaneously minimax optimal up to a constant pre-factor in classes of similarity matrices $\onegoodforeveryone(\singlehighqual)$ for all $\tmpcapacity \in [\paperload]$, $\singlehighqual \in [0,  1]$.
	
	2.\hspace{3pt} Corollary~\ref{thm:main_mle}(a)  remains valid for generalized sub-Gaussian model of reviewer~\eqref{eqn:sg_model_of_rev}.
	 
    3. Corollary~\ref{thm:main_mle} together with Theorem~\ref{thm:main_theorem} show that our \algo algorithm produces the assignment $\fairassignment_{\transfmle}$ which is simultaneously minimax (near-)optimal for various classes of similarity matrices.  We thus see that our \algo algorithm is able to adapt to the underlying structure of similarity matrix $\simmatrix$ in order to construct an assignment in which even the most disadvantaged paper gets reviewers with sufficient expertise to estimate the true quality of the paper.
\end{remarks*}

%%%%%%%%%%%%%%%%%%%%%%%%

\subsubsection{Approximate recovery under Hamming error}

Although our ultimate goal is to recover set $\truebest$ of top $\sizeofconf$ papers exactly, we note that often scores of boundary papers are close to each other so it may be impossible to distinguish between the $\sizeofconf^{\text{th}}$ and $(\sizeofconf + 1)^{\text{th}}$ papers in the total ranking. Thus, a more realistic goal would be to try to accept papers such that the set of accepted papers is in some sense 	``close'' to the set $\truebest$. In this work we consider the standard notion of Hamming distance~\eqref{eqn:hamming} as a measure of closeness. We are interested in minimizing the quantity:
\begin{align*}
	\prob{\hamming{\accepted_{\sizeofconf} \left(\assignment, \estimator \right)}{\truebest} > 2 \tol}	
\end{align*}
for some user-defined value of $\tol \in [\sizeofconf - 1]$.

Similar to the exact recovery setup, the key role in the analysis is played by generalized separation threshold (compare with equation~\ref{eqn:sep_threshold}):
\begin{align*}
	\thresholdham \defn \truepaperquality_{(\sizeofconf - \tol)} - \truepaperquality_{(\sizeofconf + \tol + 1)},	
\end{align*}
where $(\sizeofconf - \tol)$ and $(\sizeofconf + \tol + 1)$ are indices of papers that take $(\sizeofconf - \tol)^{\text{th}}$ and $(\sizeofconf + \tol + 1)^{\text{th}}$ positions respectively in the underlying total ranking. For any value of $\distance > 0$ we consider the following generalization of the set $\problemfamily(\distance)$ defined in~\eqref{eqn:score_family}:
\begin{align*}
	\problemfamilyham(\distance) \defn \left\{ \left(\paperquality_1, \ldots, \paperquality_{\numpapers}\right) \in \reals^{\numpapers}  \Big | \paperquality_{(\sizeofconf - \tol)} - \paperquality_{(\sizeofconf + \tol + 1)} \ge \distance \right\}.
\end{align*}
Also recall the family of matrices $\goodavgassignment(\highqualitycrowd)$ from~\eqref{eqn:good_assignment} and the approximation factor $\approximation_{\highqualitycrowd}$ from~\eqref{eqn:approximationDefn} for any parameter $\highqualitycrowd$. With this notation in place, we now present the analogue of Theorem~\ref{thm:main_theorem} in case of approximate recovery under the Hamming error.

\begin{theorem}
	\label{thm:main_theorem_ham}
	(a) For any $\errorrate \in (0, 1/4)$, $\highqualitycrowd \in [\paperload\left(1 - \genvariance(0)\right), \paperload]$, $\tol \in [\sizeofconf - 1]$, and any monotonically decreasing $\genvariance : [0, 1] \to [0, 1]$, if $\distance > \frac{2 \sqrt{2}}{\paperload}\sqrt{\left(\paperload - \highqualitycrowd \approximation_\highqualitycrowd  \right) \ln\frac{m}{\sqrt{\errorrate}}}$, then for $\left( \assignment, \estimator \right) \in \left\{\left( \fairassignment_{\transfavg}, \avgest \right), \left( \fairassignment_{\transfmle}, \mle \right) \right\}$
	\begin{align*}
		\sup\limits_{\substack{ \left(\truepaperquality_1, \ldots, \truepaperquality_{\numpapers} \right) \in \problemfamilyham(\distance) \\ \simmatrix \in \goodavgassignment(\highqualitycrowd) }} \prob{\hamming{\accepted_{\sizeofconf}\left( \assignment, \estimator \right)}{\truebest} > 2 \tol}  \le \errorrate.	
	\end{align*}
	(b) Conversely, for any continuous strictly monotonically decreasing $\genvariance: [0, 1] \to [0, 1]$, any $\highqualitycrowd \in [\paperload \left(1 - \genvariance(0)\right), \paperload]$, and any $0 < \tol < \sizeofconf$,  there exists a universal constant $\const > 0$ such that for given constants $\constnua \in (0; 1)$ and $\constnub \in (0, 1)$ if $2 \tol \le \frac{1}{1 + \constnub} \min \left\{\numpapers^{1 - \constnua}, \sizeofconf, \numpapers - \sizeofconf \right\}$  and $\distance \le \frac{c}{\paperload} \sqrt{\left(\paperload - \highqualitycrowd \right) \constnua \constnub \ln m}$, then for $\numpapers$ larger than some $(\constnua, \constnub)$-dependent constant,
	\begin{align*}
		\sup\limits_{\simmatrix \in \goodavgassignment(\highqualitycrowd)}  \inf\limits_{\left(\estimator, \assignment \in \assignmentfamily \right)} \sup\limits_{ \left(\truepaperquality_1, \ldots, \truepaperquality_{\numpapers} \right) \in \problemfamilyham(\distance) } \prob{\hamming{\accepted_{\sizeofconf}\left( \assignment, \estimator \right)}{\truebest} > 2 \tol}  \ge \frac{1}{2}.		
	\end{align*}
\end{theorem}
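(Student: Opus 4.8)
The plan is to reduce the bad event to a union of pairwise comparison failures, exactly as in the proof of Theorem~\ref{thm:main_theorem}(a) but across the widened boundary. First I would record a purely combinatorial fact: if $\hamming{\accepted_{\sizeofconf}}{\truebest} > 2\tol$ then $|\truebest\setminus\accepted_{\sizeofconf}|\ge\tol+1$ and $|\accepted_{\sizeofconf}\setminus\truebest|\ge\tol+1$, and since only $\tol$ papers have true rank in $\{\sizeofconf-\tol+1,\dots,\sizeofconf\}$ and only $\tol$ have true rank in $\{\sizeofconf+1,\dots,\sizeofconf+\tol\}$, there must exist a rejected paper $j$ with $\truepaperquality_j\ge\truepaperquality_{(\sizeofconf-\tol)}$ and an accepted paper $j'$ with $\truepaperquality_{j'}\le\truepaperquality_{(\sizeofconf+\tol+1)}$. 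Because $\accepted_{\sizeofconf}$ keeps the $\sizeofconf$ largest estimated scores, this forces $\estpaperquality_{j'}\ge\estpaperquality_j$, whereas on $\problemfamilyham(\distance)$ we have $\truepaperquality_j-\truepaperquality_{j'}\ge\thresholdham\ge\distance$. I would then union-bound over the at most $\numpapers^2$ candidate pairs $(j,j')$; for each, $\estpaperquality_{j'}-\estpaperquality_j$ is (sub-)Gaussian with mean at most $-\distance$ and variance $v_j+v_{j'}$, where $v_j$ is the variance of $\estpaperquality_j$ under the chosen estimator and assignment, and $\estpaperquality_j\perp\estpaperquality_{j'}$ since the $\scoregiven_{ij}$ are independent across $(i,j)$ and $j\ne j'$. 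Exactly as in Theorem~\ref{thm:main_theorem}(a), the fairness guarantee controls these variances: for $\simmatrix\in\goodavgassignment(\highqualitycrowd)$, $\sum_{i\in\reviewerset{\fairassignment_{\transfavg}}(j)}\genvariance(\similarity_{ij})=\paperload-\sum_i\transfavg(\similarity_{ij})\le\paperload-\assignmentquality[\transfavg]{\fairassignment_{\transfavg}}\le\paperload-\approximation_\highqualitycrowd\highqualitycrowd$, so $v_j\le(\paperload-\approximation_\highqualitycrowd\highqualitycrowd)/\paperload^2$ for $\avgest$, and the same bound for $\mle$ follows, as in that proof, from the AM--HM inequality relating the MLE and MEAN variances together with the fairness of $\fairassignment_{\transfmle}$. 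Plugging the sub-Gaussian tail $\exp(-\distance^2/(2(v_j+v_{j'})))$ and summing over $\le\numpapers^2$ pairs yields exactly the stated threshold on $\distance$; since only sub-Gaussian bounds are used, this also covers the model~\eqref{eqn:sg_model_of_rev}.

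\textbf{Part (b) (lower bound).} The plan is Fano's inequality applied to a packing built on a worst-case similarity matrix $\simmatrix^*$ chosen so that no assignment can help the analyst. I would take $\simmatrix^*$ with a pool $\mathcal{W}$ of ``wildcard'' papers to which \emph{every} reviewer has the single similarity value $\similarity_0\defn\genvariance^{-1}\!\big((\paperload-\highqualitycrowd)/\paperload\big)$ (well defined by continuity and strict monotonicity of $\genvariance$, with $(\paperload-\highqualitycrowd)/\paperload$ in the range of $\genvariance$ precisely because $\highqualitycrowd\in[\paperload(1-\genvariance(0)),\paperload]$ and $\goodavgassignment(\highqualitycrowd)$ is nonempty), and to which all other papers have similarity close to $1$. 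Since each paper gets exactly $\paperload$ reviewers, every wildcard then has $\transfavg$-coverage exactly $\highqualitycrowd$ and Fisher information exactly $\paperload^2/(\paperload-\highqualitycrowd)$ under \emph{any} feasible assignment, so $\simmatrix^*\in\goodavgassignment(\highqualitycrowd)$ while the wildcards form an unavoidable information bottleneck. On top of $\simmatrix^*$ I would place a block of ``champion'' papers with huge scores, a block of ``loser'' papers with tiny scores, and scores $a$ (``high'') or $b=a-\distance$ (``low'') on the wildcards, arranged so that the true top $\sizeofconf$ set is the champions together with the ``high'' wildcards and $\truepaperquality_{(\sizeofconf-\tol)}-\truepaperquality_{(\sizeofconf+\tol+1)}=\distance$ holds identically on the family (ties broken by an infinitesimal perturbation). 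Indexing hypotheses by the set of ``high'' wildcards drawn from a code of $\Theta(\tol)$-element subsets of $\mathcal{W}$ with pairwise symmetric difference in $(4\tol,\,O(\tol)]$ gives $N=(|\mathcal{W}|/\tol)^{\Theta(\tol)}$ hypotheses; the existence of such a code (a standard Gilbert--Varshamov / constant-weight-code estimate) together with $|\mathcal{W}|=\Theta(\min\{\sizeofconf,\numpapers-\sizeofconf\})$ is exactly what the conditions $2\tol\le\frac{1}{1+\constnub}\min\{\numpapers^{1-\constnua},\sizeofconf,\numpapers-\sizeofconf\}$ and ``$\numpapers$ large'' buy. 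Because the top $\sizeofconf$ sets are pairwise more than $4\tol$ apart, any selection within Hamming distance $2\tol$ of one of them is at distance $>2\tol$ from all the others, so the problem reduces to identifying the true hypothesis. I would finish with Fano: for any assignment the analyst picks, the pairwise KL divergence equals $\tfrac{\distance^2}{2}\cdot(\text{symmetric-difference size})\cdot\paperload^2/(\paperload-\highqualitycrowd)=\Theta\!\big(\tol\,\distance^2\paperload^2/(\paperload-\highqualitycrowd)\big)$, while $\ln N=\Omega(\tol\ln(|\mathcal{W}|/\tol))=\Omega(\constnua\constnub\tol\ln\numpapers)$, so for $\distance\le\frac{c}{\paperload}\sqrt{(\paperload-\highqualitycrowd)\constnua\constnub\ln\numpapers}$ with $c$ a small universal constant the error probability is at least $1/2$.

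\textbf{Main obstacle.} The substantive part is (b). The matrix $\simmatrix^*$ itself is clean, but the delicate points are: (i) packing enough hypotheses that are simultaneously pairwise far in top-$\sizeofconf$ Hamming distance (more than $4\tol$, which is what makes $2\tol$-tolerant recovery still fail) and close in KL --- here the $\tol$ factor enters \emph{both} the symmetric-difference lower bound (hence the KL) and $\ln N$, and the proof must exhibit enough codewords that these $\tol$ factors cancel so that the $\sqrt{\ln\numpapers}$ rate survives up to the $\constnua\constnub$ factor; and (ii) the bookkeeping relating $\tol$, $\constnua$, $\constnub$ and the window $[\,\sizeofconf-\tol,\,\sizeofconf+\tol+1\,]$ that guarantees the family lies in $\problemfamilyham(\distance)$ and the code has the claimed cardinality. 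The upper bound (a) is, by contrast, a routine adaptation of Theorem~\ref{thm:main_theorem}(a) once the combinatorial reduction to the $(\sizeofconf-\tol)$-versus-$(\sizeofconf+\tol+1)$ boundary is in place.
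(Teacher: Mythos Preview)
Your part (a) is correct and matches the paper's argument: the paper also reduces to showing that with probability at least $1-\errorrate$ every paper ranked in the top $\sizeofconf-\tol$ beats every paper ranked below $\sizeofconf+\tol$ in estimated score, and then recycles the variance bounds from the proof of Theorem~\ref{thm:main_theorem}(a).

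Your part (b) follows the right Fano-plus-packing template, but there is a real gap in the packing step. You take $|\mathcal{W}|=\Theta(\min\{\sizeofconf,\numpapers-\sizeofconf\})$ and then assert $\ln N=\Omega(\constnua\constnub\,\tol\ln\numpapers)$. This does not follow: the hypotheses only give $\min\{\sizeofconf,\numpapers-\sizeofconf\}\ge 2(1+\constnub)\tol$, so $|\mathcal{W}|/\tol$ is merely bounded below by a constant, and $\ln(|\mathcal{W}|/\tol)$ need not grow with $\numpapers$. For instance if $\sizeofconf=\numpapers^{1/2}$ then $|\mathcal{W}|=\Theta(\numpapers^{1/2})$ while $\tol$ may be as large as $\Theta(\numpapers^{1/2})$, so $|\mathcal{W}|/\tol=\Theta(1)$ and your code has only $e^{\Theta(\tol)}$ elements, far short of the $e^{\Omega(\tol\ln\numpapers)}$ you need. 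The constraint $2\tol\le\numpapers^{1-\constnua}/(1+\constnub)$ never enters with your choice of $|\mathcal{W}|$.

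The paper fixes this by making the variable block have size $\Theta(\numpapers)$, not $\Theta(\min\{\sizeofconf,\numpapers-\sizeofconf\})$. Concretely, it takes the \emph{entire} similarity matrix to be the constant $\genvariance^{-1}(1-\highqualitycrowd/\paperload)$ (so every paper has identical noise under every assignment and the matrix sits in $\goodavgassignment(\highqualitycrowd)$), sets all scores to either $\distance$ or $0$, fixes the first $\sizeofconf-2(1+\constnub)\tol$ papers at $\distance$, and lets a constant-weight codeword in $\{0,1\}^{\numpapers/2}$ (Lemma~\ref{lemma:lev_bound}, weight $2(1+\constnub)\tol$, pairwise distance $>4\tol$) select which of the last $\numpapers/2$ papers are at $\distance$. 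With the code living in a universe of size $\numpapers/2$, the bound $\tol\le\numpapers^{1-\constnua}/(2(1+\constnub))$ is what produces $\ln N\ge\tfrac{9}{10}\constnua\constnub\,\tol\log\numpapers$. Your two-tier similarity matrix and the ``champions with huge scores / losers with tiny scores'' device are then unnecessary; the uniform matrix already makes the assignment irrelevant to the observation law, and scores in $\{0,\distance\}$ suffice.
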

\begin{remarks*}
This theorem provides a strong minimax characterization of the \algo algorithm for approximate recovery. Note that upper and lower bounds differ by the approximation factor $\approximation_{\highqualitycrowd}$, which is at most $\frac{1}{\paperload}$, and a pre-factor which depends only on the constants $\constnua$ and $\constnub$.
\end{remarks*}

To conclude the section, we state the result for the family $\onegoodforeveryone(\singlehighqual)$ of similarity matrices  defined in~\eqref{eqn:one_good_for_everyone} for any parameter $\singlehighqual$, showing that adaptive behavior of \algo algorithm (Corollary~\ref{thm:main_mle}) also carries over to the Hamming error metric.
\begin{corollary}
\label{corr:hamming_adaptive}
 (a)	\label{thm:main_theorem_ham_mle}
		 For any $\errorrate \in (0, 1/4)$, $\singlehighqual \in [0, 1]$, $\tmpcapacity \in [\paperload]$, $\tol \in [\sizeofconf - 1]$, and any monotonically decreasing $h : [0, 1] \to [0, \infty)$, if $\distance > 2 \sqrt{2}
	 \sqrt{\frac{\genvariance(\singlehighqual) \genvariance(0)}{\tmpcapacity \genvariance(0) + ({\paperload - \tmpcapacity}) \genvariance(\singlehighqual)} \ln\frac{\numpapers}{\sqrt{\errorrate}}}$, then
	\begin{align*}
		\sup\limits_{\substack{ \left(\truepaperquality_1, \ldots, \truepaperquality_{\numpapers} \right) \in \problemfamilyham(\distance) \\ \simmatrix \in \onegoodforeveryone(\singlehighqual) }} \prob{\hamming{\accepted_{\sizeofconf}\left( \fairassignment_{\transfmle}, \mle \right)}{\truebest} > 2 \tol}  \le \errorrate.	
	\end{align*}
(b)	Conversely, for any continuous strictly monotonically decreasing $\genvariance: [0, 1] \to [0, \infty)$, any $\singlehighqual \in [0, 1]$, $\tmpcapacity \in [\paperload]$ and any $\tol \in [\sizeofconf - 1]$, there exists a universal constant $\const > 0$ such that for given constants $\constnua \in (0; 1)$ and $\constnub \in (0, 1)$ if $2 \tol \le \frac{1}{1 + \constnub} \min \left\{\numpapers^{1 - \constnua}, \sizeofconf, \numpapers - \sizeofconf \right\}$ and $\distance \le \const \sqrt{\frac{\genvariance(\singlehighqual) \genvariance(0)}{\tmpcapacity \genvariance(0) + ({\paperload - \tmpcapacity}) \genvariance(\singlehighqual)} \constnua \constnub \ln{\numpapers}}$, then for $\numpapers$ larger than some $(\constnua, \constnub)$-dependent constant,

	\begin{align*}
		\sup\limits_{\simmatrix \in \onegoodforeveryone (\singlehighqual)}  \inf\limits_{\left(\estimator, \assignment \in \assignmentfamily \right)} \sup\limits_{ \left(\truepaperquality_1, \ldots, \truepaperquality_{\numpapers} \right) \in \problemfamilyham(\distance) } \prob{\hamming{\accepted_{\sizeofconf}\left( \assignment, \estimator \right)}{\truebest} > 2 \tol}  \ge \frac{1}{2}.		
	\end{align*}
\end{corollary}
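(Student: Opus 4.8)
The plan is to combine the variance bound underlying Corollary~\ref{thm:main_mle}(a) with the boundary-pair reduction behind Theorem~\ref{thm:main_theorem_ham}(a). First I would control the noise level of the MLE scores on $\onegoodforeveryone(\singlehighqual)$: by construction of the \algo subroutine, the candidate assignment $\assignment_{\tmpcapacity}$ built in the first iteration (with $\tobeassigned = [\numpapers]$) gives every paper $\tmpcapacity$ reviewers of similarity $\ge \criticalsim_{\tmpcapacity} \ge \singlehighqual$ and $\paperload - \tmpcapacity$ further reviewers of similarity $\ge 0$; since $\genvariance$ is monotonically decreasing and $\transfmle = \genvariance^{-1}$, this gives $\assignmentquality[\transfmle]{\assignment_{\tmpcapacity}} \ge \tmpcapacity/\genvariance(\singlehighqual) + (\paperload - \tmpcapacity)/\genvariance(0)$, and Step~\ref{Algostep:select_candidate} passes this bound on to $\fairassignment_{\transfmle}$, so each $\estpaperqualitymle_j$ from~\eqref{eqn:mle_score} has variance at most $V^\ast \defn \genvariance(\singlehighqual)\genvariance(0)/(\tmpcapacity\genvariance(0) + (\paperload - \tmpcapacity)\genvariance(\singlehighqual))$. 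Next, writing $T^- = \{j : \text{rank}(j) \le \sizeofconf - \tol\}$ and $T^+ = \{j : \text{rank}(j) \ge \sizeofconf + \tol + 1\}$, a counting argument shows $\{\hamming{\accepted_{\sizeofconf}}{\truebest} > 2\tol\} \subseteq \bigcup_{a \in T^-,\, b \in T^+}\{\estpaperqualitymle_b \ge \estpaperqualitymle_a\}$: the event forces at least $\tol + 1$ true top-$\sizeofconf$ papers to be rejected and at least $\tol + 1$ non-top papers to be accepted, hence some such $a$ is rejected and some such $b$ accepted, and then $\estpaperqualitymle_b \ge \estpaperqualitymle_a$ because $\accepted_{\sizeofconf}$ retains the $\sizeofconf$ largest scores. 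On $\problemfamilyham(\distance)$ we have $\truepaperquality_a - \truepaperquality_b \ge \thresholdham \ge \distance$, so $\estpaperqualitymle_b - \estpaperqualitymle_a$ is (sub-)Gaussian with mean $\le -\distance$ and variance (scale) $\le 2V^\ast$, giving $\prob{\estpaperqualitymle_b \ge \estpaperqualitymle_a} \le \exponent{-\distance^2/(4V^\ast)}$; a union bound over the $\le \numpapers^2$ pairs together with the hypothesised lower bound on $\distance$ delivers the bound $\errorrate$, and the sub-Gaussian case of the remark is identical with a sub-Gaussian tail inequality.

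\textbf{Converse (part (b)).} Here the plan is to marry the hard similarity matrix of Corollary~\ref{thm:main_mle}(b) with the Hamming packing of Theorem~\ref{thm:main_theorem_ham}(b). I would fix a similarity matrix $\simmatrix \in \onegoodforeveryone(\singlehighqual)$ that is tight --- with $\criticalsim_{\tmpcapacity} = \singlehighqual$, all similarities other than the $\tmpcapacity$ strong reviewers available to each paper set to $0$, and arranged so that \emph{every} feasible $\assignment \in \assignmentfamily$ is forced to satisfy $\sum_{i \in \reviewerset{\assignment}(j)}\transfmle(\similarity_{ij}) \le \tmpcapacity/\genvariance(\singlehighqual) + (\paperload - \tmpcapacity)/\genvariance(0)$ for every paper $j$, i.e.\ effective noise variance $\ge V^\ast$ (for general, non-linear estimators one argues directly from the joint law of the scores rather than through the linear-estimator variance). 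I would then restrict the adversary's choice of $\truepaperquality$ to a finite packing inside $\problemfamilyham(\distance)$: freeze the qualities outside a window of about $2\tol$ papers straddling rank $\sizeofconf$ and place $M$ score vectors on that window whose true top-$\sizeofconf$ sets are pairwise at Hamming distance $> 2\tol$; the hypothesis $2\tol \le \frac{1}{1+\constnub}\min\{\numpapers^{1-\constnua}, \sizeofconf, \numpapers - \sizeofconf\}$ guarantees both that such a window exists and that $\ln M \gtrsim \constnua\constnub\,\tol\ln\numpapers$. Since any two hypotheses differ in only $O(\tol)$ coordinate means, each by at most $\distance$, and the score noise has variance $\gtrsim V^\ast$, the pairwise divergences obey $\KL{\scoredist_{\theta}}{\scoredist_{\theta'}} \lesssim \tol\,\distance^2/V^\ast$, so Fano's inequality gives $\inf_{(\estimator,\assignment)}\sup_{\theta}\prob{\hamming{\accepted_{\sizeofconf}}{\truebest} > 2\tol} \ge 1 - \frac{\max_{\theta,\theta'}\KL{\scoredist_\theta}{\scoredist_{\theta'}} + \ln 2}{\ln M} \ge \tfrac12$ once $\distance^2 \le c\,V^\ast \constnua\constnub\ln\numpapers$ for a small universal $c$, which is the stated condition (with $V^\ast$ exactly the quantity appearing in the hypothesis on $\distance$).

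\textbf{Main obstacle.} Part (a) is essentially routine: it is the $\tol$-generalisation of the proof of Corollary~\ref{thm:main_mle}(a), replacing a single comparison by a union over boundary pairs, with the variance bound read off from the fairness guarantee of \algodot. The real work is in part (b), where two points demand care: (i) exhibiting one similarity matrix that lies in $\onegoodforeveryone(\singlehighqual)$ and simultaneously, while respecting the $(\maxrevload, \paperload)$ feasibility constraints, forces effective variance at least $V^\ast$ under \emph{every} assignment --- both directions of the $\criticalsim_{\tmpcapacity}$ parametrisation are used here; and (ii) engineering the Hamming packing so that $\ln M$ is large enough --- which is precisely where the $\min\{\numpapers^{1-\constnua}, \sizeofconf, \numpapers - \sizeofconf\}$ budget and the $\constnua\constnub\ln\numpapers$ rate come from --- while keeping the pairwise KL divergences $\lesssim \tol\,\distance^2/V^\ast$ small, i.e.\ balancing $\ln M$ against $\tol\,\distance^2/V^\ast$ so that Fano's bound stays above $\tfrac12$. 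Pinning down the universal constant so that the failure probability is exactly $\ge \tfrac12$ is then bookkeeping.
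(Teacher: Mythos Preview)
Your proposal is correct and follows essentially the same route as the paper: part (a) combines the MLE variance bound $\criticalstd^2(\fairassignment_{\transfmle},\mle)\le V^\ast$ from the fairness lower bound~\eqref{eqn:pr4a_lower} (exactly as in the proof of Corollary~\ref{thm:main_mle}(a)) with the boundary-pair union bound of Theorem~\ref{thm:main_theorem_ham}(a); part (b) reuses the hard similarity matrix~\eqref{eqn:simmatrix_tricky} from Corollary~\ref{thm:main_mle}(b) together with the Levenshtein-type Hamming packing of Lemma~\ref{lemma:lev_bound} and Fano's inequality, bounding the per-paper KL contribution by $\distance^2/(2V^\ast)$ via $\goodrevs_\ell\le\tmpcapacity$. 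The only cosmetic difference is that the paper phrases the upper bound as ``all of $T^-$ beat all of $T^+$'' and then deduces $\le\tol$ errors, whereas you phrase it contrapositively; the two are equivalent.
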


The results established in this section thus show that our \algo algorithm produces an assignment which is minimax (near-)optimal for both exact and approximate recovery of the top $\sizeofconf$ papers.

%%%%%%%%%%%%%%%%%%%%%%%%

\section{Subjective-score model}
\label{sec:subj_score_model}
In the previous section, we analyzed the performance of our \algo assignment algorithm under a model with objective scores. Indeed, various past works on peer-review (as well as various other domains of machine learning) assume existence of some ``true'' objective scores or ranking of the underlying items (papers). However, in practice, reviewers' opinions on the quality of any paper are typically highly subjective~\citep{kerr1977manuscript,mahoney1977publication, ernst1994reviewer,bakanic1987manuscript,lamont2009professors}. 
Even two highly experienced researchers with vast experience and expertise may have considerably differing opinions about the contributions of a paper. Following this intuition, we wish to move away from the assumption of some true objective scores $\{\truepaperquality_j\}_{j \in [\numpapers]}$ of the paper.

With this motivation, in this section we develop a novel  model to capture such subjective opinions and present a statistical analysis of our assignment algorithm under this subjective-score model. 

%%%%%%%%%%%%%%%%%%%%%%%%%%%%%%%

\subsection{Model}

The key idea behind our subjective score model is to separate out the subjective part in any reviewer's opinion from the noise inherent in it.
Our model is best described by first considering a hypothetical situation where every reviewer \emph{spends an infinite time and effort on reviewing every paper, gaining a perfect expertise in the field of that paper and a perfect understanding of the paper's content}. We let $\subjectivescore_{ij} \in \reals$ denote the score that this fully competent version of reviewer $i \in [\numreviewers]$ would provide to paper $j \in [\numpapers]$, and denote the matrix of reviewers subjective scores as $\subjscorematrix = \left\{\subjectivescore_{ij} \right\}_{i \in [{\numreviewers}], j \in [\numpapers]}$.  Continuing momentarily in this hypothetical world, when  all the reviewers are fully competent in evaluating all the papers, every feasible reviewer-assignment is of the same quality since there is no noise in the reviewers' scores. Since all reviewers have an equal, full competence, a natural choice of scoring any paper $j \in [\numpapers]$ is to take the mean score provided by the fully competent reviewers who review that paper:
		\begin{align}
		\label{eqn:induced_rank}
			\avgpaperquality_j(\assignment) \defn \frac{1}{\paperload} \slim_{i \in \reviewerset{\assignment}(j)} \subjectivescore_{ij}.
		\end{align}

Let us now exit our hypothetical world and return to reality. In a real conference peer-review setting the reviews will be noisy. Following the previous noise assumptions, we assume that score of any reviewer $i \in [\numreviewers]$ for any paper $j \in [\numpapers]$ that she/he reviews is distributed as 
\begin{align*}
	\scoregiven_{ij} \sim \gaussian( \subjectivescore_{ij}, {\genvariance(\similarity_{ij})}),
\end{align*}
for some known continuous strictly monotonically decreasing function $\genvariance:[0,1] \rightarrow [0, 1]$.
Under this model, the higher the similarity $\similarity_{ij}$, the better the score $\scoregiven_{ij}$ represents the subjective score $\subjectivescore_{ij}$ which reviewer $i \in [\numreviewers]$ would give to paper $j \in [\numpapers]$ if she/he had infinite expertise.

The goal under this model is to assign reviewers to papers such that reviewers are of enough ability to convey their opinions $\subjectivescore_{ij}$ from the hypothetical full-competence world to the real world with scores $\scoregiven_{ij}$. In other words, the goal of the assignment is to ensure the recovery of the top $k$ papers in terms of the mean full-competence subjective scores $\{\avgpaperquality_j\}_{j \in [\numpapers]}$.

\subsection{Analysis}

\newcommand{\trubestsubj}{}

In this section we present statistical guarantees for $\avgest$ in context of subjective-score model.

\subsubsection{Exact top $\sizeofconf$ recovery}

Since the true scores for any reviewer-paper pair are subjective, and since we are interested in mean full-competence subjective scores, a natural choice for estimating $\{\avgpaperquality_j\}$ from the actual provided scores $\{\scoregiven_{ij} \}$ is the averaging estimator $\avgest$ which for every paper $j \in [\numpapers]$ estimates $\avgpaperquality_j$ as $\estpaperqualityavg_j = \frac{1}{\paperload} \slim_{i \in \reviewerset{\assignment}(j)} \scoregiven_{ij}$.  Having defined the model and estimator, we now provide a sharp minimax analysis for the subjective-score model. In order to state our main result, we recall the family of similarity matrices $
    \goodavgassignment(\highqualitycrowd)$ defined earlier in~\eqref{eqn:good_assignment} and the approximation ratio $\approximation_\highqualitycrowd$ defined in~\eqref{eqn:approximationDefn}, both parameterized by some non-negative value $\highqualitycrowd$. 

Note that the notion of the $\sizeofconf$-separation threshold~\eqref{eqn:sep_threshold} does not carry over directly from the objective score model to the subjective score model. The reason is that the ranking now is induced by the assignment and changes as we change the assignment. Consequently, we introduce the following family of papers' scores that are governed by the assignment $\assignment$ and parametrized by a positive real value $\distance$:
\begin{align}
\label{eqn:subj_family_score}
	& \problemfamily(\assignment, \distance) = \left\{\subjscorematrix \in \reals^{\numreviewers \times \numpapers} \Big | \avgpaperquality_{(\sizeofconf)}(\assignment) - \avgpaperquality_{(\sizeofconf + 1)}(\assignment) \ge \distance  \right\}.
\end{align}

Since in this section we consider only mean score estimator $\avgest$, we omit index $\transfavg$ from $\fairassignment_{\transfavg}$, but always imply that assignment $\fairassignment$ is built with respect to the function $\transfavg$. For every feasible assignment $\assignment$, we augment the notation $\truebest$ with $\truebestsubj\left(\assignment, \avgpaperquality(\assignment) \right)$ to highlight that the set of the top $\sizeofconf$ papers is induced by the assignment $\assignment$. Let us now present the main result of this section.

\begin{theorem}
	\label{thm:main_theorem_subj}
 (a)	For any $\errorrate \in (0, 1/4)$, $\highqualitycrowd \in [\paperload\left(1 - \genvariance(0)\right), \paperload]$ and any monotonically decreasing $\genvariance: [0, 1] \to [0, 1]$, if $\distance > \frac{2\sqrt{2}}{\paperload}\sqrt{\left(\paperload -  \highqualitycrowd \approximation_\highqualitycrowd \right) \ln\frac{m}{\sqrt{\errorrate}}}$, then
	\begin{align*}
	 \sup\limits_{\substack{ \subjscorematrix \in \problemfamily(\fairassignment, \distance) \\ \simmatrix \in \goodavgassignment(\highqualitycrowd) }} 	& \!\! \prob{\!\accepted_{\sizeofconf}( \fairassignment, \avgest ) \!\ne \!\truebestsubj\left(\fairassignment, \avgpaperquality(\fairassignment) \right)\!}  \!\le \! \errorrate. 	
	\end{align*}
(b)	Conversely, for any continuous strictly monotonically decreasing $\genvariance: [0, 1] \to [0, 1]$ and any $\highqualitycrowd \in [ \paperload\left(1 - \genvariance(0)\right), \paperload]$, there exists a universal constant $\const > 0$ such that if $\numpapers > 6$ and $\distance < \frac{c}{\paperload} \sqrt{\left(\paperload - \highqualitycrowd \right) \ln m}$, then
	\begin{align*}
		\sup\limits_{\simmatrix \in \goodavgassignment(\highqualitycrowd) } \inf\limits_{\left(\estimator, \assignment \in \assignmentfamily\right)} \sup\limits_{\subjscorematrix \in \problemfamily(\assignment, \distance)} &\!\! \prob{\!\accepted_{\sizeofconf}( \assignment, \estimator ) \! \ne \! \truebestsubj\left(\assignment, \avgpaperquality(\assignment) \right)\!}  \!\ge\! \frac{1}{2}. 		
	\end{align*}
\end{theorem}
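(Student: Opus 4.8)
\emph{Part (a).} The plan is to reduce the claim to essentially the same argument as the upper bound of Theorem~\ref{thm:main_theorem}(a), using that the assignment $\fairassignment \defn \fairassignment_{\transfavg}$ is a deterministic function of $\simmatrix$. Once $\fairassignment$ is fixed, each estimate $\estpaperqualityavg_j = \frac{1}{\paperload}\sum_{i\in\reviewerset{\fairassignment}(j)}\scoregiven_{ij}$ is, under the subjective model, an \emph{unbiased} Gaussian estimate of the target $\avgpaperquality_j(\fairassignment) = \frac{1}{\paperload}\sum_{i\in\reviewerset{\fairassignment}(j)}\subjectivescore_{ij}$, namely $\estpaperqualityavg_j \sim \gaussian\bigl(\avgpaperquality_j(\fairassignment),\, \tfrac{1}{\paperload^2}\sum_{i\in\reviewerset{\fairassignment}(j)}\genvariance(\similarity_{ij})\bigr)$; so, conditionally on $\simmatrix$, this is exactly the objective problem with ``true scores'' $\avgpaperquality_j(\fairassignment)$, whose top $\sizeofconf$ entries are the target set $\truebestsubj(\fairassignment,\avgpaperquality(\fairassignment))$. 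First I would control the estimator variance: for every paper $j$, $\sum_{i\in\reviewerset{\fairassignment}(j)}(1-\genvariance(\similarity_{ij})) \ge \assignmentquality[\transfavg]{\fairassignment_{\transfavg}} \ge \approximation_\highqualitycrowd\,\assignmentquality[\transfavg]{\hardassignment_{\transfavg}} \ge \approximation_\highqualitycrowd\,\highqualitycrowd$, where the last two steps use the definition of $\approximation_\highqualitycrowd$ in~\eqref{eqn:approximationDefn} and $\simmatrix\in\goodavgassignment(\highqualitycrowd)$, hence $\sum_{i\in\reviewerset{\fairassignment}(j)}\genvariance(\similarity_{ij}) \le \paperload - \approximation_\highqualitycrowd\highqualitycrowd$. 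Next, the error event $\accepted_\sizeofconf(\fairassignment,\avgest)\neq\truebestsubj$ forces some paper $a$ in the true top $\sizeofconf$ and some $b$ outside it with $\estpaperqualityavg_b \ge \estpaperqualityavg_a$; since $\subjscorematrix\in\problemfamily(\fairassignment,\distance)$ gives $\avgpaperquality_a(\fairassignment)-\avgpaperquality_b(\fairassignment) \ge \distance$ for every such pair, and $\estpaperqualityavg_b - \estpaperqualityavg_a$ is Gaussian with mean $\le -\distance$ and variance $\le 2(\paperload-\approximation_\highqualitycrowd\highqualitycrowd)/\paperload^2$, a Gaussian tail bound gives $\prob{\estpaperqualityavg_b \ge \estpaperqualityavg_a} \le \exp\bigl(-\tfrac{\paperload^2\distance^2}{4(\paperload-\approximation_\highqualitycrowd\highqualitycrowd)}\bigr)$. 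A union bound over the at most $\numpapers^2$ such ordered pairs, together with the hypothesis on $\distance$ (which makes the exponent exceed $\ln(\numpapers^2/\errorrate)$), yields the bound $\errorrate$. This half is routine.

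\emph{Part (b).} The plan is to reduce the subjective lower bound to the objective lower bound of Theorem~\ref{thm:main_theorem}(b) via a ``constant'' construction. We may assume $\goodavgassignment(\highqualitycrowd)\neq\emptyset$ (otherwise the statement is vacuous); this, together with $\highqualitycrowd \ge \paperload(1-\genvariance(0))$ and continuity/strict monotonicity of $\genvariance$, guarantees a value $\similarity_0\in[0,1]$ with $\genvariance(\similarity_0) = 1 - \highqualitycrowd/\paperload$. Take $\simmatrix^\ast$ to be the matrix all of whose entries equal $\similarity_0$. Then every feasible assignment gives every paper $j$ sum transform $\sum_{i\in\reviewerset{\assignment}(j)}(1-\genvariance(\similarity_0)) = \highqualitycrowd$, so $\assignmentquality[\transfavg]{\hardassignment_{\transfavg}} = \highqualitycrowd$ and $\simmatrix^\ast \in \goodavgassignment(\highqualitycrowd)$. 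The crucial observation is that, since the subjective target ranking is generated by $\avgpaperquality_j(\assignment) = \frac{1}{\paperload}\sum_{i\in\reviewerset{\assignment}(j)}\subjectivescore_{ij}$, restricting the adversary to subjective matrices that are \emph{constant across reviewers}, $\subjectivescore_{ij} = c_j$ for all $i$, makes $\avgpaperquality_j(\assignment) = c_j$ \emph{independent of the assignment $\assignment$}, so that $\truebestsubj(\assignment,\avgpaperquality(\assignment))$ is simply the top $\sizeofconf$ entries of $(c_1,\dots,c_\numpapers)$, and, with $\simmatrix=\simmatrix^\ast$, the data $\{\scoregiven_{ij}:\assignment_{ij}=1\}$ are independent $\gaussian(c_j, 1-\highqualitycrowd/\paperload)$ with $\paperload$ observations per paper, whose per-paper sufficient statistic is $\gaussian\bigl(c_j,\tfrac{\paperload-\highqualitycrowd}{\paperload^2}\bigr)$.

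Consequently, for any estimator $\estimator$ and any $\assignment\in\assignmentfamily$, placing a prior on $(c_1,\dots,c_\numpapers)$ supported on $\{c_{(\sizeofconf)} - c_{(\sizeofconf+1)} \ge \distance\}$ turns the quantity to be bounded into exactly the Bayes risk of exact top $\sizeofconf$ recovery in a Gaussian location model with per-item variance $(\paperload-\highqualitycrowd)/\paperload^2$ --- i.e., the identical situation underlying the proof of Theorem~\ref{thm:main_theorem}(b). I would then invoke the multi-hypothesis (Fano-type) argument of that proof: take $\Theta(\numpapers)$ hypotheses differing only in which near-boundary paper occupies the $\sizeofconf$-th slot, each pair at mutual KL divergence of order $\paperload^2\distance^2/(\paperload-\highqualitycrowd)$; since distinct hypotheses have distinct top $\sizeofconf$ sets, exact recovery would identify the hypothesis, and Fano's inequality shows this fails with probability at least $\tfrac12$ once $\distance < \frac{c}{\paperload}\sqrt{(\paperload-\highqualitycrowd)\ln\numpapers}$ for a suitable universal constant $c$.

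\emph{Main obstacle.} The difficulty is conceptual and confined to part (b): the ``ground truth'' $\truebestsubj(\assignment,\avgpaperquality(\assignment))$ is itself a function of the assignment chosen inside the infimum, so one cannot directly quote the objective-model bound. The resolution above --- constant-in-reviewer subjective scores render the induced ranking assignment-invariant, collapsing the subjective problem on $\simmatrix^\ast$ onto the objective one --- is what makes the reduction valid; the remaining steps (checking $\simmatrix^\ast\in\goodavgassignment(\highqualitycrowd)$, and the Fano computation producing the $\ln\numpapers$ scaling) parallel the objective case and involve no new ideas.
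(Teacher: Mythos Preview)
Your proposal is correct and follows essentially the same approach as the paper. For part~(a) you observe that, once $\fairassignment$ is fixed, $\estpaperqualityavg_j$ is an unbiased Gaussian estimate of $\avgpaperquality_j(\fairassignment)$ with variance controlled exactly as in the objective case, so Lemma~\ref{lemma:upper_bound} and the computation of Theorem~\ref{thm:main_theorem}(a) apply verbatim; for part~(b) you restrict the adversary to subjective scores constant across reviewers ($\subjectivescore_{ij}=c_j$), which makes the induced ranking assignment-invariant and collapses the problem onto the objective lower bound of Theorem~\ref{thm:main_theorem}(b) --- this is precisely the reduction the paper uses (it sets $\subjectivescore_{ij}=\truepaperquality_j$ and then quotes Theorem~\ref{thm:main_theorem}(b) directly).
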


We thus see that our assignment algorithm \algo not only leads to the strong guarantees under the objective-score model but simultaneously also under the setting where the opinions of reviewers may be subjective.

%%%%%%%%%%%%%%%%%%%%%%%%

\subsubsection{Approximate recovery under Hamming error}

We now present guarantees for approximate recovering under the Hamming error for the \algo algorithm. 
We generalize the family of score matrices~\eqref{eqn:subj_family_score}, for which we consider any integer error tolerance parameter $\tol \in \{0,\ldots, \sizeofconf-1\}$ and any any feasible assignment $\assignment$. Then we define the following family of subjective papers' scores, parameterized by non-negative value $\distance$: 
\begin{align*}
	& \problemfamilyham(\assignment, \distance) = \left\{\subjscorematrix \in \reals^{\numreviewers \times \numpapers} \Big | \avgpaperquality_{(\sizeofconf - \tol )}(\assignment) - \avgpaperquality_{(\sizeofconf + \tol + 1)}(\assignment) \ge \distance  \right\}.
\end{align*}
Observe that the class $\problemfamilyham(\assignment, \distance)$ coincides with the class $\problemfamily(\distance)$ from~\eqref{eqn:subj_family_score} when $\tol = 0$. 
\begin{theorem}
	\label{thm:main_theorem_ham_subj}
	(a) For any $\errorrate \in (0, 1/4)$, $\highqualitycrowd \in [0, \paperload]$, $\tol \in [\sizeofconf - 1]$, and any monotonically decreasing $\genvariance : [0, 1] \to [0, 1]$, if $\distance > \frac{2 \sqrt{2}}{\paperload}\sqrt{\left(\paperload - \highqualitycrowd \approximation_\highqualitycrowd  \right) \ln\frac{m}{\sqrt{\errorrate}}}$, then 
	\begin{align*}
		\sup\limits_{\substack{  \subjscorematrix \in \problemfamilyham(\fairassignment, \distance) \\ \simmatrix \in \goodavgassignment(\highqualitycrowd) }} \prob{\hamming{\accepted_{\sizeofconf}\left( \fairassignment, \estimator \right)}{\truebest\left(\fairassignment, \avgpaperquality(\fairassignment) \right)} > 2 \tol}  \le \errorrate.	
	\end{align*}
	Conversely, for any continuous strictly monotonically decreasing $\genvariance: [0, 1] \to [0, 1]$, any $\highqualitycrowd \in [\paperload \left(1 - \genvariance(0)\right), \paperload]$, and any $0 < \tol < \sizeofconf$,  there exists a universal constant $\const > 0$ such that for given constants $\constnua \in (0, 1)$ and $\constnub \in (0, 1)$ if $2 \tol \le \frac{1}{1 + \constnub} \min \left\{\numpapers^{1 - \constnua}, \sizeofconf, \numpapers - \sizeofconf \right\}$  and $\distance \le \frac{c}{\paperload} \sqrt{\left(\paperload - \highqualitycrowd \right) \constnua \constnub \ln m}$, then for $\numpapers$ larger than some $(\constnua, \constnub)$-dependent constant,
	\begin{align*}
		\sup\limits_{\simmatrix \in \goodavgassignment(\highqualitycrowd)}  \inf\limits_{\left(\estimator, \assignment \in \assignmentfamily \right)} \sup\limits_{ \subjscorematrix \in \problemfamilyham(\assignment, \distance)} \prob{\hamming{\accepted_{\sizeofconf}\left( \assignment, \estimator \right)}{\truebest\left(\assignment, \avgpaperquality(\assignment) \right)} > 2 \tol}  \ge \frac{1}{2}.		
	\end{align*}
\end{theorem}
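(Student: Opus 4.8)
The plan is to observe that, once the assignment is fixed, the subjective-score Hamming-recovery problem is literally an instance of the objective-score Hamming-recovery problem of Section~\ref{sec:obj_score_model}, and then to invoke the machinery already developed for Theorem~\ref{thm:main_theorem_ham} (together with Theorem~\ref{thm:main_theorem_subj}, whose exact-recovery argument this mirrors). Indeed, under the mean-score estimator $\avgest$ the estimated quality of paper $j$ is $\estpaperqualityavg_j = \tfrac{1}{\paperload}\sum_{i \in \reviewerset{\assignment}(j)} \scoregiven_{ij} = \avgpaperquality_j(\assignment) + \tfrac{1}{\paperload}\sum_{i \in \reviewerset{\assignment}(j)} \xi_{ij}$ with independent $\xi_{ij} \sim \gaussian(0,\genvariance(\similarity_{ij}))$, so $\estpaperqualityavg_j \sim \gaussian\!\big(\avgpaperquality_j(\assignment),\, \tfrac{1}{\paperload^2}\sum_{i\in\reviewerset{\assignment}(j)}\genvariance(\similarity_{ij})\big)$, and these estimates use disjoint noise variables across $j$. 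Fixing $\assignment$ thus reduces the problem to model~\eqref{eqn:reviewer_model} with ``true qualities'' $\avgpaperquality_j(\assignment)$ and the same per-paper noise variances, with $\truebest(\assignment,\avgpaperquality(\assignment))$ being the top-$\sizeofconf$ of those surrogate qualities; the sub-Gaussian remark carries over unchanged.

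For part (a), take $\assignment=\fairassignment\equiv\fairassignment_{\transfavg}$. Since $\genvariance=1-\transfavg$, for every paper $j$ we have $\sum_{i\in\reviewerset{\fairassignment}(j)}\genvariance(\similarity_{ij}) = \paperload-\sum_{i\in\reviewerset{\fairassignment}(j)}\transfavg(\similarity_{ij}) \le \paperload-\assignmentquality[\transfavg]{\fairassignment}$, and when $\simmatrix\in\goodavgassignment(\highqualitycrowd)$ the definitions of $\goodavgassignment$ and $\approximation_\highqualitycrowd$ give $\assignmentquality[\transfavg]{\fairassignment}\ge\approximation_\highqualitycrowd\,\assignmentquality[\transfavg]{\hardassignment_{\transfavg}}\ge\approximation_\highqualitycrowd\highqualitycrowd$, so every $\estpaperqualityavg_j$ has variance at most $\tfrac{1}{\paperload^2}(\paperload-\approximation_\highqualitycrowd\highqualitycrowd)$. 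Next comes the standard combinatorial step: if $\hamming{\accepted_\sizeofconf}{\truebest}>2\tol$ then, both sets having size $\sizeofconf$, at least $\tol+1$ surrogate-top-$\sizeofconf$ papers are rejected and at least $\tol+1$ surrogate-bottom papers are accepted, so counting ranks there must be a rejected paper $a$ of surrogate rank $\le\sizeofconf-\tol$ and an accepted paper $b$ of surrogate rank $\ge\sizeofconf+\tol+1$, hence $\estpaperqualityavg_a\le\estpaperqualityavg_b$. On the event $\subjscorematrix\in\problemfamilyham(\fairassignment,\distance)$ one has $\avgpaperquality_a(\fairassignment)-\avgpaperquality_b(\fairassignment)\ge\avgpaperquality_{(\sizeofconf-\tol)}(\fairassignment)-\avgpaperquality_{(\sizeofconf+\tol+1)}(\fairassignment)\ge\distance$, and $\estpaperqualityavg_a-\estpaperqualityavg_b$ is Gaussian with this mean and variance at most $\tfrac{2}{\paperload^2}(\paperload-\approximation_\highqualitycrowd\highqualitycrowd)$; a Gaussian tail bound and a union bound over the fewer than $\numpapers^2$ such pairs give
\[
\prob{\hamming{\accepted_\sizeofconf(\fairassignment,\avgest)}{\truebest}>2\tol}\ \le\ \numpapers^{2}\exp\!\Big(\!-\frac{\paperload^{2}\distance^{2}}{4(\paperload-\approximation_\highqualitycrowd\highqualitycrowd)}\Big),
\]
which is at most $\errorrate$ precisely when $\distance>\tfrac{2\sqrt2}{\paperload}\sqrt{(\paperload-\approximation_\highqualitycrowd\highqualitycrowd)\ln\tfrac{\numpapers}{\sqrt\errorrate}}$. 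This is exactly the computation in the proof of Theorem~\ref{thm:main_theorem_ham}(a) specialized to $\estimator=\avgest$.

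For the converse (lower-bound) direction, the idea is to choose one similarity matrix that ``neutralizes'' the assignment. Pick $\simmatrix^{\star}$ with all entries equal to the constant $\similarity^{\star}$ solving $\genvariance(\similarity^{\star})=1-\highqualitycrowd/\paperload$ (such $\similarity^{\star}\in[0,1]$ exists on the stated range of $\highqualitycrowd$ because $\genvariance$ is continuous and strictly decreasing). Then $\assignmentquality[\transfavg]{\hardassignment_{\transfavg}}=\paperload(1-\genvariance(\similarity^{\star}))=\highqualitycrowd$, so $\simmatrix^{\star}\in\goodavgassignment(\highqualitycrowd)$; and, crucially, for \emph{every} feasible assignment $\assignment$ and every paper $j$ we get $\variance{\estpaperqualityavg_j}=\tfrac{1}{\paperload}\genvariance(\similarity^{\star})=\tfrac{\paperload-\highqualitycrowd}{\paperload^{2}}$, independent of $\assignment$. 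Fix any $(\estimator,\assignment)$; since the adversary selects $\subjscorematrix$ after seeing $\assignment$ and the entries $\{\subjectivescore_{ij}: i\in\reviewerset{\assignment}(j)\}$ are free and disjoint across $j$, the surrogate quality vector $(\avgpaperquality_1(\assignment),\ldots,\avgpaperquality_{\numpapers}(\assignment))$ can be made an arbitrary element of $\reals^{\numpapers}$ subject only to the $\problemfamilyham(\assignment,\distance)$ separation. Thus the statistician faces exactly the Gaussian top-$\sizeofconf$ Hamming-recovery problem with per-paper variance $\tfrac{\paperload-\highqualitycrowd}{\paperload^{2}}$ that is analyzed in Theorem~\ref{thm:main_theorem_ham}(b): take a block of about $\numpapers^{1-\constnua}$ ``boundary'' papers whose surrogate qualities straddle the acceptance cutoff, put a uniform prior over which roughly-$\tol$-sized subsets are pushed across the cutoff, note that the pairwise Kullback--Leibler divergences equal $\paperload^{2}\distance^{2}/(2(\paperload-\highqualitycrowd))$ per flipped coordinate, which the hypothesis on $\distance$ keeps below a constant multiple of $\constnua\constnub\ln\numpapers$, and run a Fano/generalized-Assouad argument; the hypotheses $2\tol\le\tfrac{1}{1+\constnub}\min\{\numpapers^{1-\constnua},\sizeofconf,\numpapers-\sizeofconf\}$ and the stated bound on $\distance$ are exactly what keep the error probability at least $\tfrac12$. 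Because all of $\{\scoregiven_{ij}\}$ form a Gaussian experiment whose law depends on $\subjscorematrix$ only through the entries the adversary controls, no estimator can beat the reduced problem, which discharges the $\inf$ over $(\estimator,\assignment)$.

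The main obstacle is the converse, and within it the reduction itself rather than the multi-hypothesis machinery (the latter is already in place for the objective model). One must verify that the constant-similarity matrix simultaneously (i) lies in $\goodavgassignment(\highqualitycrowd)$ with optimal fairness \emph{exactly} $\highqualitycrowd$, so that no slack is lost, and (ii) forces the posterior variance of $\avgest$ on \emph{every} paper to equal $\tfrac{\paperload-\highqualitycrowd}{\paperload^{2}}$ regardless of which feasible assignment the statistician picks, and then that the adversary, moving after $\assignment$ is revealed, can realize each hard objective-model instance as a subjective instance that actually lies inside $\problemfamilyham(\assignment,\distance)$. Once this reduction is pinned down, the rest is a transcription of the already-established Theorem~\ref{thm:main_theorem_ham}.
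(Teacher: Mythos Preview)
Your proposal is correct and follows essentially the same approach as the paper: both parts reduce to Theorem~\ref{thm:main_theorem_ham} by observing that, once the assignment is fixed, the subjective model with $\avgest$ is an instance of the objective model with ``true qualities'' $\avgpaperquality_j(\assignment)$, and for the converse one uses the constant similarity matrix $\tmpsimmatrix=\{\genvariance^{-1}(1-\highqualitycrowd/\paperload)\}^{\numreviewers\times\numpapers}$ together with a choice of $\subjscorematrix$ that makes the surrogate qualities match the hard objective instances. The only cosmetic difference is that the paper's lower-bound reduction is even simpler than yours: rather than arguing that the adversary can freely set the assigned entries of $\subjscorematrix$ after seeing $\assignment$, it just takes $\subjectivescore_{ij}=\truepaperquality_j$ constant across reviewers $i$, so that $\avgpaperquality_j(\assignment)=\truepaperquality_j$ for \emph{every} $\assignment$ simultaneously, collapsing the subjective problem to the objective one without reference to the chosen assignment.
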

Similar to Theorem~\ref{thm:main_theorem_subj}, Theorem~\ref{thm:main_theorem_ham_subj} shows that \algo algorithm is minimax optimal up to a constant pre-factor and approximation factor \emph{given} that reviewers' subjective scores $\subjscorematrix$ belong to the class $\problemfamilyham(\assignment, \distance)$.

\section{Experiments}
\label{sec:experiments}
In this section we conduct empirical evaluations of the \algo algorithm and compare it with the \algtpmsdot~\citep{charlin13tpms}, \alggargdot~\citep{Garg2010papers} and \alghard algorithms. Our implementation of the \algo algorithm picks max-flow with maximum cost in Step~\ref{Sstep:pick_flow} of Subroutine~\ref{alg:fair_subroutine}.

Previous work on the conference paper assignment problem~\citep{Garg2010papers, Long13gooadandfair, Karimzadehgan08multiaspect, tang10constraied} conducted evaluations of the proposed algorithms in terms of various objective functions that measure the quality of the assignment. For example,~\citet{Garg2010papers} compared fairness from reviewers' perspective using the number of satisfied bids as a criteria. While these evaluations allow to compare algorithms in terms of particular objective, we note that the main goal of the peer-review system is to accept the best papers. It is not straightforward whether an improvement of some other objective will lead to the improvement of the quality of the paper acceptance process.   

In contrast to the prior works, in this section we not only consider the fairness objective (Subsections~\ref{subsection:iclr} and~\ref{sec:kdd}), but also design experiments (Subsections~\ref{sec:synth} and~\ref{sec:mturk}) to directly evaluate the accuracy resulting from the assignment procedures.

%%%%%%%%%%%%%%%%%%%%%%%%%%%%%%%%

\subsection{Synthetic simulations}
\label{sec:synth}

\newcommand{\dbeta}[2]{\mathcal{B}\left(#1, #2 \right)}
\newcommand{\casea}{Non-mainstream papers.}
\newcommand{\caseb}{Many weak reviewers.}
\newcommand{\casec}{Few super-strong reviewers.}
\newcommand{\cased}{Adverse case.}
\newcommand{\casee}{Sparse similarities.}

We begin with synthetic simulations. We consider the instance of the reviewer assignment problem with $\numpapers = \numreviewers = 100$ and $\paperload = \maxrevload = 4$. We select the moderate values of $\numpapers$ and $\numreviewers$ to keep track of the optimal assignment $\hardassignment$ which we find as a solution of the corresponding integer linear programming problem. For every real-valued constant $\const$, we denote the matrix with all entries being equal to $\const$ as $\mathbf{c}$. Similarly, we denote the matrix with entries independently sampled from a Beta distribution with parameters $\left(\alpha, \beta \right)$ as $\mathbf{\dbeta{\alpha}{\beta}}$.

We consider the objective-score model of reviewers~\eqref{eqn:reviewer_model} with $\genvariance(\similarity) = 1 - \similarity$ together with estimator $\mle$. Thus, assignments $\fairassignment$, $\gargassignment$ and $\hardassignment$ aim to optimize $\assignmentquality[{\left(1 - \similarity \right)^{-1}}]{\assignment}$ while assignment $\tpms$ aims to maximize the cumulative sum of similarities $\greedyassignmentquality{\assignment}$ as defined in~\eqref{eqn:unfair_criteria}. 

In what follows we simulate the following problem instances:

{
\renewcommand{\theenumi}{({C\arabic{enumi}})}
\renewcommand{\labelenumi}{\theenumi}
\begin{enumerate}
\setlength\itemsep{0.3em}
\item 
	\label{sim:case_niche}
	{\bf \casea} There are $\numpapers_1 = 80$ conventional papers for which there exist $\numreviewers_1 = 80$ expert reviewers with high similarity, and $\numpapers_2 = 20$ non-mainstream papers for which all the reviewers have similarity smaller than or equal to $0.5$. There are also $n_2 = 20$ weak reviewers who have moderate similarities with papers from the first group and low similarities with papers from the second group. The similarities are given by the block matrix:
	\begin{align*}
        \simmatrix_1 = \begin{array}{c@{\!\!\!}l}
            \left[ \begin{array}{c|c} 
         		\mathbf{0.9} & \mathbf{0.5} \\ 
        		\hline
        		 \smash{\underbrace{ \addstackgap[2pt] {$\mathbf{0.5}$}}_{80}} & \smash{\underbrace{ \addstackgap[2pt] {$\mathbf{0.15}$}}_{20}}\\
         	\end{array}  \right]
        &
         \begin{array}[c]{@{}l@{\,}l}
           \left. \,\,\, \right\} & \text{$80$} \\
           \left.  \,\,\, \right\} & \text{$20$} \\
           \end{array}
           \end{array}
        \end{align*}
        \label{sin:niche}
	\item {\bf  \caseb} In this scenario there are $\numreviewers_1 = 25$ strong reviewers with high similarity with every paper and $\numreviewers_2 = 75$ weak reviewers with small similarity with every paper:
	\begin{align*}
        \simmatrix_2 = \begin{array}{c@{\!\!\!}l}
            \left[ \begin{array}{c} 
         		\mathbf{0.8} + 0.2 \times \dbeta{1}{3}  \\ 
        		\hline
        		 \smash{\underbrace{ \addstackgap[2pt] {$\mathbf{0.1} + 0.2 \times \dbeta{1}{3}$}}_{100}} \\
         	\end{array}  \right]
        &
         \begin{array}[c]{@{}l@{\,}l}
           \left. \,\,\, \right\} & \text{$25$} \\
           \left.  \,\,\, \right\} & \text{$75$} \\
           \end{array}
           \end{array}
        \end{align*}
        \label{sin:weak}
        
	\item {\bf  \casec} The following example tests the algorithms in scenario when some small number of the reviewers are much stronger than the others. Similarities for this scenario are given by the block matrix:
		\begin{align*}
        \simmatrix_3 = \begin{array}{c@{\!\!\!}l}
            \left[ \begin{array}{c|c} 
                  	\mathbf{0.98}   &  \mathbf{0.9} \\ 
					\hline 
        		    \mathbf{0} & \mathbf{0.7} \\
		 		    \hline
        		 \smash{\underbrace{ \addstackgap[2pt] {$\mathbf{0.9}$}}_{60}} & \smash{\underbrace{ \addstackgap[2pt] {$\mathbf{0.9}$}}_{40}}\\
         	\end{array}  \right]
        &
         \begin{array}[c]{@{}l@{\,}l}
           \left. \,\,\, \right\} & \text{$10$} \\
           \left.  \,\,\, \right\} & \text{$50$} \\
           \left.  \,\,\, \right\} & \text{$40$} \\
           \end{array}
           \end{array}
        \end{align*}
    \label{sin:strong}
    \setlength\itemsep{0.5em}
	\item {\bf  \cased} Having analyzed the inner working of our \algo algorithm, we construct a similarity matrix which is hard for the algorithm to compute the fair assignment.\footnote{We do not give an explicit expression of the matrix $\simmatrix_4$ for this case, due to its complicated structure.} 
	\label{sin:adverse}
	\item {\bf \casee}
	Each entry of similarity matrix $\simmatrix_5$ is zero with probability $0.8$ or otherwise is drawn independently and uniformly at random from $[0.1, 0.9]$.
	\label{sin:sparce}	
	\end{enumerate}
}
%%%%%%%%%%%%%%%%%%%%%%%%%%%%%%%%%%%

\subsubsection{Fairness}

\begin{table}[t]
\vskip 0.15in
\begin{center}
\begin{small}
\begin{sc}
\begin{tabular}{lccccccccccr}
\toprule
          & \multicolumn{5}{c}{Fairness $\assignmentquality[(1 - \similarity)^{-1}]{\assignment}$} & \multicolumn{5}{c}{Sum of Similiarities $\greedyassignmentquality{\assignment}$}   \\
          \cmidrule(lr){2-6}
          \cmidrule(lr){7-11}
          & Case 1 & Case 2 & Case 3 & Case 4 & Case 5 & Case 1 & Case 2 & Case 3 & Case 4 & Case 5          \\
\midrule
$\tpms$               & $4.7$  & $5.1$   & $13.3$  & $4.0 $  & $10.9$  & $300$ & $168$ & $295$ & $296$ & $311$    \\
$\hardassignment$     & $8.0$  & $13.1$  & $26.6$  & $14.0$  & $10.9$  & $296$ & $162$ & $232$ & $234$ & $175$    \\
$\gargassignment$     & $8.0$  & $5.0$   & $4.0$   & $14.0$  & $10.9$  & $296$ & $165$ & $188$ & $293$ & $296$    \\
$\fairassignment$ & $8.0$  & $13.1$  & $22.0$  & $6.5$   & $10.9$  & $296$ & $166$ & $239$ & $290$ & $309$    \\
\bottomrule
\end{tabular}
\end{sc}
\end{small}
\end{center}
\vskip -0.1in
\caption{Comparison of assignment produced by \algodot, \algharddot, \alggarg and \algtpms algorithms in terms of the fairness and the sum of similarities (higher values are better).}
\label{table:syn_comp}
\end{table}

In this section we analyze the quality of assignments produced by \algodot, \algharddot, \alggarg and \algtpms algorithms and for all the five cases described above. The results are summarized in Table~\ref{table:syn_comp} where we compute the measures of fairness $\assignmentquality[(1 - \similarity)^{-1}]{\assignment}$  and the conventional sum of similarities $\greedyassignmentquality{\assignment}$ for each of the assignments.

The results in Table~\ref{table:syn_comp} show that in all five cases  \algo algorithm finds an assignment $\fairassignment$  with at least as much fairness as $\tpms$. At the same time, the max cost heuristic that we use in Step~\ref{Sstep:pick_flow} of Subroutine~\ref{alg:fair_subroutine} helps the average quality (total sum similarity) of the assignment $\fairassignment$ to be either close to or larger than average quality of both $\gargassignment$ and $\hardassignment$.

In Case~\ref{sin:niche}, the \algtpms algorithm sacrifices the quality of reviewers for non-mainstream papers, assigning them to weak reviewers. In contrast, all other algorithms assign four best possible reviewers to these unconventional papers in order to maintain fairness. In Case~\ref{sin:weak}, the \algo and \alghard algorithms assign one strong reviewer for each paper while \algtpmsdot, in attempt to maximize the value of its goal function, assigns strong reviewers according to their highest similarities which leads to an unfair assignment. The \alggarg algorithm fails to find a fair assignment in Cases~\ref{sin:weak} and~\ref{sin:strong}: the poor performance of \alggarg algorithm is caused by the fact that some of the reviewers in our examples have similarities close to maximal, making the value of $\transformation(\similarity) = \frac{1}{1 - \similarity}$ large, which, in turn, makes the approximation guarantee~\eqref{eqn:garg_bound} of \alggarg algorithm weak. In Case~\ref{sin:adverse}, the \algo algorithm was unable to recover the fair assignment. Instead, the assignment within approximation ratio $1/3$, which is a bit better than the worst case $1/\paperload = 1/4$ approximation, was discovered. Finally, in Case~\ref{sin:sparce}, the all algorithms managed to recover fair assignment. However, we note that the total sum similarity of the $\hardassignment$ assignment is low as compared to other algorithms. The reason is that the corresponding solution of the integer linear programming problem in the \alghard algorithm is optimized for the fairness towards the worst-off paper and does not try to continue optimization, once the assignment for that paper is fixed. In contrast, both \algo and \alggarg algorithms try to maximize the fate of the second worst-off paper, when the assignment for the most worst-off paper is fixed.  

\subsubsection{Statistical accuracy}

{
\begin{figure}
\centering
\begin{subfigure}[t]{.33\textwidth}%
  \centering
 \raisebox{0.5\height}{\includegraphics[width=.75\linewidth]{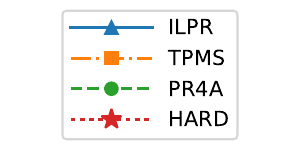}}
  \label{fig:legend}
\end{subfigure}%
\renewcommand*\thesubfigure{C\arabic{subfigure}}
\begin{subfigure}[t]{.33\textwidth}%
  \centering
  \includegraphics[width=0.95\linewidth]{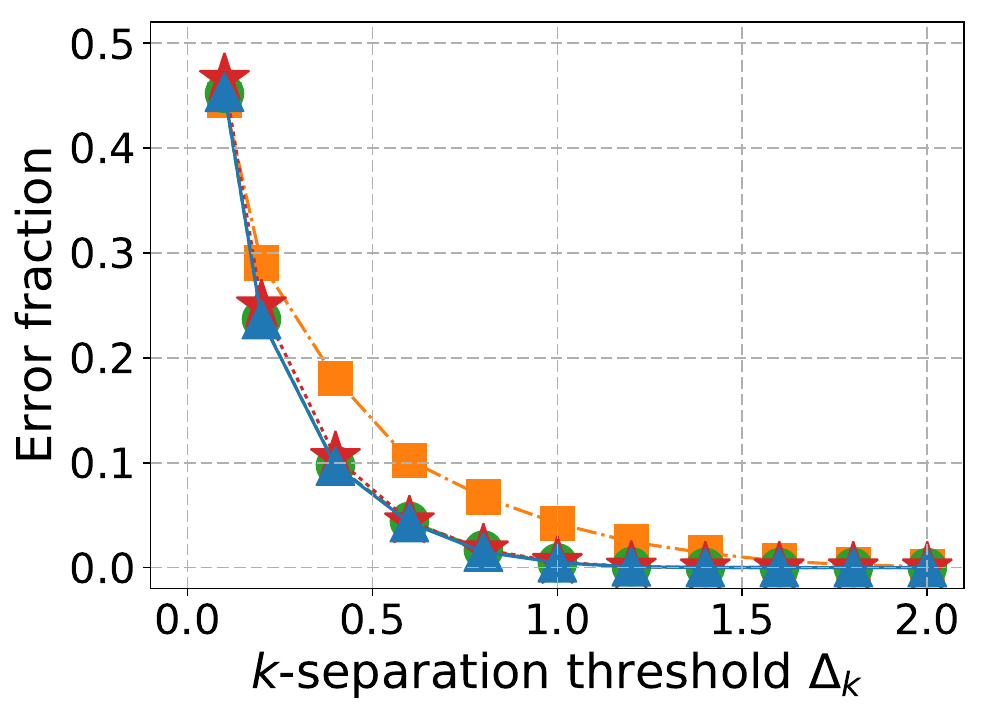}
  \caption{\casea}
  \label{fig:case1}
\end{subfigure}%
\begin{subfigure}[t]{.33\textwidth}%
  \centering
  \includegraphics[width=0.95\linewidth]{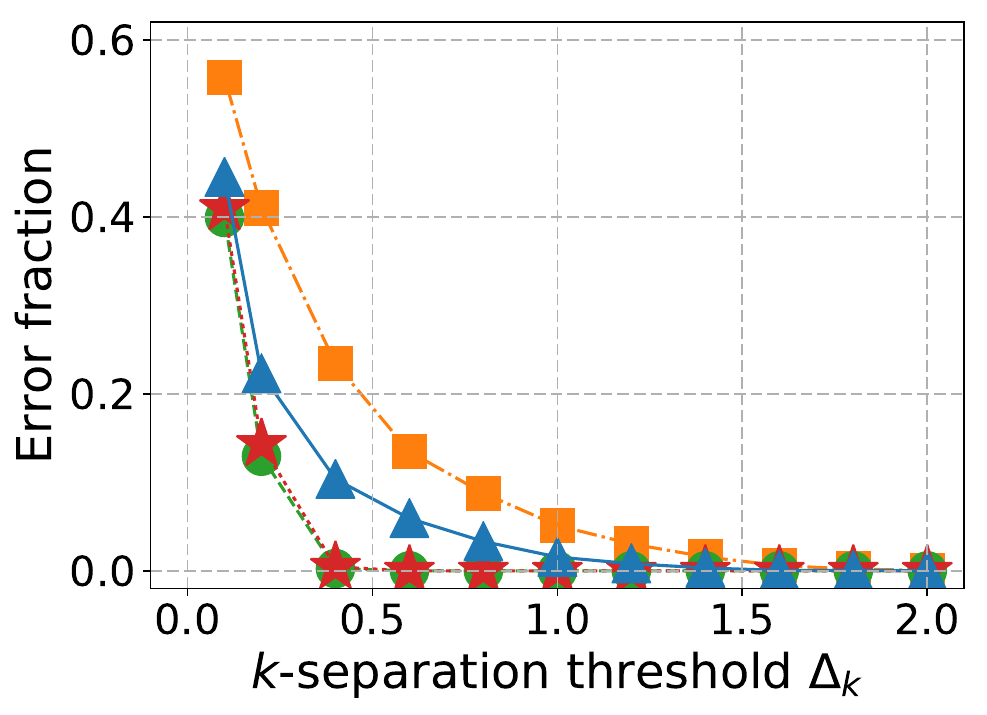}
  \caption{\caseb}
  \label{fig:case2}
\end{subfigure}%
\\\vspace{5pt}
\begin{subfigure}[t]{.33\textwidth}%
  \centering
  \includegraphics[width=0.95\linewidth]{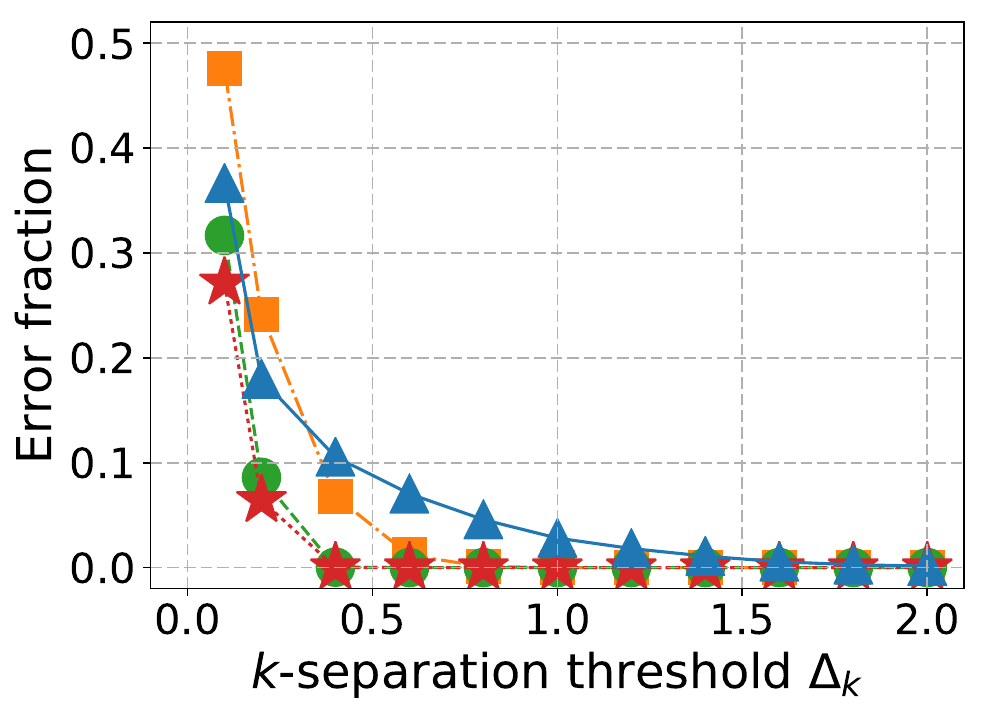}
  \caption{\casec}
  \label{fig:case3}
\end{subfigure}%
\begin{subfigure}[t]{.33\textwidth}%
  \centering
  \includegraphics[width=0.95\linewidth]{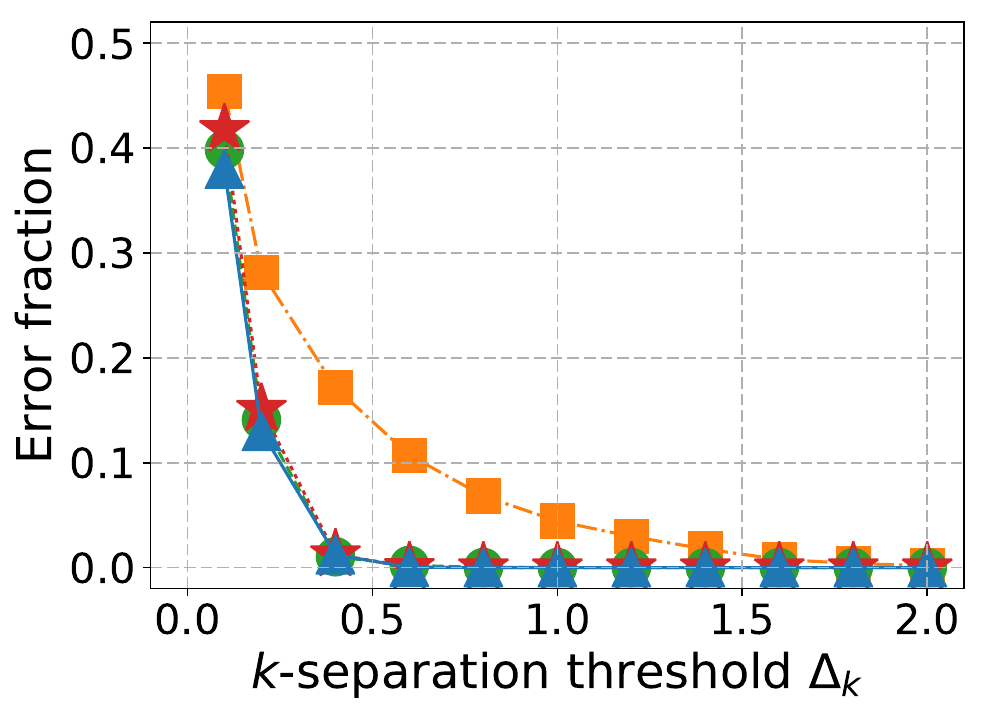}
  \caption{\cased}
  \label{fig:case4}
\end{subfigure}%
\begin{subfigure}[t]{.33\textwidth}%
  \centering
  \includegraphics[width=0.95\linewidth]{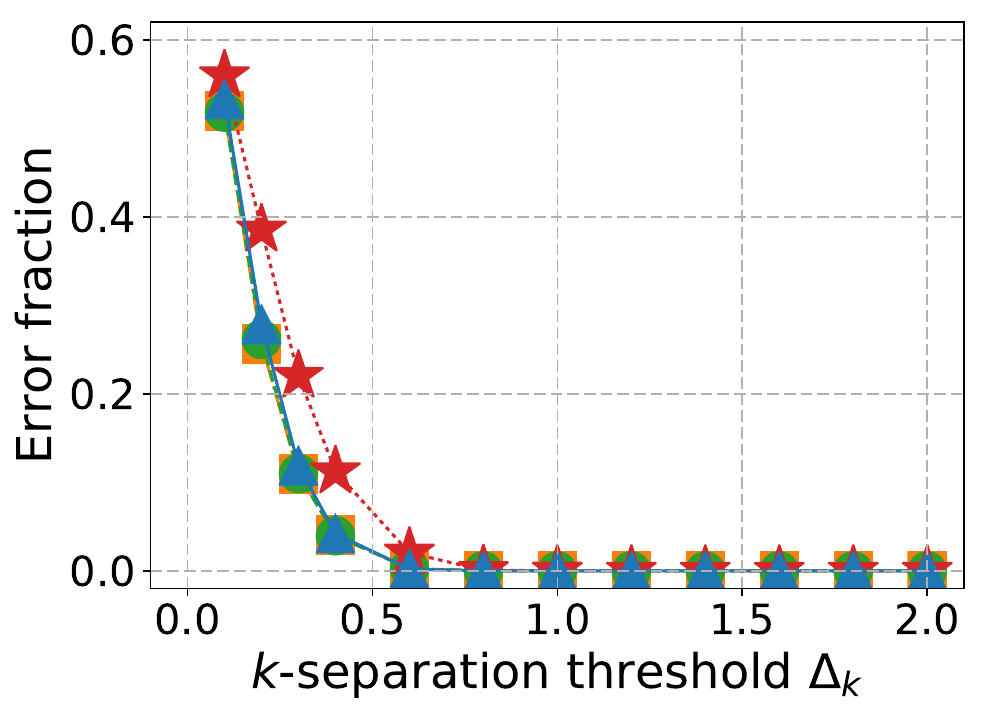}
  \caption{\casee}
  \label{fig:case5}
\end{subfigure}%
\caption{Fraction of papers incorrectly accepted by $\mle$ based on assignments produced by \algodot, \algharddot, \alggarg and \algtpms for different values of the separation threshold.}
\label{fig:error_rate}
\end{figure}
}

As we have pointed out, the main goal of the assignment procedure is to ensure the acceptance of the $\sizeofconf$ best papers $\truebest$. While in real conferences the acceptance process is complicated and involves discussions between reviewers and/or authors, here we consider a simplified scenario. Namely, we assume an objective-score model defined in Section~\ref{sec:obj_score_model} and reviewer model~\eqref{eqn:reviewer_model} with $\genvariance(\similarity) = 1 - \similarity$.

The experiment executes 1,000 iterations of the following procedure. We randomly choose $\sizeofconf = 20$ indices of the ``true best'' papers $\truebest = \left\{j_1, \ldots, j_{\sizeofconf}\right\} \subset [\numpapers]$. Each of these papers $j \in \truebest$ is assigned score $\truepaperquality_j = 1$, while for each of the remaining papers $j \in [\numpapers]\backslash\truebest$ we set $\truepaperquality_j = 1 - \threshold$, where $\threshold \in (0, 2]$. Next, given the similarity matrix $\simmatrix$, we compute assignments $\fairassignment$, $\hardassignment$, $\gargassignment$ and $\tpms$. For each of these assignments we compute the estimations of the set of top $\sizeofconf$ papers using the $\mle$ estimator and calculate the fraction of wrongly accepted papers.

For every similarity matrix $\simmatrix_{r}, r \in [5]$, and for every value of $\threshold \in \left\{0.1 k \ | k \in [20] \right\}$, we compute the mean of the obtained values over the 1,000 iterations. Figure~\ref{fig:error_rate} summarizes the dependence of the fraction of incorrectly accepted papers on the value of separation threshold $\threshold$ for all five cases~\ref{sin:niche}-\ref{sin:sparce}. 

The obtained results suggest that the increase in fairness of the assignment leads to an increase in the accuracy of the acceptance procedure, provided that the average sum similarity of the assignment does not decrease dramatically. The \algo algorithm significantly outperforms \algtpms both in terms of fairness and in terms of fraction of incorrectly accepted papers for the first four cases. The low fairness of assignments computed by \alggarg in Cases~\ref{sin:weak} and~\ref{sin:strong} lead to the large fraction of errors in the acceptance procedure. As we noted earlier, the \alggarg algorithm has weak approximation guarantees when the function $\transformation$ is allowed to be unbounded.  In section~\ref{sec:mturk} we will consider the mean score estimator ($\transformation(\similarity) = \similarity$) which is more suitable scenario for \alggarg algorithm. 

Interestingly, in Case~\ref{sin:adverse}, the \algo algorithm recovers sub-optimal assignment in terms of fairness, but still performs well in terms of the accuracy of the acceptance procedure. To understand this effect, for each of the assignments $\tpms, \hardassignment, \gargassignment$ and $\fairassignment$ we compute the sum similarity for all papers in the assignments and plot these values for $50$ the most worst-off papers in each of the assignment in Figure~\ref{fig:fairness}. Despite the inability of \algo  to find the fair assignment for the most worst-off paper, Corollary~\ref{corr:seq_deterministic} guarantees that sum similarities for the remaining papers will not be too far from the optimal, and we see this aspect in Figure~\ref{fig:fairness}\ref{sin:adverse}. As one can see, the sum similarity for all but tiny fraction of papers in $\fairassignment$ is large enough, thus ensuring the low fraction of incorrectly accepted papers. 
{
\begin{figure}
\centering
\renewcommand*\thesubfigure{C\arabic{subfigure}}
\begin{subfigure}[t]{.33\textwidth}
  \centering
  \setcounter{subfigure}{3}
  \raisebox{0.4\height}{\includegraphics[width=.75\linewidth]{legend.pdf}}
  \label{fig:legend2}
\end{subfigure}%
\begin{subfigure}[t]{.33\textwidth}
  \centering  \includegraphics[height=3.5cm]{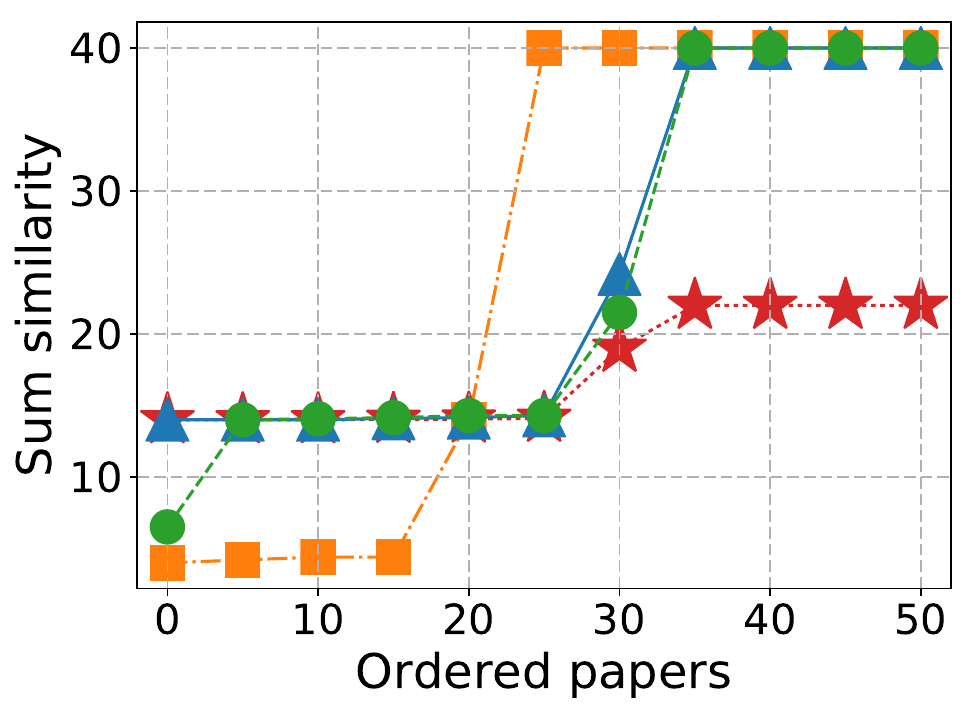}
  \caption{\cased}
  \label{fig:fairness_case4}
\end{subfigure}%
\begin{subfigure}[t]{.33\textwidth}
  \centering
  \includegraphics[height=3.5cm]{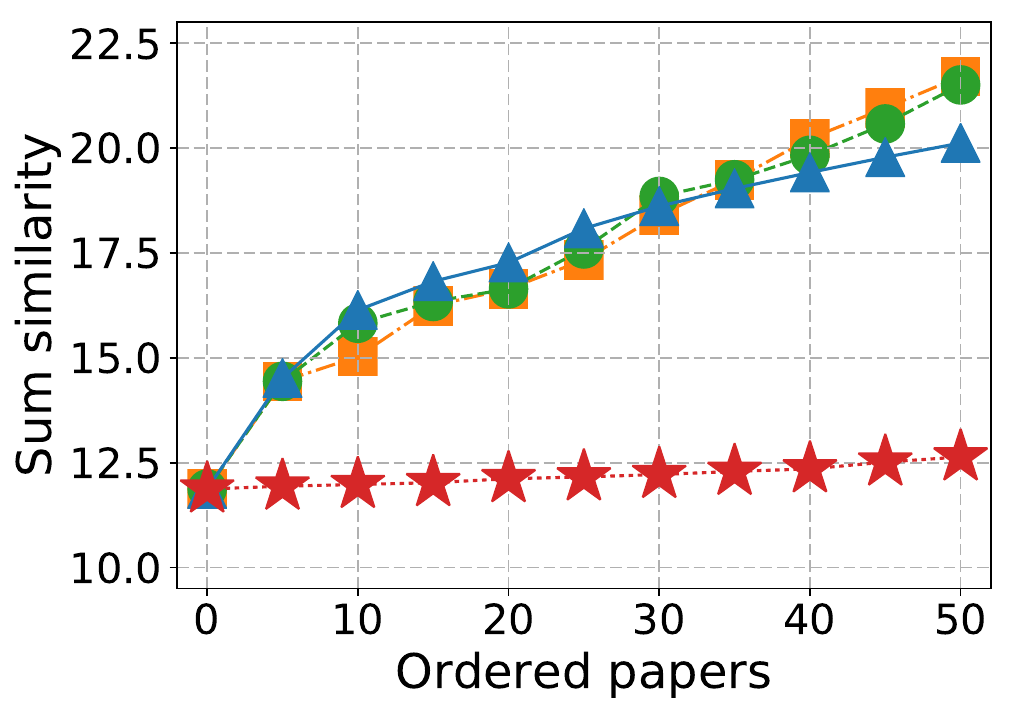}
  \caption{\casee}
  \label{fig:fairness_case5}
\end{subfigure}
\caption{Sum similarity for the 50 most worst-off papers in assignments produced by \algodot, \algharddot, \alggarg and \algtpmsdot.}
\label{fig:fairness}
\end{figure}
}

Finally, note that in Case~\ref{sin:sparce}, the \alghard algorithm, while having optimal fairness, has a lower accuracy as compared to other algorithms. As Figure~\ref{fig:fairness}\ref{sin:sparce} demonstrates, the \alghard algorithm does not optimize for the second worst off paper and recovers sub-optimal assignment for all but the most disadvantaged paper. In contrast, as Figure~\ref{fig:fairness}  suggests, the \alggarg and \algo algorithms do not stop their work after the most disadvantaged paper is satisfied, but instead continue to optimize the assignment for the remaining papers and eventually ensure not only fairness, but also high average quality of the assignment.

%%%%%%%%%%%%%%%%%%%%%%%%%%%%%%%%

\subsection{Experiment on the approximation of ICLR similarity matrix}
\label{subsection:iclr}

\newcommand{\iclr}{ICLR'18}
\newcommand{\conflicts}{C}

In absence of publicly available similarity matrices from conferences, we are unable to compare the assignment computed by the \algo algorithm to the actual conference assignment. To circumvent this issue, we use an approximate version of the similarity matrix from the International Conference on Learning Representations (\iclr{}) that was constructed by~\citet{xu2019strategyproofArxiv} and compare the performance of the \algodot{} and \algtpmsdot{} assignment algorithms on this matrix. 

\subsubsection{Matrix construction}

The similarity matrix we use for comparison was constructed by~\citet{xu2019strategyproofArxiv} as follows. OpenReview (\url{openreview.net}) --- increasingly popular conference management system --- maintains a public database of all papers (with author identities being visible) submitted to the \iclr{} conference, thereby giving access to the pool of submissions.  Next, it was assumed that all authors of submissions are simultaneously reviewers and that there are no additional reviewers. The publication profiles of reviewers were constructed by scraping the data from databases of scientific publications. Finally, the open-source code (\url{bitbucket.org/lcharlin/tpms/}) and the material of the original paper~\citep{charlin13tpms} were used to compute the similarity matrix according to the TPMS procedure.

The process outlined above resulted in the similarity matrix $\simmatrix$ that has $\numreviewers = 2435$ reviewers and $\numpapers = 911$ papers. Additionally, it was assumed that any reviewer has a conflict of interests with the submitted papers that she/he has authored; these conflicts are represented by a binary matrix $\conflicts$ whose $(i, j)^{\text{th}}$ entry equals $1$ if and only if reviewer $i$ has a conflict with paper $j$. Similarity matrix $\simmatrix$ possesses a considerable heterogeneity as demonstrated by some papers having mean similarity with non-conflicting reviewers almost four times larger than others.

The large size of the similarity matrix makes computation of the optimally fair assignment infeasible, and hence in this section we do not compute the \algharddot{} assignment. Additionally, our implementation of the \alggargdot{} assignment algorithms was computationally inefficient and in absence of the publicly available source code we also exclude this algorithm from comparison. 
\begin{table}[t]
\vskip 0.15in
\begin{center}
\begin{small}
\begin{sc}
\begin{tabular}{lcccr}
\toprule
     Algorithm     & Fairness $\assignmentquality{\assignment}$ & Mean sum of sim. $\greedyassignmentquality{\assignment}$   \\
\midrule
$\tpms$                & 0.12 & 0.413     \\
$\fairassignment_1$ (one iteration)     & 0.15 ($+25$\%)       & 0.408  ($-1$\%)        \\
$\fairassignment$ (full)     & 0.15   ($+25$\%)  & 0.406 ($-2$\%)       \\
\bottomrule
\end{tabular}
\end{sc}
\end{small}
\end{center}
\vskip -0.1in
\caption{Results of the experiment on the approximation of \iclr{} similarity matrix. Values in brackets represent relative changes as compared to the TPMS assignment.}
\label{table:resultsiclr}
\end{table}

\subsubsection{Evaluation}

Having defined the similarity matrix and matrix of conflicts, we compute assignments of papers to reviewers with $\paperload = 4$ (each paper is assigned to 4 reviewers) and $\maxrevload = 2$ (each reviewer is allocated at most 2 papers) using the \algtpmsdot{} and \algodot{} assignment algorithms with the identity transformation function $\transformation(\similarity) = \similarity$. In addition to the standard load constraints, we require that no paper is assigned to a conflicting reviewer.  Finally, as pointed out in Section~\ref{sec:allpapers_approx}, the fairness guarantees of Theorem~\ref{thm:deterministic} are achieved after the first iteration of Steps~\ref{Algostep:loopkappa} to~\ref{Algostep:enditer} of Algorithm~\ref{alg:fair_assignment}. Hence, we include the corresponding assignment for comparison and denote it as $\fairassignment_1$.

Table~\ref{table:resultsiclr} summarizes the results of the experiment, comparing the resulting assignments in terms of fairness~\eqref{eqn:fairness_criteria} and cumulative similarity~\eqref{eqn:unfair_criteria}. We see that the fairness of the assignment computed by the~\algodot{} algorithm is significantly higher than the fairness of the \algtpmsdot{} algorithm. Similar to the case of synthetic simulations, the max cost heuristic used in Step~\ref{Sstep:pick_flow} of Subroutine~\ref{alg:fair_subroutine} helps our algorithm to maintain a high value of cumulative similarity, which is only marginally below the optimal value. 

The large size of the similarity matrix at hands makes evaluation of the optimal fairness achieved by $\hardassignment$ computationally prohibitive. However, we can still find an upper bound on $\assignmentquality{\hardassignment}$ by dropping reviewer load constraints and allowing all reviewers to review unlimited number of papers. The resulting bound allows us to compute a lower bound on the approximation ratio of the \algodot{} algorithm: 
\begin{align*}
    \frac{\assignmentquality{\fairassignment}}{\assignmentquality{\hardassignment}} \ge 0.98,
\end{align*}
which shows that in practice the approximation factor of the \algodot{} algorithm can be much better than the worst-case approximation factor $\frac{1}{\paperload}$ guaranteed by Theorem~\ref{thm:deterministic}.

Continuing the analysis, for each of the assignments $\tpms$, $\fairassignment_1$ and $\fairassignment$ we compute the sum similarity for all papers in the assignments and plot these values for $100$ the most worst-off papers in each of the assignment in Figure~\ref{fig:worst100}. This figure demonstrates that while the fairness guarantees of Theorem~\ref{thm:deterministic} can be achieved by a single iteration of Steps~\ref{Algostep:loopkappa} to~\ref{Algostep:enditer}, subsequent iterations help to improve the assignment for the second worst-off paper and so on. Finally, for each of the assignments $\tpms$ and $\fairassignment$ we sort papers in order of increasing sum similarity of assigned reviewers and plot the ratios (\algodot{} to \algtpmsdot) of these sums in Figure~\ref{fig:ratio}.  Figure~\ref{fig:ratio} shows that the \algodot{} algorithm indeed balances the assignment by improving the quality for the worst-off papers at the expense of decreasing the quality for the most benefiting papers.   

{
\begin{figure}
\centering
\begin{subfigure}[t]{.48\textwidth}
  \centering  
  \includegraphics[height=5cm]{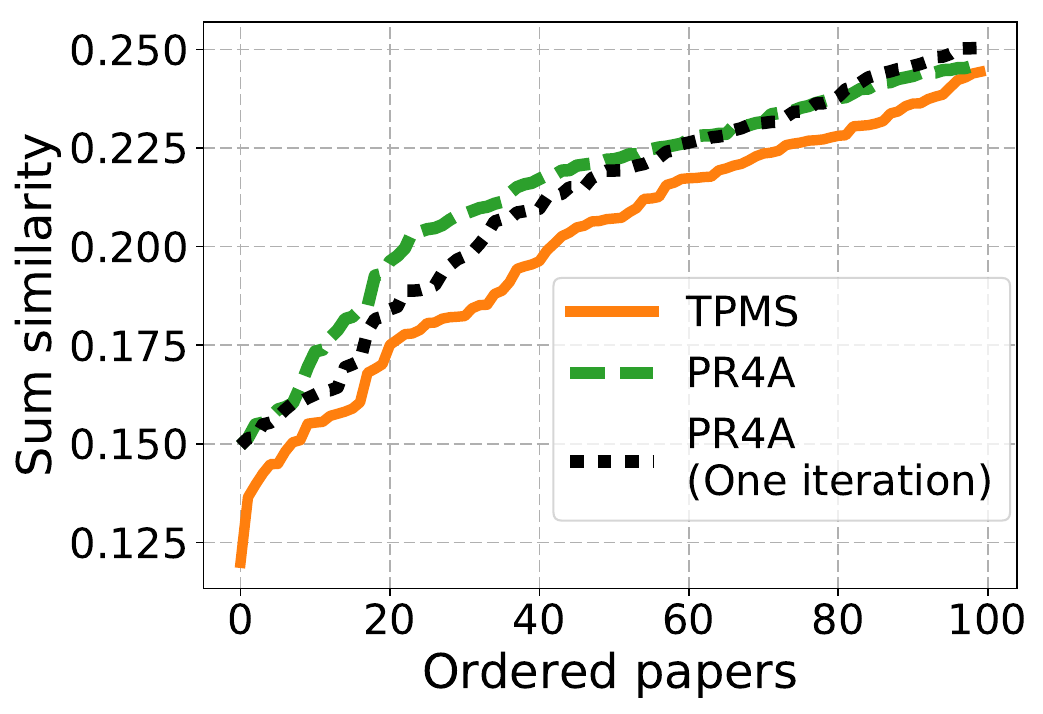}
  \caption{Sum similarity for 100 most disadvantaged papers in each assignment.}
  \label{fig:worst100}
\end{subfigure}%
\begin{subfigure}[t]{.48\textwidth}
  \centering
  \includegraphics[height=5cm]{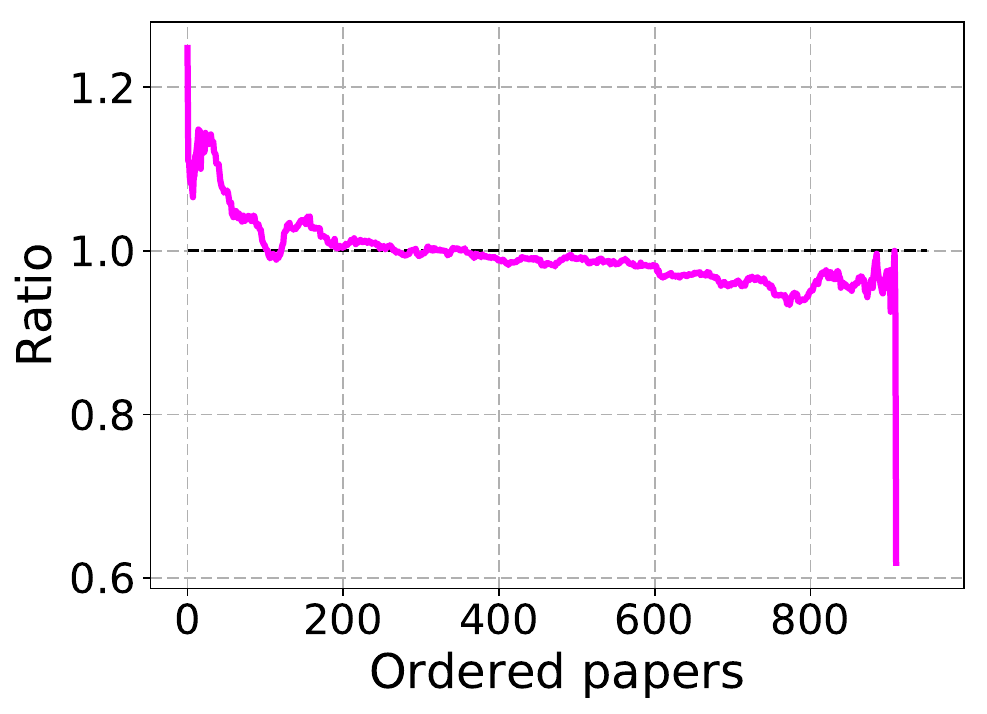}
  \caption{Ratio of ordered sum similarities in $\fairassignment$ to ordered sum similarities in $\tpms$.}
  \label{fig:ratio}
\end{subfigure}
\caption{Comparison on the approximation of \iclr{} similarity matrix.}
\label{fig:iclr}
\end{figure}
}

%%%%%%%%%%%%%%%%%%%%%%%%%%%%%%%%

\subsection{Experiment on MIDL and CVPR similarity matrices}
\label{sec:kdd}

\newcommand{\fairirassignment}{\assignment^{\text{\sc FairIr}}}
\newcommand{\fairflowassignment}{\assignment^{\text{\sc FairFlow}}}
\newcommand{\fflow}{{\sc FairFlow}}
\newcommand{\fir}{{\sc FairIr}}

Subsequent to the publication of the first version of this paper~\citep{stelmakh2019pr4a}, a follow-up paper by~\citet{kobren19localfairness} has been published. There authors propose two novel assignment algorithms that also aim at ensuring the fairness of the assignment. In that work, the \algodot{} algorithm with the identity transformation function $(\transformation(\similarity) = \similarity)$ was compared with other assignment algorithms on similarity matrices from three real conferences:  Medical Imaging and Deep Learning Conference (MIDL), and two editions of the Conference on Computer Vision and Pattern Recognition (CVPR'17 and CVPR'18). With the kind permission of Ari Kobren, we describe the results of their experiments in which our algorithm was evaluated. 

\subsubsection{Brief discussion of the algorithms by Kobren et al.} 

We begin with a brief theoretical comparison of the \algodot{} algorithm with the algorithms proposed by~\citet{kobren19localfairness}. Recall that the \algodot{} algorithm aims at optimizing fairness of the assignment~\eqref{eqn:fairness_criteria} and does not directly optimize for the total sum similarity. However, when in its inner workings the algorithm faces a choice between different suitable similarity matrices (Step~\ref{Sstep:pick_flow} of the Subroutine~\ref{alg:fair_subroutine}), it can heuristically optimize for the total sum similarity by using the  max cost heuristic. In contrast, \citet{kobren19localfairness} consider a problem of optimizing for the total sum similarity of the assignment \emph{with an additional constraint of each paper having the sum similarity larger than some threshold $T$}, which can be specified by user or found by the binary search. They design two novel algorithms which we refer to as \fir{} and \fflow{}.

Given a feasible instance of the reviewer assignment problem, the \fir{} algorithm is able to compute the assignment with the optimal value of the total sum similarity, violating the fairness constraints by an additive factor which is upper bounded by the maximum entry of the similarity matrix. In that, fairness guarantees of \fir{} are equivalent to those of \alggargdot{} (and hence may become vacuous when similarity matrix is significantly heterogeneous), but additionally the  \fir{} algorithm achieves the highest possible value of sum similarity.\footnote{Observe that this value is lower than those achieved by \algtpmsdot{} as \fir{} has additional constraint on the fairness of the assignment.} The \fflow{} algorithm is a heuristic which does not have theoretical guarantees, but in return has much lower computational complexity.

Another difference between \algodot{} and the algorithms proposed by~\citet{kobren19localfairness} is that both \fir{} and \fflow{} allow to specify a \emph{lower bound on reviewer load}, thereby ensuring that each reviewer reviews at least some number of papers. In our work, we do not study such constraints and \algodot{} does not support such constraints as is. Hence, below we report only those comparisons in which our algorithm was evaluated by~\citet{kobren19localfairness}, that is, the comparisons in which the lower bound on reviewer load was not enforced. 

Overall, the \fir{} and \fflow{} algorithms aim at balancing the fairness and the total sum similarity of the assignment. By choosing an appropriate heuristic in Step~\ref{Sstep:pick_flow} of the Subroutine~\ref{alg:fair_subroutine}, one can ensure that \algodot{} also heuristically optimizes for the total sum similarity. Let us now report the experimental results of~\citet{kobren19localfairness} that allows to compare the algorithms on both objectives of fairness and total sum similarity.

\subsubsection{Summary of the experiments} 

\begin{table}[t]
\vskip 0.15in
\begin{center}
\begin{small}
\begin{sc}
\begin{tabular}{lllccc}
\toprule
     Conference & Parameters & Algorithm & Time (s) &  Fairness $\assignmentquality{\assignment}$ & Mean sum of sim. $\greedyassignmentquality{\assignment}$  \\
\midrule
\multirow{4}{*}{MIDL}  &                       & $\tpms$                 & 0.1    & 0.90    & 1.71  \\
& $\numreviewers = 177, \ \numpapers = 118$    &$\fairassignment$        & 293.8  & 0.92    & 1.67  \\
& {$\paperload = 3, \ \maxrevload = 4$}        &$\fairirassignment$      & 1.6    & 0.93    & 1.71  \\
&                                              &$\fairflowassignment$    & 1.2    & 0.94    & 1.68  \\
\midrule
\multirow{4}{*}{CVPR'17} &                     & $\tpms$                 & 47     & 0      & 2.08  \\
&  $\numreviewers = 1373, \ \numpapers = 2623$ & $\fairassignment_1$  & 3251   & 0.77   & 1.96  \\
& {$\paperload = 3, \ \maxrevload = 6$}        & $\fairirassignment$     & 595    & 0.27   & 2.05  \\
&                                              & $\fairflowassignment$   & 225    & 0.77   & 1.69  \\
\midrule
\multirow{4}{*}{CVPR'18} 
 &                                              & $\tpms$                & 257    & 1.37   & 22.23  \\
& $\numreviewers = 2840, \ \numpapers = 5062$   & $\fairassignment_1$  & 8684   & 12.68  & 21.48  \\
& $\paperload = 3, \ \maxrevload = 9$           & $\fairirassignment$    & 3786   & 7.19   & 22.18  \\
&                                               & $\fairflowassignment$  & 910    & 11.12  & 17.98  \\
\bottomrule
\end{tabular}
\end{sc}
\end{small}
\end{center}
\vskip -0.1in
\caption{Results of the experiment conducted by Kobren et al. on similarity matrices from real conferences. On large instances only a single iteration of the \algodot{} algorithm was computed and the corresponding assignment is denoted $\fairassignment_1$.}
\label{table:kdd_paper}
\end{table}

The key summary statics of the~\citet{kobren19localfairness} experiments are represented in Table~\ref{table:kdd_paper}.\footnote{We omit some statistics which are not of direct interest (for example, max sum similarity in the assignment).} For each similarity matrix, the assignments respecting the corresponding paper and reviewer load constraints were computed by the \algtpmsdot{}, \algodot{}, \fir{} and \fflow{} algorithms. These assignments were then compared based on (a) running time of the algorithm, (b) fairness of the assignment and (c) mean sum similarity of the assignment. First, we notice that our naive implementation of the \algodot{} algorithm is significantly slower than all other algorithms, and for large instances only a single iteration of the algorithm can be computed in a reasonable time (recall that even one iteration is sufficient to satisfy the fairness guarantees of Theorem~\ref{thm:deterministic}). Nonetheless, even on the largest instance with more than 5,000 papers the running time of the first iteration of our algorithm took less than three hours which is still feasible given that the full assignment procedure needs to be run only once in the conference timeline.

The remaining two dimensions of comparison represent two notions of quality of the assignment: fairness and total sum similarity. Ideally, we would like to have an algorithm which simultaneously optimizes both of these notions. Figure~\ref{fig:2dcomp} visualizes the comparison of the algorithms and is constructed as follows. For each of the three experiments, we compute the maximum value of fairness achieved by any of the algorithms. Using this value, for each algorithm we compute its ``competetiveness'' as the fairness achieved by that algorithm divided by the maximum fairness. We then repeat the same for the total sum similarity. As a result, in each experiment the performance of each algorithm can be represented as a data point in two-dimensional space where x-axis represents the competitiveness in terms of fairness and y-axis represents the competitiveness in terms of the total sum similarity.

Figure~\ref{fig:2dcomp} demonstrates that in each of the three experiments the \algodot{} algorithm (even with one iteration) was able to achieve maximum or close-to-maximum values of both fairness and total sum similarity. In contrast, each of the other algorithms under consideration achieved considerably lower value of either fairness or total sum similarity in two out of three experiments. 

Overall, we conclude that while being considerably (but not prohibitively) slower than other algorithms, \algodot{} managed to achieve the best balance of fairness and total sum similarity, despite optimizing the latter objective only heuristically.

\begin{figure}
    \centering
    \includegraphics[width=0.5\textwidth]{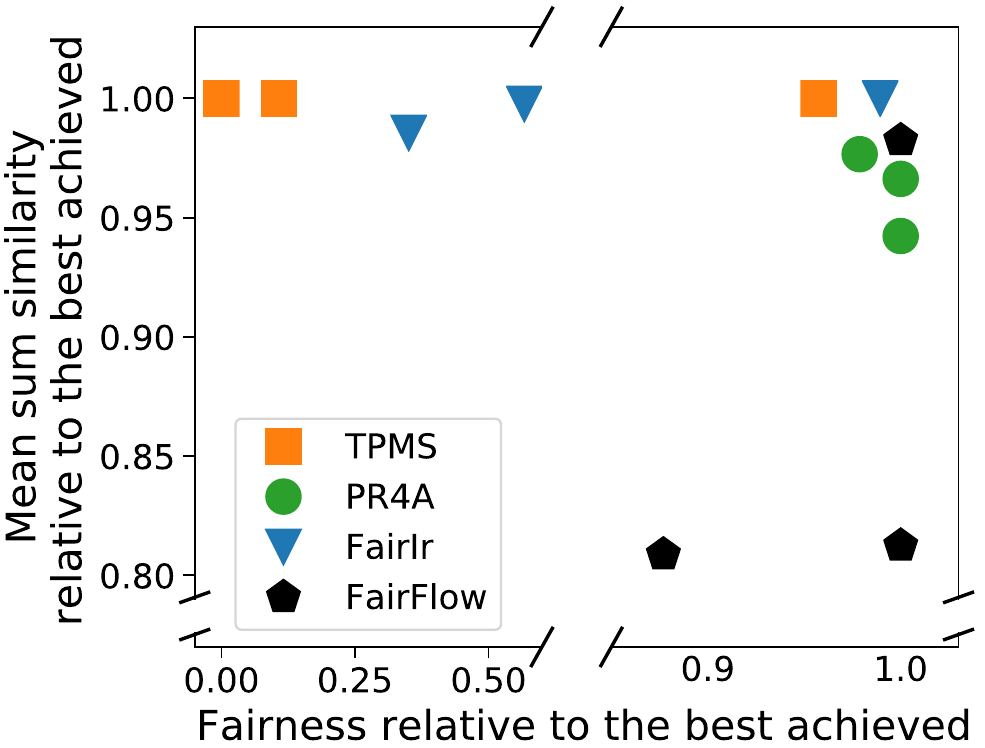}
    \caption{Visualization of comparison of the algorithms based on fairness and total sum similarity.}
    \label{fig:2dcomp}
\end{figure}

%%%%%%%%%%%%%%%%%%%%%%%%%%%%%%%%

\subsection{Experiment on Amazon Mechanical Turk}
\label{sec:mturk}
Even if peer-review data from conferences was available to us, it would not allow for an objective evaluation of any assignment algorithm with respect to accuracy of the acceptance procedure. There are two reasons for this hinderance: (a) No ground truth ranking is available; and (b) The data contains only reviews that correspond to one particular assignment and has missing reviews for other assignments. 

In this section we present an experiment which we carefully design to overcome the fundamental issues with objective empirical evaluations of reviewer assignments. Our experiment allows us to directly measure the accuracy of final decisions to evaluate any assignment.  We execute our experiment on the Amazon Mechanical Turk (\url{mturk.com}) crowdsourcing platform. 

%%%%%%%%%%%%%%%%%%%%%%%%%%%%%%%%

\subsubsection{Design of experiment}

We designed the experiment in a manner that allows us to objectively evaluate the performance of any assignment algorithm. Specifically, the experiment should provide us access to some similarities between reviewers and papers, execute any assignment algorithm, and eventually objectively evaluate the final outcome. 

The experiment considers crowdsourcing workers as reviewers and a number of general knowledge questions as papers. Specifically, 80 workers were recruited and presented with a list of 60 flags of different countries. The workers were asked to determine the country of each flag, choosing one of five options for each question. The interface of the task is represented in Figure~\ref{fig:interface}. Unknown to the worker, the 60 countries comprised 10 countries each from 6 different geographic regions. Three participants did not attempt some of the questions and their responses were discarded from the dataset. The dataset is available on the first author's website.

%%%%%%%%%%%%%%%%%%%%%%%%%%%%%%%%

\subsubsection{Evaluation}

After obtaining the data from Amazon Mechanical Turk, we executed the following procedure for 1,000 iterations. In each of the 6 regions, we first split the 10 questions into two sets: a ``gold standard'' set of 8 questions chosen uniformly at random and an ``unresolved'' set comprising the 2 remaining questions. The set of all 12 unresolved questions are analogous to papers in the peer-review setting ($\numpapers=12$). We computed the similarity of any worker to any paper (question) as the fraction of questions that the worker answered correctly among the 8 gold standard questions for the region corresponding to that paper (question). Having computed the similarities, we selected $\numreviewers=40$ of the workers uniformly at random and created five assignments $\tpms, \fairassignment$, $\gargassignment$, $\hardassignment$ and $\rand$, with identity transformation function $\transformation(\similarity) = \similarity$,  where $\rand$ is a random feasible assignment. In each of these assignments, every question was answered by $\paperload = 3$ workers and every worker answered at most $\maxrevload = 2$ questions. Finally, for each assignment, we computed the answers for the remaining $\numpapers = 12$ questions by taking a majority vote of the responses from workers assigned to each question. Ties are also considered as mistakes.

At the end of all iterations, we computed the fraction of questions whose final answers are estimated incorrectly under the five assignments as well as the mean fairness $\assignmentquality{\assignment}$ and conventional sum of similarities $\greedyassignmentquality{\assignment}$. We summarize the results in Table~\ref{table:results}. We see that all non-trivial algorithms significantly outperform random assignment. However, $\tpms$ incurs about $8\%$ increased error as compared to $\fairassignment$.

Similar to Case~\ref{sin:sparce} of synthetic experiments, the optimally fair assignment $\hardassignment$ turns out to incur larger fraction of errors as compared to approximations $\fairassignment$ and $\gargassignment$. The reason is that the assignment $\hardassignment$ maximizes the quality of the assignment with respect to the most ``disadvantaged'' question, but in contrast to $\fairassignment$ and $\gargassignment$, does not care about the fate of remaining questions. 

We also see that $\fairassignment$ slightly outperforms $\gargassignment$ in terms of the fraction of errors while having slightly smaller average fairness. One reason for this is that in parallel with $\assignmentquality{\fairassignment}$ being close to optimal, \algo algorithm managed to achieve the high value of conventional sum of similarities, thus maintaining a balance between the fairness $\assignmentquality{\assignment}$ and the global objective $\greedyassignmentquality{\assignment}$.

We find these observations to be of notable interest for the actual conference peer-review scenarios. The task of identifying flags in the experiment involved a rather homogeneous set of similarities (in the sense that each worker  either knew many or only few flags) where optimizing~\eqref{eqn:unfair_criteria} or~\eqref{eqn:fairness_criteria} would yield similar results. In contrast, the significantly higher heterogeneity in peer-review, the presence of many non-mainstream papers as well as both very strong and very weak reviewers, is expected to further amplify the observed improvements offered by the \algo algorithm as compared to \algtpms and \alggargdot. 

\begin{figure}
    \centering
    \includegraphics[width=0.5\textwidth]{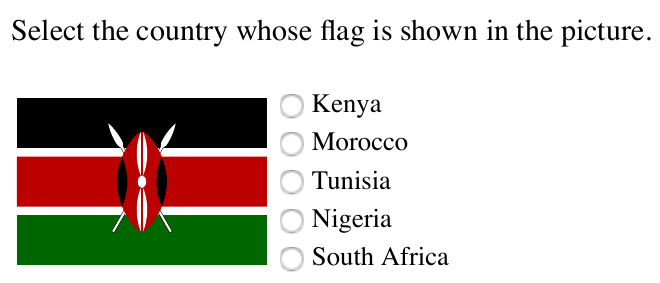}
    \caption{Question interface}
        \label{fig:interface}
\end{figure}

\begin{table}[t]
\vskip 0.15in
\begin{center}
\begin{small}
\begin{sc}
\begin{tabular}{lcccr}
\toprule
     Algorithm     & Error fraction & Error increase & Mean fairness $\assignmentquality{\assignment}$ & Mean sum of sim.  $\greedyassignmentquality{\assignment}$ \\
\midrule
$\rand$                & 0.394       & +275\%    &  6.4 & 171.1  \\
$\tpms$                & 0.113       & +8\%      & 20.8 & 274.6  \\
$\hardassignment$      & 0.110       & +5\%      & 21.9 & 269.8  \\
$\gargassignment$      & 0.108       & +3\%      & 21.7 & 270.4  \\
$\fairassignment$      & 0.105       & ---       & 21.6 & 272.9  \\
\bottomrule
\end{tabular}
\end{sc}
\end{small}
\end{center}
\vskip -0.1in
\caption{Results of the experiment on Amazon Mechanical Turk.}
\label{table:results}
\end{table}

%%%%%%%%%%%%%%%%%%%%
\newpage
\section{Proofs}
\label{sec:proofs}

We now present the proofs of our main results.

\subsection{Proof of Theorem~\ref{thm:deterministic}}

We prove the result in three steps. First, we establish a lower bound on the fairness of the \algo algorithm. Then we establish an upper bound on the fairness of the optimal assignment. Finally, we combine these bounds to obtain the result~\eqref{eqn:deterministic}.

\bigskip

\noindent
\textbf{Lower bound for the PeerReview4All algorithm.}

\noindent
We show a lower bound for the intermediate assignment $\tmpassignment$ at Step~\ref{Algostep:select_candidate} during the first iteration of Steps~\ref{Algostep:loopkappa} to~\ref{Algostep:enditer}. We denote this particular assignment as $\tmpassignment_1$. Note that in Step~\ref{Algostep:assignweakest} we fix the assignment for $\tmpassignment_1$'s worst-off papers into the final output, and hence we have $\assignmentquality[\transformation]{\tmpassignment_1} \ge \assignmentquality[\transformation]{\fairassignment_{\transformation}}$. On the other hand, by keeping track of $\assignment_0$ (Step~\ref{Algostep:enditer}), we ensure that in all of the subsequent iterations of Steps~\ref{Algostep:loopkappa} to~\ref{Algostep:enditer}, the temporary assignment $\tmpassignment$ will be at least as fair as $\tmpassignment_1$, which implies $\assignmentquality[\transformation]{\tmpassignment_1} = \assignmentquality[\transformation]{\fairassignment_{\transformation}}$. 

Getting back to the first iteration of Steps~\ref{Algostep:loopkappa} to~\ref{Algostep:enditer}, we note that when Step~\ref{Algostep:loopkappa} is completed, we have $\paperload$ assignments $\assignment_1, \ldots, \assignment_{\paperload}$ as candidates. Notice that for every $\tmpcapacity \in [\paperload]$, assignment $\assignment_{\tmpcapacity}$ is constructed with a two-step procedure by joining the outputs $\assignment_{\tmpcapacity}^1$ and $\assignment_{\tmpcapacity}^2$ of Subroutine~\ref{alg:fair_subroutine}. Recalling the definition~\eqref{eqn:criticalsim} of $\criticalsim_{\tmpcapacity}$, we now show that for every value of $\tmpcapacity \in [\paperload]$, the assignment $\assignment_{\tmpcapacity}^1$ satisfies:
\begin{align*}
	\min\limits_{j \in [\numpapers]} \min\limits_{i \in \reviewerset{\assignment_{\tmpcapacity}^{1}}(j)} \similarity_{ij} = \criticalsim_{\tmpcapacity}.	
\end{align*}

Consider any value of $\tmpcapacity \in [\paperload]$. The definition of $\criticalsim_{\tmpcapacity}$ ensures that there exist an assignment, say $\assignment^{\ast}$, which assigns $\tmpcapacity$ reviewers to each paper in a way that minimum similarity in this assignment equals $\criticalsim_{\tmpcapacity}$. Now note that Subroutine~\ref{alg:fair_subroutine},  called in Step~\ref{Algostep:subroutine_call} of the algorithm, adds edges to the flow network in order of decreasing similarities. Thus, at the time all edges with similarity higher or equal to $\criticalsim_{\tmpcapacity}$ are added, we have that no edges with similarity smaller that $\criticalsim_{\tmpcapacity}$ are added, and that all edges which correspond to the assignment $\assignment^{\ast}$ are also added to the network. Thus, a maximum flow of size $\numpapers \tmpcapacity$ is achieved and hence each assigned (reviewer, paper) pair has similarity at least $\criticalsim_{\tmpcapacity}$.  

Recalling that $\criticalsim_{\infty}$ is the lowest similarity in similarity matrix $\simmatrix$, one can deduce that $\assignmentquality[\transformation]{\assignment_{\tmpcapacity}} \ge \tmpcapacity \transformation(\criticalsim_{\tmpcapacity}) + \left(\paperload - \tmpcapacity \right) \transformation(\criticalsim_{\infty})$ due to the monotonicity of $\transformation$. Consequently, we have
\begin{align}
\label{eqn:pr4a_lower}
    \assignmentquality[\transformation]{\fairassignment_{\transformation}} \ge \assignmentquality[\transformation]{\assignment_{\tmpcapacity}} \ge \tmpcapacity \transformation(\criticalsim_{\tmpcapacity}) + \left(\paperload - \tmpcapacity \right) \transformation(\criticalsim_{\infty}),
\end{align} for all $\tmpcapacity \in [\paperload]$. Taking a maximum over all values of $\tmpcapacity \in [\paperload]$ concludes the proof.

\bigskip
\noindent
{\bf Upper bound for the optimal assignment $\hardassignment_{\transformation}$.}

\noindent
		Consider any value of $\tmpcapacity \in [\paperload]$.
		By definition~\eqref{eqn:criticalsim} of $\criticalsim_{\tmpcapacity}$, for any feasible assignment $\assignment \in \assignmentfamily$, there exists some paper $j_{\tmpcapacity}^{\ast} \in [\numpapers]$ for which at most $(\tmpcapacity - 1)$ reviewers have similarity strictly greater than $\criticalsim_{\tmpcapacity}$.
	 Let us now consider assignment $\hardassignment_{\transformation}$ and corresponding paper $j_{\tmpcapacity}^{\ast}$. This paper is assigned to at most $(\tmpcapacity - 1)$ reviewers with similarity greater than $\criticalsim_{\tmpcapacity}$ and to at least $(\paperload - \tmpcapacity + 1)$ reviewers with similarity smaller or equal to $\criticalsim_{\tmpcapacity}$. Recalling that $\criticalsim_0$ is the largest possible similarity, we conclude that due to monotonicity of $\transformation$, the following upper bound holds:
	\begin{align}
	\label{eqn:hard_upper}
			\assignmentquality[\transformation]{\hardassignment_{\transformation}} & = \min\limits_{j \in [\numpapers]} \slim_{i \in \reviewerset{\hardassignment_{\transformation}}(j)} \transformation{(\similarity_{ij}}) \le  \slim_{i \in \reviewerset{\hardassignment_{\transformation}}(j_{\tmpcapacity}^{\ast})} \transformation{(\similarity_{ij_{\tmpcapacity}^{\ast}}}) \le \left(\tmpcapacity - 1 \right)\transformation(\criticalsim_{0}) + \left(\paperload - \tmpcapacity + 1 \right)\transformation(\criticalsim_{\tmpcapacity}).
		\end{align}
	Taking a minimum over all values of $\tmpcapacity \in [\paperload]$, then yields an upper bound on the fairness of $\hardassignment_{\transformation}$.
	
\bigskip
\noindent
{\bf Putting it together.} \\
	To conclude the argument, it remains to plug in the obtained bounds~\eqref{eqn:pr4a_lower} and~\eqref{eqn:hard_upper} into ratio $\frac{\assignmentquality[\transformation]{\fairassignment_{\transformation}}}{\assignmentquality[\transformation]{\hardassignment_{\transformation}}}$:
	\begin{align*}
		\frac{\assignmentquality[\transformation]{\fairassignment_{\transformation}}}{\assignmentquality[\transformation]{\hardassignment_{\transformation}}} \ge \frac{\max\limits_{\tmpcapacity \in [\paperload]} \Big( \tmpcapacity \transformation(\criticalsim_{\tmpcapacity}) + \left(\paperload - \tmpcapacity \right) \transformation(\criticalsim_{\infty})\Big) }{\min\limits_{\tmpcapacity \in [\paperload]} \Big((\tmpcapacity - 1)\transformation(\criticalsim_0) + \left(\paperload - \tmpcapacity + 1 \right) \transformation(\criticalsim_{\tmpcapacity}) \Big) }.
	\end{align*}

	Setting $\tmpcapacity = 1$ in both numerator and denominator and recalling that $\transformation(\similarity) \ge 0 \ \forall \similarity \in [0, 1]$, we obtain a worst-case approximation in terms of required paper load: $\frac{\assignmentquality{\fairassignment}}{\assignmentquality{\hardassignment}} \ge \frac{1}{\paperload}$.

%%%%%%%%%%%%%%%%%%%%%%%%%%%%%%%%%%%%%%%%%%%%%%%%%%%%%%%%%%%

\subsection{Proof of Corollary~\ref{corr:seq_deterministic}}
Let us pause the \algo algorithm at the beginning of the $\currentiter^{\text{th}}$ iteration of Steps~\ref{Algostep:loopkappa} to~\ref{Algostep:enditer} and inspect its state. 

\begin{itemize}[noitemsep]
    \item The set $\tobeassigned$ consists of papers that are not yet assigned: 
    \begin{align*}
        \tobeassigned = [\numpapers] \backslash \left(\bigcup_{l = 1}^{\currentiter-1} \paperset_l \right).
    \end{align*}
    
    \item The vector of reviewers' loads $\vectmaxrevload$ is adjusted with respect to assigned papers. For every reviewer $i \in [\numreviewers]$, we have: 
    \begin{align*}
        \vectmaxrevload_i = \maxrevload - \card \left( \left\{j \in \bigcup_{l = 1}^{\currentiter-1} \paperset_l  \Big | i \in \reviewerset{\fairassignment_{\transformation}} (j) \right\} \right).
    \end{align*}
    
    \item The similarity matrix $\simmatrix_{\currentiter}$ consists of columns of the initial similarity matrix $\simmatrix$ which correspond to papers in $\tobeassigned$.
\end{itemize}

The only thing that connects the algorithm with the previous iterations is the assignment $\assignment_0$, computed in Step~\ref{Algostep:enditer} of the previous iteration. However, we note that the sum similarity for the worst-off papers, determined in Step~\ref{Algostep:assignweakest} of the current iteration (in other words, fairness of $\tmpassignment_{\currentiter}$ ), is lower-bounded by the largest fairness of the candidate assignments
$\assignment_1, \ldots, \assignment_{\paperload}$, which are computed in Step~\ref{Algostep:loopkappa}.

We now repeat the proof of Theorem~\ref{thm:deterministic} with the following changes. Instead of the similarity matrix $\simmatrix$, we use the updated matrix $\simmatrix_{\currentiter}$; instead of considering all papers $\numpapers$ we consider only papers from $\tobeassigned$; instead of assuming that each reviewer $i \in [\numreviewers]$ can review at most $\maxrevload$ papers, we allow reviewer $i \in [\numreviewers]$ to review at most $\vectmaxrevload_i$ papers. Hence, we arrive to the bound~\eqref{eqn:deterministic} on the fairness of $\tmpassignment_{\currentiter}$, where $\hardassignment$ should be read as $\hardassignment\left(\tobeassigned \right) = \hardassignment\left(\paperset_{\{\currentiter:\totaliter\}} \right)$ and values $\criticalsim_{\tmpcapacity}, \tmpcapacity \in \{0, \ldots, \paperload \} \cup \{\infty\}$ are computed for similarity matrix $\simmatrix_r$ and constraints on reviewers' loads $\vectmaxrevload$. Thus, we obtain~\eqref{eqn:seq_deterministic} and conclude the proof of the corollary.

%%%%%%%%%%%%%%%%%%%%%%%%%%%%%%%%%%%%%%%%%%%%%%%%%%%%%%%%%%%

\subsection{Proof of Theorem~\ref{thm:main_theorem}}

Before we prove the theorem, let us formulate an auxiliary lemma which will help us show the claimed upper bound. We give the proof of this lemma subsequently in Section~\ref{sec:prof_of_lemma}.

\begin{lemma}
\label{lemma:upper_bound}
Consider any valid assignment $\assignment \in \assignmentfamily$ and any estimator $\estimator \in \left\{ \mle, \avgest \right\} $. Then for every $\distance > 0$, the error incurred by $\estimator$ is upper bounded as 
\begin{align*}
	\sup\limits_{\left(\truepaperquality_1, \ldots, \truepaperquality_{\numpapers} \right) \in \problemfamily(\distance)}& \prob{\accepted_{\sizeofconf}\left( \assignment, \estimator \right) \ne \truebest}  \le \sizeofconf (\numpapers -  \sizeofconf)   \exponent{-\left(\frac{\distance}{2 \criticalstd(\assignment, \estimator) } \right)^2},	
\end{align*}
where
\begin{align*}
	\criticalstd^2(\assignment, \estimator) = 	
	\begin{cases}
		\max\limits_{j \in [\numpapers]}  \left(\slim_{i \in \reviewerset{\assignment}(j)} \frac{1}{\std_{ij}^2}\right)^{-1} & \text{ if } \estimator = \mle \\
		\max\limits_{j \in [\numpapers]}  \left( \frac{1}{\paperload^2} \slim_{i \in \reviewerset{\assignment}(j)} \std_{ij}^2 \right)  & \text{ if } \estimator = \avgest. \\ 
	\end{cases}
\end{align*}
\end{lemma}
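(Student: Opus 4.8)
The plan is to reduce the event $\{\accepted_{\sizeofconf}(\assignment, \estimator) \ne \truebest\}$ to a union of pairwise comparison events and then control each one with a Gaussian tail bound. First I would observe that both estimators $\mle$ and $\avgest$ produce, for each paper $j$, an estimate $\estpaperquality_j$ that is normally distributed with mean exactly $\truepaperquality_j$ (the estimators are unbiased under~\eqref{eqn:reviewer_model}), and with variance $\variance{\estpaperquality_j}$ that equals $\left(\slim_{i \in \reviewerset{\assignment}(j)} \std_{ij}^{-2}\right)^{-1}$ for the MLE and $\frac{1}{\paperload^2}\slim_{i \in \reviewerset{\assignment}(j)} \std_{ij}^2$ for the mean. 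Moreover, since the assignment $\assignment$ assigns disjoint multisets of scores to different papers and the $\scoregiven_{ij}$ are mutually independent, the estimates $\{\estpaperquality_j\}_{j \in [\numpapers]}$ are mutually independent. Crucially, $\criticalstd^2(\assignment, \estimator)$ is exactly $\max_{j} \variance{\estpaperquality_j}$.

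Next I would note that the estimator $\accepted_{\sizeofconf}$ selects the $\sizeofconf$ papers with the largest values of $\estpaperquality$. Hence $\accepted_{\sizeofconf}(\assignment,\estimator) = \truebest$ fails only if some paper in $\truebest$ receives an estimate no larger than some paper outside $\truebest$; that is,
\begin{align*}
	\left\{\accepted_{\sizeofconf}(\assignment, \estimator) \ne \truebest \right\} \subseteq \bigcup_{a \in \truebest} \bigcup_{b \notin \truebest} \left\{ \estpaperquality_a \le \estpaperquality_b \right\}.
\end{align*}
For a fixed pair $(a,b)$ with $a \in \truebest$ and $b \notin \truebest$, the difference $\estpaperquality_a - \estpaperquality_b$ is Gaussian with mean $\truepaperquality_a - \truepaperquality_b \ge \truepaperquality_{(\sizeofconf)} - \truepaperquality_{(\sizeofconf+1)} \ge \distance$ (using membership in $\problemfamily(\distance)$) and variance $\variance{\estpaperquality_a} + \variance{\estpaperquality_b} \le 2\criticalstd^2(\assignment,\estimator)$. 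A standard Gaussian tail bound then gives
\begin{align*}
	\prob{\estpaperquality_a \le \estpaperquality_b} \le \exponent{-\frac{\distance^2}{2\left(\variance{\estpaperquality_a} + \variance{\estpaperquality_b}\right)}} \le \exponent{-\frac{\distance^2}{4 \criticalstd^2(\assignment,\estimator)}} = \exponent{-\left(\frac{\distance}{2\criticalstd(\assignment,\estimator)}\right)^2}.
\end{align*}
Applying a union bound over the $\sizeofconf(\numpapers - \sizeofconf)$ pairs and taking the supremum over $\problemfamily(\distance)$ (the bound above already holds uniformly, since the mean gap is only used through the inequality $\ge \distance$) yields the claimed inequality.

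I do not anticipate a serious obstacle here; the only points requiring mild care are (i) verifying that the selection event decomposes exactly as the stated union — one should argue that if every paper of $\truebest$ strictly beats every paper outside it then the top-$\sizeofconf$ set is precisely $\truebest$, modulo ties which occur with probability zero for these continuous distributions — and (ii) correctly bounding the variance of the difference by $2\criticalstd^2$, which uses independence of $\estpaperquality_a$ and $\estpaperquality_b$ together with the definition of $\criticalstd^2$ as the maximum per-paper variance. The Gaussian tail step is elementary. The one thing worth double-checking is that the MLE variance formula in~\eqref{eqn:mle_score} and the mean variance formula in~\eqref{eqn:avg_score} indeed coincide with the two cases in the definition of $\criticalstd^2$, which they do by inspection.
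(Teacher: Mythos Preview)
Your proposal is correct and matches the paper's proof essentially step for step: the paper also writes the per-paper estimate as a Gaussian with the stated variance, bounds $\prob{\estpaperquality_{j_1}\le\estpaperquality_{j_2}}$ for $j_1\in\truebest$, $j_2\notin\truebest$ via the sub-Gaussian/Hoeffding tail bound on the difference (yielding the same exponent $-(\distance/2\criticalstd)^2$), and finishes with the union bound over the $\sizeofconf(\numpapers-\sizeofconf)$ pairs.
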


\subsubsection{Proof of upper bound}
\label{sec:main_thm_upper}
\newcommand{\mleassignment}{\assignment_{\text{MLE}}}
\newcommand{\avgassignment}{\assignment_{\text{MEAN}}}

First, recall from~\eqref{eqn:avg_score} the distribution of $\avgest_j, j\in [\numpapers]$. Then the \algo algorithm called with $\transformation = \transfavg$  simultaneously tries to maximize the fairness of the assignment with respect to $\transformation$ and minimize the maximum variance of the estimated scores $\avgest_{j}, j \in [\numpapers]$. Similarly, the choice of $\transformation = \transfmle$ ensures that together with optimizing the corresponding fairness, the algorithm also minimizes the maximum variance of $\mle_{j}, j \in [\numpapers]$, defined in~\eqref{eqn:mle_score}. Thus, the choice of the estimator defines the choice of the transformation function $\transformation$ which minimizes the maximum variance of the estimated scores. To maintain brevity, we denote  $\avgassignment = \fairassignment_{\transfavg}$, $\mleassignment = \fairassignment_{\transfmle}$, $\avgassignment(j) = \reviewerset{\avgassignment}(j)$ and $\mleassignment(j) = \reviewerset{\mleassignment}(j)$. 

Let now $\simmatrix \in \goodavgassignment(\highqualitycrowd)$. We begin with the pair of assignment and estimator $ \left( \avgassignment, \avgest \right)$. Notice that for arbitrary feasible assignment $\assignment \in \assignmentfamily$ and estimator $\avgest$,
\begin{align*}
	 \criticalstd^2(\assignment, \avgest) & =  \max\limits_{j \in [\numpapers]} \left(\frac{1}{\paperload^2} \slim_{i \in \reviewerset{\assignment}(j)} \std_{ij}^2 \right) = \frac{1}{\paperload^2} \max\limits_{j \in [\numpapers]} \left({\slim_{i \in \reviewerset{\assignment}(j)}  1 - \left(1 -\genvariance(\similarity_{ij}) \right)}\right) \\ & =  \frac{1}{\paperload^2} \left(\paperload - \min\limits_{j \in [\numpapers]} \slim_{i \in \reviewerset{\assignment}(j)} \left(\transfavg(\similarity_{ij})\right) \right)	= \frac{1}{\paperload^2} \left(\paperload - \assignmentquality[\transfavg]{\assignment} \right).	
\end{align*}
Now we can write
\begin{align*}
	\sup\limits_{\simmatrix \in \goodavgassignment(\highqualitycrowd)} \criticalstd^2(\avgassignment,  \avgest) & =  \frac{1}{\paperload^2} \left(\paperload - \highqualitycrowd \inf\limits_{\simmatrix \in \goodavgassignment(\highqualitycrowd)}\frac{\assignmentquality[\transfavg]{\avgassignment}}{\highqualitycrowd} \right) \\ & \le \frac{1}{\paperload^2} \left(\paperload - \highqualitycrowd \inf\limits_{\simmatrix \in \goodavgassignment(\highqualitycrowd)}\frac{\assignmentquality[\transfavg]{\avgassignment}}{\assignmentquality[\transfavg]{\hardassignment_{\transfavg}}} \right) \\& = \frac{\paperload - \highqualitycrowd \approximation_{\highqualitycrowd}}{\paperload^2}.	
\end{align*}
Using Lemma~\ref{lemma:upper_bound}, we conclude the proof for the mean score estimator:
\begin{align}
	 \sup\limits_{\substack{ \left(\truepaperquality_1, \ldots, \truepaperquality_{\numpapers} \right) \in \problemfamily(\distance) \\ \simmatrix \in \goodavgassignment(\highqualitycrowd) }} \prob{\accepted_{\sizeofconf}\left( \avgassignment, \avgest \right) \ne \truebest}  & \le \sizeofconf (\numpapers -  \sizeofconf)   \exponent{-\left(\frac{\distance}{2 \sup\limits_{\simmatrix \in \goodavgassignment(\highqualitycrowd)} \criticalstd(\avgassignment,  \avgest) } \right)^2} \label{eqn:chain1} \\ & \le  \numpapers^2  \exponent{-\frac{\paperload^2 \distance^2}{4 \left(\paperload - \highqualitycrowd \approximation_{\highqualitycrowd} \right)}} \le \numpapers^2 \exponent{- \ln \frac{\numpapers^2}{\errorrate}} \le \errorrate. \label{eqn:chain2}
\end{align}

Let us now consider the pair $\left(\mleassignment, \mle \right)$. It suffices to show that
\begin{align}
\label{eqn:sufficient}
	\sup\limits_{\simmatrix \in \goodavgassignment(\highqualitycrowd)} \criticalstd^2(\mleassignment, \mle) \le \sup\limits_{\simmatrix \in \goodavgassignment(\highqualitycrowd)} \criticalstd^2({\avgassignment,  \avgest)}.
\end{align}
Let us consider $\simmatrix \in \goodavgassignment(\highqualitycrowd)$. Recall from the proof of Theorem~\ref{thm:deterministic} that the fairness of the resulting assignment is determined in the first iteration of Steps~\ref{Algostep:loopkappa} to~\ref{Algostep:enditer}. After completion of Step~\ref{Algostep:loopkappa}, we have $\paperload$ candidate assignments $\assignment_1, \ldots, \assignment_{\paperload}$. Observe that Subroutine~\ref{alg:fair_subroutine} in Step~\ref{Sstep:pick_flow} uses the same heuristic for both $\avgassignment$ and $\mleassignment$. Hence, the $\paperload$ candidate assignments yielded when \algo constructs $\avgassignment$ coincide with the candidate assignments yielded when \algo constructs $\mleassignment$. Depending on the choice of $\transformation$, in Step~\ref{Algostep:select_candidate} the algorithm picks one assignment that maximizes fairness~\eqref{eqn:gen_fair_criteria} with respect to $\transformation$. Thus, 
\begin{align}
\label{eqn:fixing_fairness}
	\assignmentquality[\transfavg]{\avgassignment} = \max\limits_{\tmpcapacity \in [\paperload]} \assignmentquality[\transfavg]{\assignment_{\tmpcapacity}} \quad \text{and} \quad \assignmentquality[\transfmle]{\mleassignment} = \max\limits_{\tmpcapacity \in [\paperload]} \assignmentquality[\transfmle]{\assignment_{\tmpcapacity}}.
\end{align}
Hence, we have
\begin{align*}
	\criticalstd^2(\mleassignment, \mle) & = \max\limits_{j \in [\numpapers]} \left( \slim_{i \in \mleassignment(j)} \frac{1}{\std_{ij}^2} \right)^{-1} =  \max\limits_{j \in [\numpapers]} \left( \frac{1}{\slim_{i \in \mleassignment(j)} \frac{1}{\genvariance(\similarity_{ij})}}\right) \\ &  = \frac{1}{\assignmentquality[\transfmle]{\mleassignment}} \le \frac{1}{\assignmentquality[\transfmle]{\avgassignment}}.
\end{align*}
where the last ineqaulity is due to~\eqref{eqn:fixing_fairness}. 
Recalling the definition of the fairness~\eqref{eqn:gen_fair_criteria} and using Jensen's inequality, we continue:
\begin{align*}
     	\criticalstd^2(\mleassignment, \mle) &\le \max\limits_{j \in [\numpapers]} \left( \frac{1}{\paperload^2} \slim_{i \in \avgassignment(j)} \genvariance(\similarity_{ij}) \right)  =  \max\limits_{j \in [\numpapers]} \left( \frac{\paperload - \slim_{i \in \avgassignment(j)} \left( 1 - \genvariance(\similarity_{ij}) \right)}{\paperload^2} \right) \\ & =  \frac{\paperload - \assignmentquality[\transfavg]{\avgassignment}}{\paperload^2} = \criticalstd^2(\avgassignment,  \avgest).
\end{align*}
Taking a supremum over all $\simmatrix \in \goodavgassignment(\highqualitycrowd)$, we obtain~\eqref{eqn:sufficient} which together with Lemma~\ref{lemma:upper_bound} and the first part of the statement concludes the proof.

\subsubsection{Proof of lower bound}
\label{sect:main_thm_lower}

Proof of our lower bound is based on Fano's inequality~\citep{cover05elements} which provides a lower bound for probability of error in $\numhyp$-ary hypothesis testing problems.

Without loss of generality we assume that $\sizeofconf \le \frac{1}{2} \numpapers$. Otherwise, the result will hold by symmetry of the problems. 

We first claim that there exists a value $\similarity \in [0, 1]$ such that $\genvariance(\similarity) = 1 - \frac{\highqualitycrowd}{\paperload}$. Indeed, by assumptions of the theorem, $\genvariance$ is continuous strictly monotonically decreasing function and $\frac{q}{\paperload} \ge 1 - \genvariance(0)$. Thus, $\genvariance(0) \ge 1 - \frac{\highqualitycrowd}{\paperload}$. On the other hand, if $\genvariance(1) > 1 - \frac{\highqualitycrowd}{\paperload}$, then for every similarity matrix $\simmatrix$ we have
\begin{align*}
	\assignmentquality[\transfavg]{\assignment} \le \paperload \left(1 - \genvariance(1) \right) < \highqualitycrowd. 	
\end{align*}
The last inequality contradicts with the definition~\eqref{eqn:good_assignment} of $\goodavgassignment(\highqualitycrowd)$, verifying that
\begin{align*}
	\genvariance(0) \ge 1 - \frac{\highqualitycrowd}{\paperload} \ge \genvariance(1).  	
\end{align*}
Given that $\genvariance$ is continuous strictly monotonically decreasing function, we conclude that these exists $\similarity = \genvariance^{-1}\left(1 - \frac{\highqualitycrowd}{\paperload}\right) \in [0, 1]$.

Consider the similarity matrix $\tmpsimmatrix = \left\{\genvariance^{-1}\left(1 - \frac{\highqualitycrowd}{\paperload}\right) \right\}^{\numreviewers \times \numpapers}$. Observe that $\tmpsimmatrix \in \goodavgassignment(\highqualitycrowd)$, since every feasible assignment $\assignment \in \assignmentfamily$ has fairness 
\begin{align*}
\assignmentqualitymanual[\transfavg]{\tmpsimmatrix}{\assignment} = \min\limits_{j \in [\numpapers]} \slim_{i \in \reviewerset{\assignment}(j)} \left( \transfavg(\similarity_{ij}) \right)	 =  \min\limits_{j \in [\numpapers]} \slim_{i \in \reviewerset{\assignment}(j)}  \left\{ 1 - \genvariance\left(\genvariance^{-1}\left( 1 - \frac{\highqualitycrowd}{\paperload}\right)\right) \right\} = \highqualitycrowd.
\end{align*}
Thus, in any feasible assignment each paper $j \in [\numpapers]$ receives $\paperload$ reviewers with similarity exactly $\genvariance^{-1} \left(1 - \frac{\highqualitycrowd}{\paperload}\right)$.

To apply Fano's inequality, we need to reduce our problem to a hypothesis testing problem. To do so, let us introduce the set $\hypset$ of $(\numpapers - \sizeofconf + 1)$ instances of the paper accepting/rejecting problem: every problem instance in this set has the same similarity matrix $\tmpsimmatrix$, but differs in the set of top $\sizeofconf$ papers $\truebest$. We now consider the problem of distinguishing between these problem instances, which is equivalent to the problem of correctly recovering the top $\sizeofconf$ papers. More concretely, we denote the $(\numpapers - \sizeofconf + 1)$ problem instances as, $\hypset = \left\{1, 2, \ldots, {\numpapers - \sizeofconf + 1} \right\}$, where for any problem ${\ell} \in \hypset$ the set of top $\sizeofconf$ papers is denoted as $\truebest({\ell})$ and set as $\left\{1, 2, \ldots, \sizeofconf - 1 \right\} \cup \left\{ \sizeofconf - 1 + \ell \right\}$. The true quality of any paper $j \in [\numpapers]$ in any problem instance $\ell \in \hypset$ is
\begin{align*}
	\truepaperquality_j({\ell}) = \begin{cases}
 											 \delta & \text{if} \quad j \in \truebest({\ell}) \\
 											 0 &  \text{otherwise},
 										 \end{cases}
\end{align*}
thereby ensuring that $\left(\truepaperquality_1(\ell), \ldots, \truepaperquality_{\numpapers}(\ell) \right) \in \problemfamily(\distance)$, for every instance $\ell \in \hypset$.

Let $\problemvar$ denote a random variable which is uniformly distributed over elements of $\hypset$. Then given $\problemvar = {\ell}$, we denote a random matrix of reviewers' scores as $\scorematrix^{(\ell)} \in \reals^{\paperload \times \numpapers}$ whose $(r, j)^{\text{th}}$ entry is a score given by reviewer $i_r, r \in [\paperload]$, assigned to paper $j$ and 
\begin{align}
\label{eqn:score_matrix_dist}
	\scorematrix_{r j}^{(\ell)} \sim \begin{cases}
 										\gaussian\left( \distance, 1 - \frac{\highqualitycrowd}{\paperload} \right) & \text{if} \quad j \in \truebest({\ell}) \\
 										\gaussian\left(0, 1 - \frac{\highqualitycrowd}{\paperload} \right) & \text{otherwise}.
 								   \end{cases}
\end{align}
We denote the distribution of random matrix $\scorematrix^{(\ell)}$ as $\scoredist^{(\ell)}$. Note that $\scorematrix^{(\ell)}$ does not depend on the selected assignment $\assignment \in \assignmentfamily$. Indeed, recall from~\eqref{eqn:reviewer_model}, that assignment $\assignment$ affects only variances of observed scores. On the other hand, for any reviewer $i \in [\numreviewers]$  and for any paper $j \in [\numpapers]$, the score $\scoregiven_{i j}$ has variance $1 - \frac{\highqualitycrowd}{\paperload}$. Thus, for any feasible assignment $\assignment$ and any $\ell \in \hypset$, the distribution of random matrix $\scorematrix^{\ell}$  has the form~\eqref{eqn:score_matrix_dist}.

Now let us consider the problem of determining the index $\problemvar =  {\ell} \in \hypset$, given the observation $\scorematrix^{(\ell)}$ following the distribution $\scoredist^{(\ell)}$. Fano's inequality provides a lower bound for probability of error of every estimator $\hypest: \reals^{\paperload \times \numpapers} \to \hypset$ in terms of Kullback-Leibler divergence between distributions $\scoredist^{(\ell_1)}$ and $\scoredist^{(\ell_2)} \ \left( \ell_1 \ne \ell_2, \ \ell_1, \ell_2 \in [\numpapers - \sizeofconf + 1] \right)$:
\begin{align}
\label{eqn:fano}
	\prob{\hypest(\scorematrix) \ne {\problemvar}} \ge 1 - \frac{\max\limits_{{\ell_1} \ne {\ell_2} \in \hypset} \KL{\scoredist^{(\ell_1)}}{\scoredist^{(\ell_2)}} + \log 2}{\log \left(\card(\hypset)\right)},	
\end{align}
where $\card(\hypset)$ denotes the cardinality of $\hypset$ and equals $(\numpapers - \sizeofconf + 1)$ for our construction.

Let us now derive an upper bound on the quantity
\begin{align}
\label{eqn:KL_thm_main}
	\max\limits_{{\ell_1} \ne {\ell_2} \in \hypset} \KL{\scoredist^{(\ell_1)}}{\scoredist^{(\ell_2)}}.	
\end{align}
First, note that for each $\ell \in [\numpapers - \tmpcapacity + 1]$, entries of $\scorematrix^{(\ell)}$ are independent. Second, for arbitrary $\ell_1 \ne \ell_2$, the distributions of $\scorematrix^{(\ell_1)}$ and $\scorematrix^{(\ell_2)}$ differ only in two columns. Thus,
\begin{align*}
		 \KL{\scoredist^{(\ell_1)}}{\scoredist^{(\ell_2)}} =  \paperload \left\{ \KL{\gaussian\left(\distance, 1 - \frac{\highqualitycrowd}{\paperload}\right)}{\gaussian\left(0, 1 - \frac{\highqualitycrowd}{\paperload} \right)} + \KL{\gaussian\left(0, 1 - \frac{\highqualitycrowd}{\paperload}\right)}{\gaussian\left(\distance, 1 - \frac{\highqualitycrowd}{\paperload} \right)}\right\}.
\end{align*}
Some simple algebraic manipulations yield:
\begin{align}
\label{eqn:KL_gaussian}
	\KL{\gaussian\left(\distance, 1 - \frac{\highqualitycrowd}{\paperload}\right)}{\gaussian\left(0, 1 - \frac{\highqualitycrowd}{\paperload} \right)} = \KL{\gaussian\left(0, 1 - \frac{\highqualitycrowd}{\paperload}\right)}{\gaussian\left(\distance, 1 - \frac{\highqualitycrowd}{\paperload} \right)} = \frac{\distance^2}{2 \left( 1 - \frac{\highqualitycrowd}{\paperload}\right)}.
\end{align}
Finally, substituting~\eqref{eqn:KL_gaussian} in~\eqref{eqn:fano}, for $\numpapers > 6$ and for a sufficiently small constant $\const$, we have
\begin{align*}
	\prob{\hypest(\scorematrix) \ne {\problemvar}} \ge 1 - \frac{ \frac{\paperload^2 \distance^2}{\paperload - \highqualitycrowd}+ \log 2}{\log \left(\numpapers - \sizeofconf + 1\right)} \ge  1 - \frac{ \const^2 \ln \numpapers + 1}{\log \left(\frac{\numpapers}{2} + 1\right)}  \ge \frac{1}{2}.	
\end{align*} 
This lower bound implies 
\begin{align*}
	\sup\limits_{\simmatrix \in \goodavgassignment(\highqualitycrowd)}  \inf\limits_{\left(\estimator, \assignment \in \assignmentfamily \right)} \sup\limits_{ \left(\truepaperquality_1, \ldots, \truepaperquality_{\numpapers} \right) \in \problemfamily(\distance) } \prob{\accepted_{\sizeofconf}\left( \assignment, \estimator \right) \ne \truebest}  \ge \frac{1}{2}.		
\end{align*}

%%%%%%%%%%%%%%%%%%%%%%%%

\subsubsection{Proof of Lemma~\ref{lemma:upper_bound}}
\label{sec:prof_of_lemma}

	First, let $\estimator = \avgest$. Then given a valid assignment $\assignment$, the estimates $\estpaperqualityavg_j, j \in [\numpapers]$, are distributed as
	\begin{align*}
		\estpaperqualityavg_j \sim \gaussian\left(\truepaperquality_j, \frac{1}{\paperload^2} \slim_{i \in \reviewerset{\assignment}(j)} \std_{ij}^2 \right) = \gaussian\left(\truepaperquality_j, \cumvar_j^2 \right), 	
	\end{align*}
	where we have defined $\cumvar_j^2 = \frac{1}{\paperload^2} \slim_{i \in \reviewerset{\assignment}(j)} \std_{ij}^2$. Now let us consider two papers $j_1, j_2$ such that $j_1$ belongs to the top $\sizeofconf$ papers $\truebest$ and $j_2 \notin \truebest$. The probability that paper $j_2$ receives higher score than paper $j_1$ is upper bounded as
	\begin{align*}
		 \prob{\estpaperqualityavg_{j_1} \le \estpaperqualityavg_{j_2}} &= \prob{\left(\estpaperqualityavg_{j_1} - \estpaperqualityavg_{j_2} \right) - \expectation{\estpaperqualityavg_{j_1} - \estpaperqualityavg_{j_2}} \le -\expectation{\estpaperqualityavg_{j_1} - \estpaperqualityavg_{j_2}}} \\ & \overset{(i)}{\leq} \exponent{-\frac{\left(\expectation{\estpaperqualityavg_{j_1} - \estpaperqualityavg_{j_2}}\right)^2}{2 \left( \cumvar_{j_1}^2 + \cumvar_{j_2}^2 \right)}} \overset{(ii)}{\leq} \exponent{-\left(\frac{\distance}{2 \criticalstd(\assignment, \avgest)} \right)^2},
	\end{align*}
	where inequality $(i)$ is due to Hoeffding's inequality, and inequality $(ii)$ holds because $\expectation{\estpaperqualityavg_{j_1} - \estpaperqualityavg_{j_2}} = \truepaperquality_{j_1} - \truepaperquality_{j_2} \ge \distance$ and $\criticalstd^2(\assignment, \avgest) = \max\limits_{j \in [\numpapers]} \cumvar_j^2$. The estimator makes a mistake if and only if at least one paper from $\truebest$ receives lower score than at least one paper from $[\numpapers] \backslash \truebest$. A union bound across every paper from $\truebest$, paired with $(\numpapers - \sizeofconf)$ papers from $[\numpapers]\backslash \truebest$, yields our claimed result.
		
	Let us now consider $\estimator = \mle$. Then it is not hard to see that
	\begin{align*}
		\estpaperqualityavg_j \sim \gaussian\left(\truepaperquality_j, \left( \slim_{i \in \reviewerset{\assignment}(j)} \frac{1}{\std_{ij}^2}  \right)^{-1} \right) = \gaussian\left(\truepaperquality_j, \cumvar_j^2 \right),	
	\end{align*}
	where we denoted $\cumvar_j^2 = \left( \slim_{i \in \reviewerset{\assignment}(j)} \frac{1}{\std_{ij}^2}  \right)^{-1}$.
	Proceeding in a manner similar to the proof for the averaging estimator yields the claimed result.

%%%%%%%%%%%%%%%%%%%%%%%%

\subsection{Proof of Corollary~\ref{thm:main_mle}}

The proof of Corollary~\ref{thm:main_mle} follows along similar lines as the proof of Theorem~\ref{thm:main_theorem}.

\subsubsection{Proof of upper bound}

Let us consider some $\tmpcapacity \in [\paperload]$ and $\simmatrix \in \onegoodforeveryone(\singlehighqual)$. We apply Lemma~\ref{lemma:upper_bound} to proof the upper bound and in order to do so, we need to derive an upper bound on $\criticalstd(\fairassignment_{\transfmle}, \mle)$.  
\begin{align*}
	\criticalstd^2({\fairassignment_{\transfmle}, \mle}) & =  \max\limits_{j \in [\numpapers]} \left(\slim_{i \in \reviewerset{\fairassignment_{\transfmle}}(j)} \frac{1}{\std_{ij}^2}\right)^{-1} = \left( \min\limits_{j \in [\numpapers]} \slim_{i \in \reviewerset{\fairassignment_{\transfmle}}(j)} \transfmle (\similarity_{ij})\right)^{-1} \\& \le  \frac{1}{\frac{\tmpcapacity}{\genvariance(\singlehighqual)} + \frac{\paperload - \tmpcapacity}{\genvariance(0)}} = \frac{\genvariance(\singlehighqual) \genvariance(0)}{\tmpcapacity \genvariance(0) + ({\paperload - \tmpcapacity}) \genvariance(\singlehighqual)}.
\end{align*}
Thus,
\begin{align}
    \label{eqn:supMLEpr4a}
	\sup\limits_{\simmatrix \in \onegoodforeveryone(\singlehighqual)} \criticalstd^2({\fairassignment_{\transfmle}, \mle}) \le 	\frac{\genvariance(\singlehighqual) \genvariance(0)}{\tmpcapacity \genvariance(0) + ({\paperload - \tmpcapacity}) \genvariance(\singlehighqual)}.
\end{align}
It remains to apply Lemma~\ref{lemma:upper_bound} to complete our proof, and we do so by applying the chain of arguments~\eqref{eqn:chain1} and~\eqref{eqn:chain2} to the bound~\eqref{eqn:supMLEpr4a}, where the pair $(\fairassignment_{\transfavg}, \avgest)$ in ~\eqref{eqn:chain1} and~\eqref{eqn:chain2} is substituted with the pair $(\fairassignment_{\transfmle}, \mle)$.

\subsubsection{Proof of lower bound}

To prove the lower bound, we use the Fano's ineqaulity in the same way as we did when proved Theorem~\ref{thm:main_theorem}(b). However, we now need to be more careful with construction of working similarity matrix $\tmpsimmatrix \in \onegoodforeveryone(\singlehighqual)$.

As in the proof of Theorem~\ref{thm:main_theorem}(b), we assume $\sizeofconf \le \frac{\numpapers}{2}$. If the converse holds, than the result holds by symmetry of the problem. Next, consider arbitrary feasible assignment $\tmpassignment \in \assignmentfamily_{\tmpcapacity}$. Recall, that $\assignmentfamily_{\tmpcapacity}$ consists of assignments which assign each paper $j \in [\numpapers]$ to $\tmpcapacity$ instead of $\paperload$ reviewers such that each reviewer reviews at most $\maxrevload$ papers.

Now we define a similarity matrix $\tmpsimmatrix$ as follows:
\begin{align}
\label{eqn:simmatrix_tricky}
	\similarity_{ij} = \begin{cases}
 							\singlehighqual & \text{if} \quad i \in \reviewerset{\tmpassignment}(j) \\
 							0 & \text{otherwise}.
 					   \end{cases}
\end{align}
Thus, for each paper $j \in [\numpapers]$ there exist exactly $\tmpcapacity$ reviewers with non-zero similarity $\singlehighqual$ and in every feasible assignment $\assignment \in \assignmentfamily$ each paper $j \in [\numpapers]$ is assigned to at most $\tmpcapacity$ reviewers with non-zero similarity. Note that $\tmpsimmatrix \in \onegoodforeveryone(\singlehighqual)$.

Now let us consider the set of $(\numpapers - \sizeofconf + 1)$ problem instances $\hypset$ defined in Section~\ref{sect:main_thm_lower}. For every feasible assignment $\assignment \in \assignmentfamily$, if $\scorematrix^{(\assignment, \ell)}$ is a matrix of observed reviewers' scores for instance $\ell \in \hypset$, then $(r, j)^{\text{th}}$ entry of $\scorematrix^{(\assignment, \ell)}$ follows the distribution
\begin{align}
\label{eqn:score_distr_tricky}
	\scorematrix^{(\assignment, \ell)}_{rj}	= \begin{cases}
 										\gaussian\big(\distance \times \indicator{ j \in \truebest({\ell}) }, \genvariance(\singlehighqual) \big) & \text{if} \quad \tmpassignment_{i_r j} = 1 \\
 										\gaussian\big(\distance \times \indicator{ j \in \truebest({\ell})},  \genvariance(0) \big) & \text{if} \quad \tmpassignment_{i_r j} = 0, \\
 								  \end{cases}
\end{align}
where $i_r, r \in [\paperload]$ is reviewer assigned to paper $j$ in assignment $\assignment$.

We denote the distribution of random matrix $\scorematrix^{(\assignment, \ell)}$ as $\scoredist^{({\assignment}, \ell)}$. Note that in contrast to the proof of Theorem~\ref{thm:main_theorem}, here $\scorematrix^{(\assignment, \ell)}$ does depend on the selected assignment $\assignment \in \assignmentfamily$. Thus, instead of~\eqref{eqn:KL_thm_main}, we need to derive an upper bound on the quantity
\begin{align*}
	\sup\limits_{\assignment \in \assignmentfamily} \max\limits_{{\ell_1} \ne {\ell_2} \in \hypset} \KL{\scoredist^{({\assignment}, \ell_1)}}{\scoredist^{({\assignment}, \ell_2)}}. 	
\end{align*}

First, note that for each $\ell \in [\numpapers - \sizeofconf + 1]$ and for each feasible assignment $\assignment \in \assignmentfamily$, the entries of $\scorematrix^{({\assignment}, \ell)}$ are independent. Second, for arbitrary $\ell_1 \ne \ell_2$, the distributions of $\scorematrix^{({\assignment}, \ell_1)}$ and $\scorematrix^{({\assignment}, \ell_2)}$ differ only in two columns. Thus, for any feasible assignment $\assignment \in \assignmentfamily$, we have
\begin{align}
	 \KL{\scoredist^{({\assignment}, \ell_1)}}{\scoredist^{({\assignment}, \ell_2)}} & \le   \goodrevs_{\ell_1} \KL{\gaussian\big( \distance,  \genvariance(\singlehighqual) \big)}{\gaussian\big( 0,  \genvariance(\singlehighqual) \big)} + \left(\paperload - \goodrevs_{\ell_1} \right) \KL{\gaussian\big( \distance,  \genvariance({0}) \big)}{\gaussian\big( 0,  \genvariance({0}) \big)} \nonumber \\ & + \goodrevs_{\ell_2} \KL{\gaussian\big( 0,  \genvariance(\singlehighqual) \big)}{\gaussian\big( \distance,  \genvariance(\singlehighqual) \big)} + \left(\paperload - \goodrevs_{\ell_2} \right) \KL{\gaussian\big( 0,  \genvariance(0) \big)}{\gaussian\big( \distance,  \genvariance(0) \big)} \label{eqn:tricky_KL} \\ & = \left( \goodrevs_{\ell_1} + \goodrevs_{\ell_2} \right) \frac{\distance^2}{2 \genvariance(\singlehighqual)} + \left(2 \paperload - \goodrevs_{\ell_1} - \goodrevs_{\ell_2} \right) \frac{\distance^2}{2 \genvariance(0)} \label{eqn:tricky_KL2},
\end{align}
where $\goodrevs_{\ell_1}$ is the number of reviewers with similarity $\singlehighqual$ assigned to paper $\left(\sizeofconf - 1 + \ell_1 \right)$ in $\assignment$ and $\goodrevs_{\ell_2}$ is the number of reviewers with similarity $\singlehighqual$ assigned to paper $\left(\sizeofconf - 1 + \ell_2 \right)$. By construction of similarity matrix $\tmpsimmatrix$, for each $\ell \in [\numpapers - \sizeofconf + 1]$ and for each $\assignment \in \assignmentfamily$,  we have $\goodrevs_{\ell} \le \tmpcapacity$. Note that two  summands in~\eqref{eqn:tricky_KL2} are proportional to a convex combination of $\frac{\distance^2}{2 \genvariance(\singlehighqual)}$ and $\frac{\distance^2}{2 \genvariance(0)}$. Moreover, by monotonicity of $\genvariance$, we have $\frac{\distance^2}{2 \genvariance(\singlehighqual)} \ge \frac{\distance^2}{2 \genvariance(0)}$, and hence
\begin{align*}
	\sup\limits_{\assignment \in \assignmentfamily} \max\limits_{{\ell_1} \ne {\ell_2} \in \hypset} \KL{\scoredist^{(\assignment, \ell_1)}}{\scoredist^{(\assignment, \ell_2)}} \le \frac{\tmpcapacity \distance^2}{\genvariance(\singlehighqual)} + \frac{\left(\paperload - \tmpcapacity\right) \distance^2}{\genvariance(0)} = \distance^2	\left( \frac{\tmpcapacity \genvariance(0) + \left(\paperload - \tmpcapacity\right) \genvariance(\singlehighqual)}{\genvariance(\singlehighqual) \genvariance(0)}\right).
\end{align*}

Applying Fano's ineqaulity~\eqref{eqn:fano}, we conclude that for all feasible assignments $\assignment \in \assignmentfamily$, if $\numpapers > 6$ and universal constant $\const$ is sufficiently small, then 
\begin{align*}
	\prob{\hypest(\scorematrix) \ne {\problemvar}} \ge 1 - \frac{ \distance^2	\left( \frac{\tmpcapacity \genvariance(0) + \left(\paperload - \tmpcapacity\right) \genvariance(\singlehighqual)}{\genvariance(\singlehighqual) \genvariance(0)}\right) + \log 2}{\log \left(\numpapers - \sizeofconf + 1\right)} \ge  1 - \frac{ \const^2 \ln \numpapers + 1}{\log \left(\frac{\numpapers}{2} + 1\right)}  \ge \frac{1}{2}.	
\end{align*} 	
This bound thus implies 
\begin{align*}
	\sup\limits_{\simmatrix \in \onegoodforeveryone(\singlehighqual)}  \inf\limits_{\left(\estimator, \assignment \in \assignmentfamily \right)} \sup\limits_{ \left(\truepaperquality_1, \ldots, \truepaperquality_{\numpapers} \right) \in \problemfamily(\distance) } \prob{\accepted_{\sizeofconf}\left( \assignment, \estimator \right) \ne \truebest}  \ge \frac{1}{2}.		
\end{align*}

%%%%%%%%%%%%%%%%%%%%%%%%%%%%%%%%%%%%%%%%

\subsection{Proof of Theorem~\ref{thm:main_theorem_ham}}

Before we prove the theorem, we state an auxiliary proposition which will help us to prove a lower bound.

\begin{lemma}[\citealp{shah15pairwise}]
\label{lemma:lev_bound}
	Let $\tol > 0$ be an integer such that $2 \tol \le \frac{1}{1 + \constnub}\min\left\{\numpapers^{1 - \constnua}, \sizeofconf, \numpapers - \sizeofconf \right\}$ for some constants $\constnua, \constnub \in (0; 1)$ 	and $\numpapers$ is larger than some $\left(\constnua, \constnub \right)$-dependent constant. Then there exist a set of binary strings $\left\{\str^1, \str^2, \ldots, \str^{\levbound} \right\} \subseteq \left\{0, 1 \right\}^{\numpapers/2}$ with cardinality $\levbound > \exponent{\frac{9}{10} \constnua \constnub \tol \log \numpapers}$ such that
	\begin{align*}
		\hamming{\str^{\ell_1}}{\mathbf{0}_{\numpapers/2}} = 2(1 + \constnub) \tol \quad \text{and} \quad \hamming{\str^{\ell_1}}{\str^{\ell_2}}	 > 4 \tol \quad \forall \ell_1 \ne \ell_2 \in [\levbound]
	\end{align*}
\end{lemma}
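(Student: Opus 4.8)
The plan is to prove this as a Gilbert--Varshamov-style packing bound, after first reducing the binary constant-weight packing problem to a packing problem over a larger alphabet. Write $N = \numpapers/2$ (replace by $\lfloor \numpapers/2\rfloor$ if $\numpapers$ is odd) and $n = 2(1+\constnub)\tol$, which we take to be an integer --- otherwise replace it by $\lfloor 2(1+\constnub)\tol\rfloor$ throughout and adjust constants. First I would partition the $N$ coordinates into $n$ disjoint ``blocks'', chosen as equal in size as possible, and inside each block designate a set of exactly $q \defn \lfloor N/n\rfloor$ positions. I then restrict attention to binary strings that place exactly one $1$ among the designated positions of each of the $n$ blocks. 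Every such string has Hamming weight exactly $n = 2(1+\constnub)\tol$, so the requirement $\hamming{\str^{\ell}}{\mathbf 0_{N}} = 2(1+\constnub)\tol$ is automatic. Moreover, if two such strings are viewed as vectors $c,c' \in [q]^{n}$ recording the position of the $1$ within each block, then the binary Hamming distance between the strings equals exactly twice the number of blocks on which $c$ and $c'$ disagree; hence $\hamming{\str}{\str'} > 4\tol$ is equivalent to $c,c'$ having $q$-ary Hamming distance at least $2\tol+1$.

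Thus it suffices to exhibit a $q$-ary code of length $n$ and minimum distance $2\tol+1$ with more than $\exponent{\tfrac{9}{10}\constnua\constnub\tol\log\numpapers}$ codewords, and for this I would invoke the Gilbert--Varshamov bound: a code that is maximal with respect to inclusion among $q$-ary codes of length $n$ and minimum distance $2\tol+1$ must have its radius-$2\tol$ Hamming balls cover all of $[q]^{n}$, so its size is at least $q^{n}/V_q(n,2\tol)$, where $V_q(n,r) = \sum_{j=0}^{r}\binom{n}{j}(q-1)^{j}$. The remaining work is to estimate the two quantities. For the numerator I use the hypothesis $2\tol \le \frac{\numpapers^{1-\constnua}}{1+\constnub}$, which gives $n \le \numpapers^{1-\constnua}$ and hence $q \ge \frac{N}{n}-1 \ge \frac{\numpapers^{\constnua}}{4}$ once $\numpapers$ exceeds a $(\constnua,\constnub)$-dependent constant. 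For the denominator I would first observe that the ratio of consecutive summands in $V_q(n,2\tol)$ is $\frac{n-j}{j+1}(q-1) \ge \frac{2\constnub\tol}{2\tol+1}(q-1) \ge 2$ for $\numpapers$ large, so the sum is at most twice its last term: $V_q(n,2\tol)\le 2\binom{n}{2\tol}q^{2\tol}$; this geometric-series step is essential to avoid losing a $(2\tol+1)$ factor. Then $\binom{n}{2\tol} = \binom{2(1+\constnub)\tol}{2\constnub\tol}\le\bigl(\tfrac{e(1+\constnub)}{\constnub}\bigr)^{2\constnub\tol}$ by the standard bound $\binom{a}{b}\le (ea/b)^{b}$.

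Combining these gives $q^{n}/V_q(n,2\tol)\ge q^{2\constnub\tol}\big/\bigl(2(e(1+\constnub)/\constnub)^{2\constnub\tol}\bigr)$, so taking logarithms,
\begin{align*}
\log\!\left(\frac{q^{n}}{V_q(n,2\tol)}\right) \;\ge\; 2\constnub\tol\log q - 2\constnub\tol\log\!\frac{e(1+\constnub)}{\constnub} - \log 2 \;\ge\; 2\constnua\constnub\tol\log\numpapers - C(\constnua,\constnub)\,\tol - \log 2 ,
\end{align*}
where $C(\constnua,\constnub)$ collects the $\numpapers$-free constants. Since $\tol\ge 1$, whenever $\numpapers$ exceeds a threshold depending only on $\constnua$ and $\constnub$ the two trailing terms are together at most $\tfrac{1}{10}\cdot 2\constnua\constnub\tol\log\numpapers$, giving $\levbound \ge \exponent{\tfrac{9}{5}\constnua\constnub\tol\log\numpapers} > \exponent{\tfrac{9}{10}\constnua\constnub\tol\log\numpapers}$, as claimed. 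The main obstacle I anticipate is essentially bookkeeping: one must (i) perform the volume estimate without losing a polynomial-in-$\tol$ factor so that the leading coefficient truly survives as $\tfrac{9}{10}$ (in fact with a factor of two to spare), and (ii) correctly track the $(\constnua,\constnub)$-dependent lower threshold on $\numpapers$ needed to absorb $C(\constnua,\constnub)\tol + \log 2$; the integrality of $2(1+\constnub)\tol$ and the parity of $\numpapers$ are minor points disposed of by rounding.
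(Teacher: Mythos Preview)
Your argument is correct, and it takes a genuinely different route from the one the paper points to. The paper does not spell out a proof but cites \cite{shah15pairwise}, noting that the argument there rests on a result of Levenshtein~\cite{lev71codes} giving a lower bound on the size of binary \emph{constant-weight} codes with prescribed weight and minimum distance; one then reads off the lemma directly from that bound. Your approach instead sidesteps the constant-weight machinery: by partitioning the $\numpapers/2$ coordinates into $n=2(1+\constnub)\tol$ blocks and placing exactly one $1$ per block, you reduce the constant-weight binary packing to an \emph{unrestricted} $q$-ary packing and invoke the ordinary Gilbert--Varshamov bound. This is more elementary and fully self-contained --- you never need to quote a specialized constant-weight result --- and as you observe, the block reduction even gains a factor of two in the exponent (you end with $\tfrac{9}{5}$ where $\tfrac{9}{10}$ is required). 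The price is the extra bookkeeping you flag: handling the floors in $q=\lfloor N/n\rfloor$ and in $2(1+\constnub)\tol$, and verifying that the $(\constnua,\constnub)$-dependent threshold on $\numpapers$ really does not depend on $\tol$ (it does not, since both the leading term $2\constnua\constnub\tol\log\numpapers$ and the trailing term $C(\constnua,\constnub)\tol+\log 2$ scale linearly in $\tol\ge 1$). Either approach is fine; yours trades a black-box citation for a short explicit computation.
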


The proof of Lemma~\ref{lemma:lev_bound} relies on a coding-theoretic result due to~\cite{lev71codes} which gives a lower bound on the number of codewords of fixed length $\numpapers$ and Hamming weights $\const_1$ with Hamming distance between each pair of codewords higher than $\const_2$.
	
\subsubsection{Proof of upper bound}
\label{sec:upper_bound_ham_avg}

Without loss of generality we assume that the true underlying ranking of the papers is $1, 2, \ldots, \sizeofconf, \ldots, \numpapers$. We prove the claim for pair $\left(\fairassignment_{\transfavg}, \avgest \right)$ below, and proof for $\left(\fairassignment_{\transfmle}, \mle \right)$ follows from the proof of the corresponding part of Theorem~\ref{thm:main_theorem}(a).

 From the proof of Lemma~\ref{lemma:upper_bound} and Section~\ref{sec:main_thm_upper}, we know that 
    under conditions of the theorem, for every paper $j_1 \le \sizeofconf - \tol$ and for every paper $j_2 \ge \sizeofconf + \tol + 1$,
\begin{align}
\label{eqn:ham_small_error}
	\sup\limits_{\simmatrix \in \goodavgassignment(\highqualitycrowd)} \prob{\estpaperqualityavg_{j_1} - \estpaperqualityavg_{j_2} \le 0} \le \exponent{- \left(\frac{\distance}{2 \sup\limits_{\simmatrix \in \goodavgassignment(\highqualitycrowd)} \criticalstd({\fairassignment_{\transfavg}, \avgest})} \right)^2}
\end{align}
where
\begin{align}
\label{eqn:ham_small_var}
	\sup\limits_{\simmatrix \in \goodavgassignment(\highqualitycrowd)} \criticalstd^2(\fairassignment_{\transfavg}, \avgest) \le \frac{\paperload - \approximation_{\highqualitycrowd} \highqualitycrowd}{\paperload^2}.	
\end{align}
Taking a union bound across every paper from the top $(\sizeofconf - \tol)$ papers, paired with the bottom $(\numpapers - \sizeofconf - \tol)$ papers, we obtain
\begin{align*}
	\sup\limits_{\simmatrix \in \goodavgassignment(\highqualitycrowd)} \prob{\exists j_1 \le \sizeofconf - \tol, j_2 \ge \sizeofconf + \tol + 1 \quad \text{such that } \estpaperqualityavg_{j_1} \le \estpaperqualityavg_{j_2}} \le \numpapers^2 \exponent{-\frac{\paperload^2 \distance^2}{4(\paperload - \approximation_{\highqualitycrowd} \highqualitycrowd)}} \le \errorrate.
\end{align*}
In other words, for every similarity matrix $\simmatrix \in \goodavgassignment(\highqualitycrowd)$, with probability at least $(1 - \errorrate)$, the top $(\sizeofconf - \tol)$ papers will receive higher score than bottom $(\numpapers - \sizeofconf - \tol)$ papers. Thus, among accepted papers $\accepted_{\sizeofconf}\left(\fairassignment_{\transfavg}, \avgest \right)$, at most $\tol$ papers will not belong to $\truebest$, thereby ensuring that
\begin{align*}
	\hamming{\accepted_{\sizeofconf}\left(\fairassignment_{\transfavg}, \avgest \right)}{\truebest} \le 2 \tol	
\end{align*}
with probability at least $1 - \errorrate$.

\subsubsection{Proof of lower bound}
\label{sec:lower_bound_ham_avg}

To prove the lower bound, we follow similar path as we used when we derived a lower bound in Theorem~\ref{thm:main_theorem}.  However, we now need more advanced technique to construct necessary set of instances.

As in the proof of Theorem~\ref{thm:main_theorem}(b), we assume that $\sizeofconf \le \frac{\numpapers}{2}$. If the converse holds, than the result holds by the symmetry of the problem. Next, consider similarity matrix $\tmpsimmatrix = \left\{\genvariance^{-1}\left(1 - \frac{\highqualitycrowd}{\paperload}\right) \right\}^{\numreviewers \times \numpapers} \in \goodavgassignment(\highqualitycrowd)$. To apply Fano's inequality, it remains to construct a set $\hypset = \left\{1, 2, \ldots, \levbound \right\}$ of suitable instances of paper accepting/rejecting problem: every problem instance in this set has the same similarity matrix $\tmpsimmatrix$, but differs in the set of top $\sizeofconf$ papers $\truebest$. We note that in contrast to the proof of Theorem~\ref{thm:main_theorem}(b), it is not enough to create $(\numpapers - \sizeofconf + 1)$ instances where the sets of top $\sizeofconf$ papers differ only in a single paper. As we will see below, it suffices to construct instances such that for every $\ell_1, \ell_2 \in \hypset$, the sets of top $\sizeofconf$ papers satisfy $\hamming{\truebest(\ell_1)}{\truebest(\ell_2)} > 4 \tol$.

Note that requirements of Lemma~\ref{lemma:lev_bound} are satisfied by the conditions of Theorem~\ref{thm:main_theorem_ham}. Let $\left\{\str^1, \str^2, \ldots, \str^{\levbound} \right\}$ be the corresponding binary strings. For every problem $\ell \in \hypset$, consider the following binary string:
\begin{align}
\label{eqn:needed_string}
	\widetilde{b}^{\ell} = \overbrace{\underbrace{1, 1, \ldots, 1}_{\sizeofconf - 2 (1 + \constnub) \tol},	{0, 0, \ldots, 0}}^{\numpapers/2}, {\str^{\ell}_1, \str^{\ell}_2, \ldots, \str^{\ell}_{\numpapers/2}}.
\end{align}

First, note that  $2\tol \le \frac{1}{1 + \constnub}\sizeofconf$, and hence $\sizeofconf - 2 (1 + \constnub) \tol \ge 0$, thereby ensuring that the construction~\eqref{eqn:needed_string} is not vacuous. Now let $\truebest(\ell)$ be the set of indices such that their corresponding elements in string $\widetilde{b}^{\ell}$ equal $1$. By construction, the cardinality of $\truebest(\ell)$ is $\sizeofconf$ so it is a valid set of top $\sizeofconf$ papers. Finally, we need to set the scores of papers. Let for every paper $j \in [\numpapers]$:
\begin{align*}
	\truepaperquality_j(\ell) = \begin{cases}
 									\distance &   \text{if} \quad \widetilde{b}^{\ell}_j = 1 \\
 									0 & \text{if} \quad \widetilde{b}^{\ell}_j = 0, \\
 							    \end{cases}
\end{align*}
which ensures that for every $\ell \in \hypset$, $(\truepaperquality_1(\ell), \truepaperquality_2(\ell), \ldots, \truepaperquality_{\numpapers}(\ell)) \in \problemfamily \subset \problemfamilyham$.

The strategy for the remaining part of the proof is the following. We first show that the problem instances defined above are well-separated in a sense that for any two of them, the corresponding sets of the top $\sizeofconf$ papers differ in sufficiently many elements. We then assume that there exists an (assignment algorithm, estimator) pair which for every similarity matrix $\simmatrix \in \goodavgassignment(\highqualitycrowd)$ recovers the set of top $\sizeofconf$ papers with at most $\tol$ errors with high probability. Then this pair must be able to determine with high probability the problem instance $\ell$, sampled uniformly at random from $\hypset$, by observing corresponding reviewers' scores. We then apply Fano's inequality to show the impossibility of the last implication. 

Following the plan described above, we note that for every two distinct instances $\ell_1, \ell_2 \in \hypset$, we have
\begin{align*}
	\hamming{\truebest(\ell_1)}{\truebest(\ell_2)} > 4 \tol. 	
\end{align*}
Consequently, for every set $\truebest$ of $\sizeofconf$ papers, $\hamming{\truebest}{\truebest(\ell)} \le 2 \tol$ for at most one instance $\ell \in \hypset$. Now assume for the sake of contradiction that for every similarity matrix $\simmatrix \in \goodavgassignment(\highqualitycrowd)$, there exists an assignment $\tmpassignment = \tmpassignment\left(\simmatrix\right)$ and estimator $\estimator = \estimator\left(\simmatrix \right)$ such that for arbitrarily large value of $\numpapers$
\begin{align}
\label{eqn:assumption}
	\sup\limits_{ \left(\truepaperquality_1, \ldots, \truepaperquality_{\numpapers} \right) \in \problemfamily(\distance) } \prob{\hamming{\accepted_{\sizeofconf}\left( \tmpassignment, \estimator \right)}{\truebest} > 2 \tol}  < \frac{1}{2}.	
\end{align}
This assumption implies that estimator $\estimator(\tmpsimmatrix)$  might be used to determine the problem $\problemvar = \ell$ sampled uniformly at random from $\hypset$ correctly with probability greater than $1/2$. Indeed, notice that similarity matrix $\tmpsimmatrix$ was constructed in a way that $\accepted_{\sizeofconf}\left( \tmpassignment, \estimator \right)$ does not depend on assignment $\tmpassignment$.

Given $\problemvar = \ell$, let $\scorematrix^{(\ell)}$ be the random matrix of reviewers' scores. The distribution $\scoredist^{(\ell)}$ of components of $\scorematrix^{(\ell)}$ is defined in~\eqref{eqn:score_matrix_dist}. To apply Fano's inequality~\eqref{eqn:fano}, it remains to derive an upper bound on the quantity $\max\limits_{\ell_1 \ne \ell_2 \in \hypset} \KL{\scoredist^{(\ell_1)}}{\scoredist^{(\ell_2)}}$. 

First, note that entries of $\scorematrix^{(\ell)}$ are independent. Second, note that for every pair $\ell_1 \ne \ell_2 \in \hypset$ and for every $j \in [\numpapers/2]$, the distribution of the $j^{\text{th}}$ column of $\scorematrix^{({\assignment}, \ell_1)}$ is identical to the distribution of the $j^{\text{th}}$ column of  $\scorematrix^{({\assignment}, \ell_2)}$. Among the last $\numpapers/2$ columns, the distributions of at most $4(1 + \constnub) \tol$ columns of $\scorematrix^{({\assignment}, \ell_1)}$ differ from the distributions of the corresponding columns in $\scorematrix^{({\assignment}, \ell_2)}$. Thus, for arbitrary $\ell_1 \ne \ell_2 \in \hypset$
\begin{align*}
	\KL{\scoredist^{(\ell_1)}}{\scoredist^{(\ell_2)}} \le 2(1 + \constnub) \tol \paperload \left\{ \KL{\gaussian\left(\distance, 1 - \frac{\highqualitycrowd}{\paperload}\right)}{\gaussian\left(0, 1 - \frac{\highqualitycrowd}{\paperload} \right)} + \KL{\gaussian\left(0, 1 - \frac{\highqualitycrowd}{\paperload}\right)}{\gaussian\left(\distance, 1 - \frac{\highqualitycrowd}{\paperload} \right)}\right\}.		
\end{align*}

Recalling~\eqref{eqn:KL_gaussian}, we deduce that
\begin{align*}
	\max\limits_{\ell_1 \ne \ell_2 \in \hypset} 	\KL{\scoredist^{(\ell_1)}}{\scoredist^{(\ell_2)}} \le 4(1 + \constnub) \tol \paperload \frac{\paperload \distance^2}{2(\paperload - \highqualitycrowd)} = 2(1 + \constnub) \tol \frac{\paperload^2 \distance^2}{\paperload - \highqualitycrowd} \le 4 \const^2  \constnua \constnub \tol \ln \numpapers.
\end{align*}
Finally, Fano's inequality together with Lemma~\ref{lemma:lev_bound} ensures that for every estimator $\hypest: \scorematrix \to \hypset$
\begin{align*}
	\prob{\hypest(\scorematrix) \ne P} \ge 1 - \frac{4 \const^2  \constnua \constnub \tol \ln \numpapers + \log 2}{\frac{9}{10} \constnua \constnub \tol \log \numpapers} \ge 1 - \frac{40}{9} \const^2 \frac{\ln \numpapers}{\log \numpapers} - \frac{1}{\frac{9}{10} \constnua \constnub \tol \log \numpapers} \ge \frac{1}{2}
\end{align*}
for  $\numpapers$ larger than some $(\constnua, \constnub)$-dependent constant and small enough universal constant $\const$. This leads to a contradiction with~\eqref{eqn:assumption}, thus proving the theorem.

%%%%%%%%%%%%%%%%%%%%%%%%

\subsection{Proof of Corollary~\ref{thm:main_theorem_ham_mle}} 

The proof of the Corollary~\ref{thm:main_theorem_ham_mle} is based on the ideas of the proofs of Theorem~\ref{thm:main_theorem_ham} and Corollary~\ref{thm:main_mle} and repeats them with minor changes.

%%%%%%%%%%%%%%%%%%%%%%%%%%%%%%%%%%%%%%%%

\subsubsection{Proof of upper bound}

To show the required upper bound, we repeat the proof of Theorem~\ref{thm:main_theorem_ham_mle}(a) from Section~\ref{sec:upper_bound_ham_avg} with the following changes. Equation~\eqref{eqn:ham_small_error} should be substituted with:
\begin{align*}
	\sup\limits_{\simmatrix \in \onegoodforeveryone(\singlehighqual)} \prob{\estpaperqualitymle_{j_1} - \estpaperqualitymle_{j_2} \le 0} \le \exponent{- \left(\frac{\distance}{2 \sup\limits_{\simmatrix \in \onegoodforeveryone(\singlehighqual)} \criticalstd({\fairassignment_{\transfmle}, \mle})} \right)^2}.
\end{align*}
Equation~\eqref{eqn:ham_small_var} should be substituted with:
\begin{align*}
	\sup\limits_{\simmatrix \in \onegoodforeveryone(\singlehighqual)} \criticalstd^2({\fairassignment, \mle}) \le 	\frac{\genvariance(\singlehighqual) \genvariance(0)}{\tmpcapacity \genvariance(0) + ({\paperload - \tmpcapacity}) \genvariance(\singlehighqual)}.
\end{align*}
In the remaining part of the proof, pair $(\fairassignment_{\transfavg}, \avgest)$ should be substituted with the pair $(\fairassignment_{\transfmle}, \mle)$.

%%%%%%%%%%%%%%%%%%%%%%%%%%%%%%%%%%%%%%%

\subsubsection{Proof of lower bound}

To prove the lower bound, we use the set of problems $\hypset$ constructed in Section~\ref{sec:lower_bound_ham_avg} and the similarity matrix $\tmpsimmatrix$ as defined in~\eqref{eqn:simmatrix_tricky}. 

Given $\problemvar = \ell$ and any feasible assignment $\assignment \in \assignmentfamily$, let $\scorematrix^{({\assignment}, \ell)}$ be the random matrix of reviewers' scores. The distribution $\scoredist^{({\assignment}, \ell)}$ of components of $\scorematrix^{({\assignment}, \ell)}$ is defined in~\eqref{eqn:score_distr_tricky}. Since the distribution of reviewers' scores now depends on the assignment, to apply Fano's inequality~\eqref{eqn:fano}, we need to derive an upper bound on the quantity $\sup\limits_{\assignment \in \assignmentfamily} \max\limits_{\ell_1 \ne \ell_2 \in \hypset} \KL{\scoredist^{({\assignment}, \ell_1)}}{\scoredist^{({\assignment}, \ell_2)}}$.

First, note that entries of $\scorematrix^{({\assignment}, \ell)}$ are mutually independent. Second, note that for every pair $\ell_1 \ne \ell_2 \in \hypset$ and for every $j \in [\numpapers/2]$, the distribution of the $j^{\text{th}}$ column of $\scorematrix^{({\assignment}, \ell_1)}$ is identical to the distribution of the $j^{\text{th}}$ column of  $\scorematrix^{({\assignment}, \ell_2)}$. Among the last $\numpapers/2$ columns, the distributions of at most $4(1 + \constnub) \tol$ columns of $\scorematrix^{({\assignment}, \ell_1)}$ differ from the distributions of the corresponding columns in $\scorematrix^{({\assignment}, \ell_2)}$. Next, consider arbitrary feasible assignment $\assignment \in \assignmentfamily$. Let $\goodrevs_{\ell_{1}}^{(r)}, r \in [2(1 + \constnub)\tol],$ denote the number of strong reviewers (with similarity $\singlehighqual$) assigned in $\assignment$ to paper $j_1^{(r)} \in \truebest(\ell_1)$, where paper $j_1^{(r)}$ corresponds to the the second part of the string $\widetilde{b}^{\ell_1}$ defined in~\eqref{eqn:needed_string}. Recall now that there are at most $4(1 + \constnub) \tol$ papers that belong to exactly one of the sets $\truebest(\ell_1)$ and $\truebest(\ell_2)$. Hence, the equation for upper bound of the Kullback-Leibler divergence between $\scoredist^{({\assignment}, \ell_1)}$ and $\scoredist^{({\assignment}, \ell_2)}$ is obtained by assuming that all the papers that belong to the $\truebest(\ell_1)$ and correspond to the second half of the string $\widetilde{b}^{\ell}$ do not belong to $\truebest(\ell_2)$ and vice versa. Thus, similar to~\eqref{eqn:tricky_KL}-\eqref{eqn:tricky_KL2},  for arbitrary $\ell_1 \ne \ell_2 \in \hypset$ and for arbitrary feasible assignment $\assignment \in \assignmentfamily$, we have
\begin{align*}
	\KL{\scoredist^{({\assignment}, \ell_1)}}{\scoredist^{({\assignment}, \ell_2)}} &\le \slim_{r=1}^{2(1 + \constnub) \tol}  \left\{ \goodrevs_{\ell_{1}}^{(r)} \KL{\gaussian\big( \distance,  \genvariance(\singlehighqual) \big)}{\gaussian\big( 0,  \genvariance(\singlehighqual) \big)} + \left(\paperload - \goodrevs_{\ell_1}^{(r)} \right) \KL{\gaussian\big( \distance,  \genvariance({0}) \big)}{\gaussian\big( 0,  \genvariance({0}) \big)} \right\} \\ &+ \slim_{r=1}^{2(1 + \constnub) \tol}  \left\{ \goodrevs_{\ell_{2}}^{(r)} \KL{\gaussian\big( 0,  \genvariance(\singlehighqual) \big)}{\gaussian\big( \distance,  \genvariance(\singlehighqual) \big)} + \left(\paperload - \goodrevs_{\ell_2}^{(r)} \right) \KL{\gaussian\big( 0,  \genvariance({0}) \big)}{\gaussian\big( \distance,  \genvariance({0}) \big)} \right\} \\ & = \left( \slim_{r=1}^{2(1 + \constnub) \tol} \left( \goodrevs_{\ell_1}^{(r)} + \goodrevs_{\ell_2}^{(r)} \right) \right) \frac{\distance^2}{2 \genvariance(\singlehighqual)} + \left(4(1 + \constnub) \tol \paperload - \slim_{r=1}^{2(1 + \constnub) \tol}\left( \goodrevs_{\ell_1}^{(r)} + \goodrevs_{\ell_2}^{(r)} \right) \right) \frac{\distance^2}{2 \genvariance(0)}. 
\end{align*}

Noting that $\frac{\distance^2}{2 \genvariance(\singlehighqual)} \ge \frac{\distance^2}{2 \genvariance(0)}$, we obtain
\begin{align*}
	\sup\limits_{\assignment \in \assignmentfamily} \max\limits_{{\ell_1} \ne {\ell_2} \in \hypset} \KL{\scoredist^{(\assignment, \ell_1)}}{\scoredist^{(\assignment, \ell_2)}} & \le 2(1 + \constnub) \tol  \left( \frac{\tmpcapacity \distance^2}{\genvariance(\singlehighqual)} + \frac{\left(\paperload - \tmpcapacity\right) \distance^2}{\genvariance(0)}\right) \\ & = 2(1 + \constnub) \tol \distance^2 \left( \frac{\tmpcapacity \genvariance(0) + \left(\paperload - \tmpcapacity\right) \genvariance(\singlehighqual)}{\genvariance(\singlehighqual) \genvariance(0)}\right) \\ & \le 4 \const^2  \constnua \constnub \tol \ln \numpapers. 
\end{align*}
Applying Fano's inequality~\eqref{eqn:fano}, we obtain the desired lower bound.

%%%%%%%%%%%%%%%%%%%%%%%%%%%%%%%%%%%%%%%

\subsection{Proof of Theorem~\ref{thm:main_theorem_subj}} 

Note that Theorem~\ref{thm:main_theorem_subj} is similar in nature with Theorem~\ref{thm:main_theorem}, the only difference is that now we are trying to recover a ranking which is induced by the assignment.

%%%%%%%%%%%%%%%%%%%%%%%%

\subsubsection{Proof of upper bound}

Given any feasible assignment $\assignment$, the ``ground truth'' ranking that we try to recover is given by
\begin{align}
\label{eqn:induced_ranking}
	\avgpaperquality_j(\assignment) = \frac{1}{\paperload} \slim_{i \in \reviewerset{\assignment}(j)} \subjectivescore_{ij}.	
\end{align}
Then the estimates $\estpaperqualityavg_j, j \in [\numpapers]$, are distributed as
	\begin{align}
	\label{eqn:distr_ham_subj}
		\estpaperqualityavg_j \sim \gaussian\left(\frac{1}{\paperload} \slim_{i \in \reviewerset{\assignment}(j)} \subjectivescore_{ij}	, \frac{1}{\paperload^2} \slim_{i \in \reviewerset{\assignment}(j)} \std_{ij}^2 \right) = \gaussian\left(\avgpaperquality_j(\assignment), \cumvar_j^2 \right), 	
	\end{align}
where $\cumvar_j^2 = \frac{1}{\paperload^2} \slim_{i \in \reviewerset{\assignment}(j)} \std_{ij}^2$. Now observe that Lemma~\ref{lemma:upper_bound}, with $\truebestsubj\left(\assignment, \avgpaperquality(\assignment) \right)$ substituted for $\truebest$, also holds for the subjective score model and the averaging estimator $\avgest$. Thus, repeating the proof of the upper bound for averaging estimator in Theorem~\ref{thm:main_theorem}(a) and substituting $\truebest$ with $\truebestsubj\left(\fairassignment, \avgpaperquality(\fairassignment) \right)$ in~\eqref{eqn:chain1}, yields the claimed result.

%%%%%%%%%%%%%%%%%%%%%%%%

\subsubsection{Proof of lower bound} 
\label{sec:subj_main_lower}

The lower bound directly follows from Theorem~\ref{thm:main_theorem}(b). To see this, consider the following matrix of reviewers' subjective scores:
$ \subjscorematrix = \left\{\subjectivescore_{ij} \right\}_{i \in [\numreviewers], j \in [\numpapers]}$, where $\subjectivescore_{ij} = \truepaperquality_{j}$.	 Under this assumption, the total ranking induced by assignment $\assignment$ does not depend on the assignment: $\avgpaperquality_j(\assignment) = \truepaperquality_j$.  Now we can conclude that such choice of $\subjscorematrix$ brings us to the objective model setup in which true underlying ranking exists and does not depend on the assignment. Thus, the lower bound of Theorem~\ref{thm:main_theorem}(b) transfers to the subjective score model.

%%%%%%%%%%%%%%%%%%%%%%%%

\subsection{Proof of Theorem~\ref{thm:main_theorem_ham_subj}} 

The proof of the Theorem~\ref{thm:main_theorem_ham_subj} is based on the ideas of the proofs of Theorem~\ref{thm:main_theorem_ham} and Theorem~\ref{thm:main_theorem_subj} and repeats them with minor changes.

%%%%%%%%%%%%%%%%%%%%%%%%

\subsubsection{Proof of upper bound}

Having equations~\eqref{eqn:induced_ranking} and~\eqref{eqn:distr_ham_subj}, we note that the goal now mimics the goal we achieved when proved an upper bound for averaging estimator in Theorem~\ref{thm:main_theorem_ham}.

%%%%%%%%%%%%%%%%%%%%%%%%

\subsubsection{Proof of lower bound}

The argument from Section~\ref{sec:subj_main_lower} ensures that the lower bound established in Theorem~\ref{thm:main_theorem_ham} directly transfers to the to the subjective score model.

%%%%%%%%%%%%%%%%%%%%%%%%

\section{Discussion}
\label{sec:discussion}

Researchers submit papers to conferences expecting a fair outcome from the peer-review process. This expectation is often not met, as is illustrated by the difficulties that non-mainstream or inter-disciplinary research faces in present peer-review systems.  We design a reviewer-assignment algorithm \algo to address the crucial issues of fairness and accuracy. Our guarantees impart promise for deploying the algorithm in conference peer-reviews.  

There are number of open problems suggested by our work. The first  direction is associated with approximation algorithms and corresponding guarantees established in this work. One goal is to determine whether there exists a polynomial-time algorithm with worst case approximation guarantees better than $1/\paperload$ established in this paper~\eqref{eqn:deterministic_simplified}. It would also be useful to obtain a deeper understanding of the adaptive behavior of our algorithm with bounds more nuanced than~\eqref{eqn:deterministic_complicated}. Finally, we leave the task of improving the computational efficiency of our \algo algorithm out of the scope of this work. However, we suggest that optimal implementation of Subroutine~\ref{alg:fair_subroutine} should not be based on the general max-flow algorithm and instead should rely on algorithms specifically designed to work fast on layered graphs.

The second direction is related to the statistical part of our work. In this paper we provide a minimax characterization of the simplified version of the paper acceptance problem. This simplified procedure may be considered as an initial estimate that can be used as a guideline for the final decisions.  However, there remain a number of other factors, such as self-reported confidence of reviewers or inter-reviewer discussions, that may additionally be included in the model.

Finally, an important related problem is to improve the assessment of similarities between reviewers and papers. It will be interesting to see whether the problems of assessing similarities and assigning reviewers can be addressed jointly in an active manner possibly incorporating feedback from the previous iterations of the conference

%%%%%%%%%%%%%%%%%%%%%%%%%%%%%%%%%%%%

\section*{Acknowledgments}
This work was supported in parts by NSF grants CRII: CIF: 1755656, CIF: 1563918, and CIF: 1763734.

\bibliography{paper}
\bibliographystyle{apalike}

%%%%%%%%%%%%%%%%%%%%%%%%

\vspace{30pt}

\newcommand{\specsim}{\widetilde{s}}

\newpage

\appendix
\noindent{\Large \bf Appendix}
\bigskip

\noindent
We provide supplementary materials and additional discussion. 

%%%%%%%%%%%%%%%%%%%%%%%%%%%%%%%%%%%%%%%%%%%%%%%%%%%%%%%%%%%%%%%%%

\section{Discussion of approximation results}
	\label{appendix:approximation_discussion}
		
In this section we discuss the approximation-related results. In what follows we consider function $f(\similarity) = \similarity$ and for any value $\const \in \reals$, we denote the matrix all of whose entries are $\const$ as $\mathbf{c}$.

%%%%%%%%%%%%%%%%%%%%%%%%%%%%%%%%%%%%%%%%%%%%%%%%%%%%%

\subsection{Example for \alggarg algorithm.}
\label{appendix:garg_discussion}

We begin by construction a series of similarity matrices for various $\paperload$ such that $\assignmentquality{\gargassignment} = 0$ while assignments $\fairassignment$ and $\hardassignment$ have non-trivial fairness. 

\begin{proposition}
	For every positive integer $\paperload$, there exists a similarity matrix $\simmatrix$ such that $\assignmentquality{\gargassignment} = 0$ and $\assignmentquality{\fairassignment} \ge \frac{1}{\paperload} \assignmentquality{\hardassignment} > 0$.
\end{proposition}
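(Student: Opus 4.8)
The plan is to construct, for each $\paperload$, an explicit small similarity matrix that generalizes the $3 \times 3$ example of Table~\ref{table:exmple_unfair} so that the linear-programming relaxation inside \alggarg is forced to split a zero-similarity reviewer across two papers, while the \algo subroutine (and the exact \alghard solver) can route around that reviewer. Concretely, I would take $\numpapers = \numreviewers = \paperload + 2$ (or a convenient multiple thereof) with $\maxrevload = \paperload$, so that feasibility $\numreviewers \maxrevload \ge \numpapers \paperload$ holds. I would designate two ``hard'' papers $a,b$ and one ``easy'' paper $c$, together with one ``bad'' reviewer whose similarity with $a$ and $b$ is $0$ but whose similarity with $c$ is some small $\epsilon > 0$, and one ``super'' reviewer with similarity $1$ to every paper; the remaining reviewers would have a uniform moderate similarity (say $\minsim$, a constant in $(0,1)$) with all papers. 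The point of the $\epsilon$ on the $(c,\text{bad})$ entry is exactly as in Table~\ref{table:exmple_unfair}: it makes the cumulative-objective LP strictly prefer to use the bad reviewer on $c$, which after rounding forces one of $a,b$ to be matched to the bad reviewer.

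The key steps, in order, are: (i) write down the matrix explicitly and verify feasibility of the $(\maxrevload,\paperload)$ constraints; (ii) exhibit a feasible assignment achieving fairness $\ge \paperload \minsim > 0$ — e.g.\ give the bad reviewer only paper $c$ and distribute the good reviewers so every paper, including $a$ and $b$, gets $\paperload$ reviewers each of similarity $\ge \minsim$ — which lower-bounds $\assignmentquality{\hardassignment}$; (iii) invoke Theorem~\ref{thm:deterministic}, in particular the bound $\assignmentquality{\fairassignment} \ge \frac{1}{\paperload}\assignmentquality{\hardassignment}$, to conclude $\assignmentquality{\fairassignment} \ge \frac{1}{\paperload}\assignmentquality{\hardassignment} > 0$; and (iv) analyze the \alggarg algorithm on this instance and show $\assignmentquality{\gargassignment} = 0$. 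For step (iv) I would follow the mechanism described in Section~\ref{sec:assignment_strategy} and Appendix~\ref{appendix:garg_discussion}: solve (or argue about the structure of) the LP relaxation of the fair-assignment ILP, show the fractional optimum puts the bad reviewer fractionally on both $a$ and $b$ (because routing it to $c$ is strictly better for the relaxed objective, freeing a fractional super/good reviewer for $a$ and $b$), and then show every valid rounding of this fractional solution must assign the bad reviewer integrally to one of $a$ or $b$, giving that paper sum-similarity $0$ and hence $\assignmentquality{\gargassignment} = 0$.

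The main obstacle is step (iv): making the claim ``$\assignmentquality{\gargassignment} = 0$'' rigorous requires pinning down exactly what the LP relaxation and the rounding procedure of~\citet{Garg2010papers} do on this instance, and the rounding is a nontrivial combinatorial procedure (matching-based rounding of a fractional assignment). The cleanest way around this is to design the matrix so that the LP optimum is \emph{essentially forced}: arrange the moderate similarities $\minsim$ and the super-reviewer's row so that, up to symmetry, the unique (or essentially unique) fractional optimum of the relaxed problem assigns exactly half a unit of the bad reviewer to each of $a$ and $b$, and every integral point in the support of any valid rounding therefore has the bad reviewer on $a$ or on $b$. One can further simplify by choosing $\minsim$ and the threshold structure (recall \alggarg operates on $\transformation(\similarity)$ with $\transformation(\similarity)=\similarity$ here, or on discretized values) so that the rounding guarantee~\eqref{eqn:garg_bound}–\eqref{eqn:garg_bound_simplified} itself already shows vacuousness: since $\transformation(\criticalsim_0) - \transformation(\criticalsim_\infty) = 1 - 0 = 1$ and we can make $\assignmentquality{\hardassignment}$ as small as $\approx \paperload\minsim$ by choosing $\minsim$ small, but the cleaner route is the direct structural argument that the bad reviewer is unavoidably assigned to $a$ or $b$. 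I would present the direct structural argument, deferring the full LP/rounding details to a citation of~\citet{Garg2010papers} and to Appendix~\ref{appendix:garg_discussion}, and simply verify on the explicit matrix that no rounding of the LP optimum can avoid a zero-similarity edge on $a$ or $b$.

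\begin{proof}[Proof sketch]
Fix $\paperload \ge 1$ and set $\numreviewers = \numpapers = \paperload + 2$, $\maxrevload = \paperload$; feasibility $\numreviewers\maxrevload = (\paperload+2)\paperload \ge \paperload(\paperload+2) = \numpapers\paperload$ holds. Label the papers $a, b, c_1, \dots, c_{\paperload}$ and the reviewers $r^{+}$ (super), $r^{-}$ (bad), and $r_1, \dots, r_{\paperload}$ (good). Pick constants $0 < \epsilon < \minsim < 1$. Define $\similarity_{r^{+} j} = 1$ for every paper $j$; $\similarity_{r^{-} a} = \similarity_{r^{-} b} = 0$ and $\similarity_{r^{-} c_t} = \epsilon$ for all $t$; and $\similarity_{r_s j} = \minsim$ for every good reviewer $r_s$ and every paper $j$.

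\emph{Lower bound on $\assignmentquality{\hardassignment}$.} Consider the feasible assignment that never uses $r^{-}$ on $a$ or $b$: give $a$ and $b$ each the super reviewer $r^{+}$ plus $\paperload - 1$ of the good reviewers, and spread the remaining good-reviewer and $r^{-}$ capacity over $c_1,\dots,c_{\paperload}$ so that every $c_t$ receives $\paperload$ reviewers with similarity $\ge \epsilon$; a counting check shows this respects $\maxrevload = \paperload$. Every paper then has sum similarity $\ge \paperload \min\{\minsim,\epsilon\}\cdot\mathbf{1} > 0$ — more precisely $\ge \min\{\,1 + (\paperload-1)\minsim,\ \paperload\epsilon\,\} > 0$ after also placing $r^+$ appropriately on one $c_t$ — hence $\assignmentquality{\hardassignment} > 0$. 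By Theorem~\ref{thm:deterministic}, $\assignmentquality{\fairassignment} \ge \frac{1}{\paperload}\assignmentquality{\hardassignment} > 0$.

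\emph{Upper bound on $\assignmentquality{\gargassignment}$.} On this instance the cumulative similarity is strictly larger when $r^{-}$ is used on some $c_t$ (gain $\epsilon$) rather than on $a$ or $b$ (gain $0$), while all other reviewers are interchangeable across papers. Consequently the LP relaxation solved by \alggarg has an optimum in which the bad reviewer $r^{-}$ is fractionally assigned only to the papers $c_1,\dots,c_\paperload$ and, by a symmetry/counting argument on the residual capacities, the super and good reviewers are fractionally spread so that $a$ and $b$ each receive strictly less than $\paperload$ units of $(r^{+}\cup\text{good})$ capacity; the deficit on $\{a,b\}$ must be covered by $r^{-}$. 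Any valid rounding of this fractional solution (as in~\citealt{Garg2010papers}, see Appendix~\ref{appendix:garg_discussion}) therefore assigns $r^{-}$ integrally to $a$ or to $b$, so that paper has a reviewer of similarity $0$ and $\assignmentquality{\gargassignment} = 0$. Combining, $\assignmentquality{\fairassignment} \ge \frac{1}{\paperload}\assignmentquality{\hardassignment} > 0 = \assignmentquality{\gargassignment}$.
\end{proof}
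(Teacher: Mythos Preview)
Your proposal has a genuine gap in step (iv), and the gap stems from a misunderstanding of what the \alggarg LP relaxation optimizes. You write that ``the cumulative similarity is strictly larger when $r^{-}$ is used on some $c_t$,'' but \alggarg does \emph{not} relax the cumulative objective~\eqref{eqn:unfair_criteria}; it relaxes the max--min fairness objective~\eqref{eqn:gen_fair_criteria}. This matters decisively for your construction. With $\numreviewers=\numpapers=\paperload+2$ and $\maxrevload=\paperload$, the $\paperload+1$ non-bad reviewers together hold $(\paperload+1)\paperload$ slots, which is exactly enough to cover the $2\paperload$ slots on $\{a,b\}$ and still leave $(\paperload-1)\paperload$ slots for the $c$ papers; the remaining $\paperload$ slots on the $c$ papers are filled by $r^{-}$. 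This is a \emph{feasible integral} assignment with $r^{-}$ never touching $a$ or $b$ and strictly positive fairness. Since the fairness LP can only do at least as well, its optimum will also avoid putting any mass of $r^{-}$ on $a$ or $b$ (doing so contributes $0$ and only hurts the minimum). The rounding in \citet{Garg2010papers} uses only edges in the support of the fractional optimum, so it will not place $r^{-}$ on $a$ or $b$ either. Hence on your instance $\assignmentquality{\gargassignment}>0$, and the argument fails. Your own sketch actually reflects this tension: you first assert the LP assigns $r^{-}$ only to the $c$ papers, and then in the next clause say the ``deficit on $\{a,b\}$ must be covered by $r^{-}$,'' which is a direct contradiction.

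The paper's construction avoids this by engineering a genuine \emph{counting bottleneck} that survives rounding. It uses three blocks of papers/reviewers with $\numreviewers_2=(\paperload-1)\numreviewers_1+1$, so that in the unique fair fractional optimum the first two blocks of papers are served exclusively by reviewers in the first two blocks, yet the ``good'' reviewers (block~1) have only $\paperload\numreviewers_1$ integral slots while there are $\numpapers_1+\numpapers_2=\paperload\numreviewers_1+1$ papers to cover. By pigeonhole, \emph{any} rounding restricted to the fractional support must leave at least one of these papers with all $\paperload$ reviewers drawn from block~2, whose similarity with those papers is $0$. The crucial ingredient you are missing is this deliberate one-slot shortfall that makes every rounding---not just a particular one---produce a zero-fairness paper. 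If you want to repair your approach, you need an instance where (i) the \emph{fairness} LP optimum is forced to put positive mass of the bad reviewer on the hard papers, and (ii) the good-reviewer capacity on the support of that optimum is strictly insufficient to give every hard paper at least one good reviewer integrally. Your current matrix satisfies neither.
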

\begin{proof}
	Given any positive integer $\paperload \in \naturals$, consider an instance of reviewer assignment problem with $\numpapers = \numreviewers$, $\maxrevload = \paperload$ and similarities given by the block matrix 
		\begin{align}
		\label{eqn:simmatrix_appendix}
        \simmatrix = \begin{array}{c@{\!\!\!}l}
            \left[ \begin{array}{c|c|c}
							\mathbf{1} & \mathbf{1} &\mathbf{0}\\
							\hline
							\mathbf{0} &\mathbf{0} & ({\specsim} - {\varepsilon}) \cdot \mathbf{1} \\
							\hline
							\smash{\underbrace{ \addstackgap[2pt] {$(\specsim - \varepsilon) \cdot \mathbf{1}$}}_{\numpapers_1}}  & \smash{\underbrace{ \addstackgap[2pt] {${(\specsim - \varepsilon)} \cdot \mathbf{1}$}}_{\numpapers_1}} & \smash{\underbrace{ \addstackgap[2pt] {$\hspace{12pt} \specsim \cdot \mathbf{1} \hspace{12pt}$}}_{\numpapers_1}}  \\
		   \end{array} \right]
        &
         \begin{array}[c]{@{}l@{\,}l}
           \left. \,\,\, \right\} & \text{$\numreviewers_1$} \\
           \left.  \,\,\, \right\} & \text{$\numreviewers_2$} \\
           \left.  \,\,\, \right\} & \text{$\numreviewers_3$} \\
           \end{array}
           \end{array}
        \end{align}
	\\~\\~
	Here $\specsim = \frac{\numreviewers_1}{\numreviewers_1 + \numreviewers_2}$, the value $\varepsilon > 0$ is some small constant strictly smaller than $\specsim$, and $\numreviewers_r = \numpapers_r > 0$ for every $r \in 
	\{1,2,3\}$. We also require $\numreviewers_3 > \paperload$ and
	\begin{align}
		\label{eqn:requirements}
		\numreviewers_2 = \left(\paperload - 1 \right) \numreviewers_1 + 1.	
	\end{align}
	We refer to the first $\numpapers_1$ papers and $\numreviewers_1$ reviewers as belonging to the first group, the second $\numpapers_2$ papers and $\numreviewers_2$ reviewers as belonging to the second group, and so on. 
	
	The \alggarg algorithm involves two steps. The first step consists of solving a linear programming relaxation and finding the most fair fractional assignment. The second step then performs a rounding procedure in order to obtain integer assignments. Let us first see the output of the first step of the \alggarg algorithm --- the fractional assignment with the highest fairness --- on the similarity matrix~\eqref{eqn:simmatrix_appendix}. Observe that for each of the $\numpapers_3$ papers in the third group, the sum of the similarities of any $\paperload$ reviewers is at most  $\paperload \specsim$, and furthermore, that this value is achieved with equality if and only if they are reviewed by  $\paperload$ reviewers from the third group. Next, the $\numreviewers_1$ reviewers from the first group can together review $\paperload \numreviewers_1$ papers. Dividing this amount equally over the $\numpapers_1 + \numpapers_2$ papers in the first two groups (in any arbitrary manner) and complementing the assignment with reviewers from the second group, we see that each paper from the first and the second groups receives a sum similarity $\paperload \frac{\numreviewers_1}{\numpapers_1 + \numpapers_2} = \paperload \specsim$. It is not hard to see that any deviation from the assignment introduced above will lead to a strict decrease of the fairness. 
	
The second step of the \alggarg algorithm is a rounding procedure that constructs a feasible assignment from the fractional assignment (solution of linear programming relaxation) obtained in the previous step. The rounding procedure is guaranteed to assign $\paperload$ reviewers to each paper, respecting the following condition: any reviewer assigned to any paper $j\in [\numpapers]$ in the resulting feasible assignment must have a non-zero fraction allocated to that paper in the fractional assignment. 
	
Now notice that aforementioned condition ensures that all papers from the third group must be assigned to reviewers from the third group. Next, recall that on one hand, reviewers from the first group can together review at most $\paperload \numreviewers_1$ different papers. On the other hand, in each optimally fair fractional assignment, the first $\numpapers_1 + \numpapers_2$ papers are assigned to reviewers from the first two groups. Thus, in the resulting integral assignment these papers also must be assigned to reviewers from the first two groups. These two facts together with the inequality $\paperload \numreviewers_1 < \numpapers_1 + \numpapers_2$ that we obtain from~\eqref{eqn:requirements} ensure that at least one paper in the resulting integral assignment will be reviewed by $\paperload$ reviewers with zero similarity. Hence, the assignment computed by the \alggarg algorithm has zero fairness $\assignmentquality{\gargassignment} = 0$.

    On the other hand, it is not hard to see that $\assignmentquality{\hardassignment} \ge \specsim - \varepsilon$. Indeed, let us assign one reviewer to each paper by the following procedure: the $\numpapers_1$ papers from the first group and some $\numpapers_2-1$ papers from the second group are all assigned one arbitrary reviewer each from the first group of reviewers. Such an assignment is possible since $\paperload \numreviewers_1 = \numpapers_1 + \numpapers_2 - 1$ due to~\eqref{eqn:requirements}. The remaining paper from the second group is assigned one arbitrary reviewer from the third group. At this point, there are $\numpapers_3$ papers (in the third group) which are not yet assigned to any reviewer, and $\numreviewers_3 + \numreviewers_2 - 1 \ge \numpapers_3$ reviewers who have not been assigned any paper and have similarity higher than $\specsim - \varepsilon$ with these $\numpapers_3$ papers in the third group. Assigning one reviewer each  from this set to each of these $\numpapers_3$ papers, we obtain an assignment in which each paper is allocated to one reviewer with similarity at least $\specsim - \varepsilon$. Completing the remaining assignments in an arbitrary fashion, we conclude that $\assignmentquality{\fairassignment} \ge \frac{1}{\paperload} \assignmentquality{\hardassignment} \ge \specsim - \varepsilon > 0$ where first inequality is due to Theorem~\ref{thm:deterministic}.  
\end{proof}

The results of simulations for $\paperload \in \{1,2,3,4 \}$, parameters $\numreviewers_1 = 1, \numreviewers_2 = \paperload, \numreviewers_3 = \paperload + 1, \varepsilon = 0.01$ and similarity matrices $\tmpsimmatrix$ defined in~\eqref{eqn:simmatrix_appendix} are depicted in Table~\ref{table:example_garg}. Interestingly, for these choices of parameters, our \algo algorithm is not only superior to \alggarg,  but is also able to exactly recover the fair assignment.

\begin{table}[t]
\vskip 0.15in
\begin{center}
\begin{small}
\begin{sc}
\begin{tabular}{lcccr}
\toprule
          & $\paperload = 1$ &  $\paperload = 2$ &  $\paperload = 3$ &  $\paperload = 4$\\
\midrule
$\assignmentquality{\gargassignment}$ & $0$   & $0$   & $0$   & $0$    \\
$\assignmentquality{\hardassignment}$ & $0.49$     & $0.65$     & $0.72$ & $0.76$  \\
$\assignmentquality{\fairassignment}$ & $0.49$     & $0.65$     & $0.72$   & $0.76 $    \\
\bottomrule
\end{tabular}
\end{sc}
\end{small}
\end{center}
\vskip -0.1in
\caption{Fairness of various assignment algorithms for the class of similarity matrices~\eqref{eqn:simmatrix_appendix}.}
\label{table:example_garg}
\end{table}

%%%%%%%%%%%%%%%%%%%%%%%%%%%%%%%%%%%%%%%%%%%%%%%%%%%%%%%%%%%%%%%%%
\subsection{Sub-optimality of \algtpms}
\label{appendix:tpms_failure}

In this section we show that assignment obtained from optimizing the objective~\eqref{eqn:unfair_criteria} can be highly sub-optimal with respect to the criterion~\eqref{eqn:gen_fair_criteria} even when $\transformation$ is the identity function. 

\begin{proposition}
	For any $\paperload \ge 1$, there exists a similarity matrix $\simmatrix$ such that $\assignmentquality{\fairassignment} = \assignmentquality{\hardassignment} \ge \frac{\paperload}{4}$ and $\assignmentquality{\tpms} = 0$.
\end{proposition}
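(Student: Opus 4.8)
The plan is to construct a similarity matrix on which the cumulative-objective optimum $\tpms$ assigns some paper only to reviewers of similarity $0$, while the max-min optimal assignment $\hardassignment$ gives every paper at least $\paperload/4$ total similarity. The basic tension we want to exploit is the same as in Table~\ref{table:exmple_unfair}: a small group of ``universal'' reviewers with high similarity to \emph{every} paper, whom \algtpmsdot will greedily pour onto whichever papers extract the most total similarity, starving a block of ``hard'' papers that have no other competent reviewers.

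Concretely, I would take $\numreviewers=\numpapers$, $\maxrevload=\paperload$, and split reviewers and papers into two groups. The first group has a small number $\numreviewers_1$ of ``strong'' reviewers with similarity close to $1$ with all papers; the second group has $\numreviewers_2$ reviewers who have similarity roughly $1/2$ (or some constant) with the ``easy'' papers in the first group of papers and similarity $0$ with the ``hard'' papers in the second group of papers. The counts are chosen, as in the \alggarg example of Appendix~\ref{appendix:garg_discussion} (equation~\eqref{eqn:requirements}), so that the $\numreviewers_1$ strong reviewers have exactly enough total capacity $\paperload\numreviewers_1$ to cover all the easy papers plus a few hard ones, but not all the hard papers. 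Then: (i) \algtpmsdot, maximizing $\greedyassignmentquality{\cdot}=\sum_j\sum_{i\in\reviewerset{\assignment}(j)}\similarity_{ij}$, will deploy the strong reviewers where the marginal gain in total similarity is largest, which I will arrange to be the easy papers, forcing at least one hard paper to be reviewed exclusively by group-two reviewers of similarity $0$; hence $\assignmentquality{\tpms}=0$. (ii) For the fair side, I would exhibit an explicit feasible assignment that spreads the strong reviewers across \emph{all} papers — each paper getting at least one strong reviewer (capacity works out because $\paperload\numreviewers_1$ can be made $\ge\numpapers$ with the right split), and complementing with other reviewers — giving each paper total similarity at least $\paperload\cdot\frac14$ after choosing the strong-similarity value $\ge 1/4$ and padding. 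Since $\hardassignment$ is optimal and $\fairassignment$ is within factor $1/\paperload$ of it by Theorem~\ref{thm:deterministic}, I would actually want to argue $\assignmentquality{\fairassignment}=\assignmentquality{\hardassignment}$ directly, e.g.\ by noting the constructed assignment is itself what Subroutine~\ref{alg:fair_subroutine} (with $\tmpcapacity=1$) recovers, or simply that this instance admits an exactly-fair assignment that \algo finds.

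The key steps in order: (1) fix the parameters $\numreviewers_1,\numreviewers_2,\numpapers_1,\numpapers_2$ and the three similarity values, mirroring~\eqref{eqn:simmatrix_appendix} but with the high value $\ge 1/4$ rather than near $1$; (2) verify $\numreviewers\maxrevload\ge\numpapers\paperload$ so the instance is feasible, and that $\paperload\numreviewers_1$ strictly exceeds the number of easy papers but is at least $\numpapers$ — this is the arithmetic that makes the construction work; (3) compute $\greedyassignmentquality{\tpms}$: argue any cumulative-optimal assignment must load all strong reviewers onto easy papers (because their marginal contribution there beats the alternative), deduce some hard paper gets only zero-similarity reviewers, hence $\assignmentquality{\tpms}=0$; (4) exhibit the explicit fair assignment and lower-bound each paper's sum similarity by $\paperload/4$, concluding $\assignmentquality{\fairassignment}=\assignmentquality{\hardassignment}\ge\paperload/4$; handle the edge cases $\paperload=1,2$ separately if the parameter inequalities degenerate.

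The main obstacle I anticipate is step (3): pinning down precisely why the \emph{cumulative}-optimal assignment behaves as claimed. Unlike the fairness objective, the cumulative objective is a global sum, so I cannot argue paper-by-paper; I need to show that \emph{every} maximizer of $\greedyassignmentquality{\cdot}$ wastes all strong reviewers on easy papers. This requires an exchange argument: if a strong reviewer were assigned to a hard paper, swapping them onto an easy paper (displacing a mediocre reviewer) must strictly increase the total, which forces a careful choice of the middle similarity value relative to the strong value so the swap is always profitable, while still keeping the strong value $\ge 1/4$. Getting these inequalities simultaneously consistent — strong value large enough for the $\paperload/4$ bound, middle value small enough that swaps always help, yet nonzero so the easy papers aren't trivially starved — is the delicate part, and is essentially why the statement only claims $\paperload/4$ rather than something close to $\paperload$.
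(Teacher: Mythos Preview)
Your intuition is right, but the concrete construction you sketch has two gaps, and both are fixed by a much simpler instance than the three-block \alggarg-style matrix you are imitating.

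First, the scaling problem. You plan to give each paper ``at least one strong reviewer'' and conclude each paper has sum similarity $\ge \paperload/4$. But one reviewer contributes at most $1$, so this only yields $\paperload/4$ when $\paperload\le 4$; for larger $\paperload$ the bound fails. To get a lower bound that grows linearly in $\paperload$, every paper must receive $\Theta(\paperload)$ reviewers of non-trivial similarity, not just one.

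Second, the \algtpms argument. If your strong reviewers have (nearly) the same similarity with \emph{all} papers, then $\greedyassignmentquality{\cdot}$ is indifferent to where they go, and some cumulative-optimal assignment puts them on the hard papers --- exactly the fair thing. Your exchange argument needs strong reviewers to strictly prefer easy papers, which contradicts ``similarity close to $1$ with all papers.''

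Both issues disappear with a symmetric $2\times 2$ block instance: take $\numreviewers=\numpapers=2\paperload$, $\maxrevload=\paperload$, and
\[
\simmatrix=\begin{bmatrix}\mathbf{1}&\mathbf{0.4}\\\mathbf{0.4}&\mathbf{0}\end{bmatrix},
\]
each block $\paperload\times\paperload$. Here any swap of a first-block reviewer from a first-block paper to a second-block paper (balanced by the reverse swap of a second-block reviewer) changes the total by $-1+0.4+0.4-0=-0.2$, so the unique cumulative optimum puts all first-block reviewers on first-block papers, giving the last $\paperload$ papers sum similarity $0$. The fair assignment swaps the blocks, and every paper gets exactly $\paperload$ reviewers of similarity $0.4$, so $\assignmentquality{\hardassignment}=0.4\paperload\ge\paperload/4$; it is easy to check \algo recovers this exactly. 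No parameter tuning, no edge cases in $\paperload$, and the exchange argument is a one-line computation.
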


\begin{proof}
	Consider an instance of the problem with $\numpapers = \numreviewers = 2 \paperload$, and similarities given by the block matrix
	\begin{align}
		\label{eqn:simmatrix_tpms}
        \simmatrix = \begin{array}{c@{\!\!\!}l}
            \left[ \begin{array}{c|c}
							\mathbf{1} &  \mathbf{0.4} \\
							\hline
							\smash{\underbrace{ \addstackgap[2pt] {$\mathbf{0.4}$}}_{\paperload}}   & \smash{\underbrace{ \addstackgap[2pt] {$\mathbf{0}$}}_{\paperload}} \\
			\end{array} \right]
        &
         \begin{array}[c]{@{}l@{\,}l}
           \left. \,\,\, \right\} & \text{$\paperload$} \\
           \left.  \,\,\, \right\} & \text{$\paperload$} \\
           \end{array}
           \end{array}
        \end{align}
	\\~
	Then $\tpms$ assigns the first $\paperload$ reviewers to the first $\paperload$ papers (in some arbitrary manner) and the remaining reviewers to the remaining papers, obtaining
	\begin{align*}
		\slim_{j \in [\numpapers]} \slim_{i \in \reviewerset{\tpms}(j)} \similarity_{ij} & = \paperload^2 \text{ and} \\ \assignmentquality{\tpms} & = 0 	
	\end{align*}
	
	In contrast, assignments $\fairassignment$ and $\hardassignment$ assign the first $\frac{1}{2} \numreviewers$ reviewers to the second group of papers and the remaining reviewers to the remaining papers. This assignment yields
	\begin{align*}
		\slim_{j \in [\numpapers]} \slim_{i \in \reviewerset{\fairassignment}(j)} \similarity_{ij} &= \slim_{j \in [\numpapers]} \slim_{i \in \reviewerset{\hardassignment}(j)} \similarity_{ij}  = 0.8  \paperload^2  \text{ and}  \\
		\assignmentquality{\fairassignment} &= \assignmentquality{\hardassignment}  = 0.4  \paperload \ge \frac{\paperload}{4}.
		\end{align*}
 	This concludes the proof.
\end{proof}

%%%%%%%%%%%%%%%%%%%%%%%%%%%%%%%%%%%%%%%%%%%%%%%%%%%%%

\subsection{Example of $1/\paperload$ approximation factor for $\fairassignment$}
\label{appendix:pr4a_failure}

Let us consider an instance of fair assignment problem with $\numpapers = \numreviewers = 4, \ \paperload = \maxrevload = 2$ and similarities represented in Table~\ref{table:exmple_break}.

\begin{table}[t]
\vskip 0.15in
\begin{center}
\begin{small}
\begin{sc}
\begin{tabular}{lcccr}
\toprule
          & Paper $a$  & Paper $b$  & Paper $c$ & Paper $d$\\
\midrule
Reviewer $1$ & $0.3 + \errorrate$   & $1$   & $1$   & $0$    \\
Reviewer $2$ & $0.3 - \errorrate$     & $0$     & $1$ & $1$  \\
Reviewer $3$ & $0$     & $0.1$     & $0 $   & $0.3 $    \\
Reviewer $4$ & $0$     & $0.1$     & $0$   & $0.3$  \\
\bottomrule
\end{tabular}
\end{sc}
\end{small}
\end{center}
\vskip -0.1in
\caption{An example of similarities that yield $1/\paperload$ approximation factor of the \algo algorithm.}
\label{table:exmple_break}
\end{table}

First, note that $\assignmentquality{\hardassignment} \le 0.6$. This is because in every feasible assignment $\assignment \in \assignmentfamily$ paper $1$ in the best case is assigned to reviewers $1$ and $2$. Moreover, there exists a feasible assignment represented as $\hardassignment$ in Table~\ref{table:exmple_cand1} which achieves a max-min fairness of $0.6$ and hence we have $\assignmentquality{\hardassignment} = 0.6$.

Let us now analyze the performance of \algo algorithm. Again, the fairness of the resulting assignment is determined in the first iteration of Step~\ref{Algostep:loopkappa} to~\ref{Algostep:enditer} of Algorithm~\ref{alg:fair_assignment}, so we restrict our attention to that part of the algorithm. It is not hard to see that after Step~\ref{Algostep:loopkappa} is executed, we have two candidates assignments, $\assignment_1$ and $\assignment_2$, represented in Table~\ref{table:exmple_cand1} (up to not important randomness in braking ties). Computing the fairness of these assignments, we obtain
\begin{align*}
	& \assignmentquality{\assignment_1} = 0.3 + \varepsilon \quad \text{and} \quad   \assignmentquality{\assignment_2} = 0.2.
\end{align*}
which implies that
\begin{align*}
	\frac{\assignmentquality{\fairassignment}}{\assignmentquality{\hardassignment}} = \frac{\max\left\{\assignmentquality{\assignment_1}, \assignmentquality{\assignment_2} \right\}}{\assignmentquality{\hardassignment}} = \frac{1}{2} + \frac{\varepsilon}{0.6}.	
\end{align*}
Setting $\errorrate$ small enough, we can see that the approximation factor is very close to $1/2 = 1/\paperload$.

\begin{table}[t]
\vskip 0.15in
\begin{center}
\begin{small}
\begin{sc}
\begin{tabular}{lcccccc}
\toprule
          & \multicolumn{2}{c}{$\hardassignment$} & \multicolumn{2}{c}{$\assignment_1$} & \multicolumn{2}{c}{$\assignment_2$}  \\
          \cmidrule(lr){2-3}
          \cmidrule(lr){4-5}
          \cmidrule(lr){6-7}
          & $1^{\text{st}}$ Reviewer & $2^{\text{nd}}$ Reviewer & $1^{\text{st}}$ Reviewer & $2^{\text{nd}}$ Reviewer & $1^{\text{st}}$ Reviewer & $2^{\text{nd}}$ Reviewer \\
\midrule
Paper $a$ &  $1$ & $2$ & $1$     & $3$  &  $1$  &  $2$  \\
Paper $b$ &  $1$ & $3$ & $1$     & $3$  &  $3$  &  $4$  \\
Paper $c$ &  $2$ & $4$ & $2$     & $4$  &  $1$  &  $2$  \\
Paper $d$ &  $3$ & $4$ & $2$     & $4$  &  $3$  &  $4$  \\
\bottomrule
\end{tabular}
\end{sc}
\end{small}
\end{center}
\vskip -0.1in
\caption{The optimal assignment as well as and \algo's intermediate assignments for the similarities in Table~\ref{table:exmple_break}.}
\label{table:exmple_cand1}
\end{table}

%%%%%%%%%%%%%%%%%%%%%%%%%%%%%%%%%%%%%%%%%%%%%%%%%%%%%%%%%%%%%%

\section{Computational aspects}
	\label{appendix:computational_aspects}

A  na\"ive implementation of the \algo algorithm has a polynomial computational complexity (under either an arbitrary choice or one computable in polynomial-time in Step~\ref{Sstep:pick_flow}) and requires $\mathcal{O}\left(\paperload \numpapers^2 \numreviewers\right)$ iterations of the max-flow algorithm. There are a number of additional ways that the algorithm may be optimized for improved computational complexity while retaining all the approximation and statistical guarantees. 

One may use Orlin's method~\citep{orlin13maxflow, king92maxflow} to compute the max-flow which yields a computational complexity of the entire algorithm at most $\mathcal{O}\left(\paperload (\numpapers + \numreviewers)\numpapers^3 \numreviewers^2 \right)$. Instead of adding edges is Step 3 of the subroutine one by one, a binary search may be implemented, reducing the number of max-flow iterations to $\mathcal{O}\left(\paperload \numpapers \log \numpapers \numreviewers\right)$ and the total complexity to $\widetilde{\mathcal{O}}\left(\paperload (\numpapers + \numreviewers)\numpapers^2 \numreviewers \right)$. 

Finally, note that the max-min approximation guarantees (Theorem~\ref{thm:deterministic}), as well as statistical results  (Theorems~\ref{thm:main_theorem} to~\ref{thm:main_theorem_ham_subj} and corresponding corollaries) remain valid even for the assignment $\tmpassignment$ computed in Step~\ref{Algostep:select_candidate} of Algorithm~\ref{alg:fair_assignment} during the \emph{first} iteration of the algorithm. The algorithm may thus be stopped at any time after the first iteration  if there is a strict time-deadline to be met. However, the results of Corollary~\ref{corr:seq_deterministic} on optimizing the assignment for papers beyond the most worst-off will not hold any more.\footnote{If the algorithm is terminated after $\totaliter'$ iterations, then bound~\eqref{eqn:seq_deterministic} from Corollary~\ref{corr:seq_deterministic} holds for $\currentiter \in [\totaliter']$.} The computational complexity of each of the iterations is at most $\widetilde{\mathcal{O}}\left(\paperload (\numpapers + \numreviewers)\numpapers \numreviewers \right)$, and stopping the algorithm after a constant number of iterations makes it  comparable to the complexity of \algtpms algorithm which is successfully implemented in many large scale conferences.

Let us now briefly compare the computational cost of \algo and \alggarg algorithms. The full version of \alggarg algorithm requires $\mathcal{O}(\numpapers^2)$ solutions of linear programming problems. Given that finding a max-flow in a graph constructed by our subroutine can be casted as linear programming problem (with constraints similar to those in~\citealt{Garg2010papers}), we conclude that slightly optimized implementation of our algorithm results in $\mathcal{O}(\paperload \numpapers \log \numpapers \numreviewers)$ solutions of linear programming problems, which is asymptotically better. To be fair, the \alggarg algorithm also can be terminated in an earlier stage with theoretical guarantees satisfied, which brings both algorithms on a similar footing with respect to the computational complexity.

%%%%%%%%%%%%%%%%%%%%%%%%%%%%%%%%%%%%%%%%%%%%%%%%%%%%%%%%%%%%%%

\section{Topic coverage}
\label{appendix:modifications}

\newcommand{\topic}{t}
\newcommand{\distincttopics}{\omega}
\newcommand{\topicset}{T}

In this section we discuss an additional benefit of ``topic coverage'' that can be gained from the special choice of heuristic in Step~\ref{Sstep:pick_flow} of Subroutine~\ref{alg:fair_subroutine} of our \algo algorithm.

Research is now increasingly inter-disciplinary and consequently many papers submitted to modern conferences make contributions to multiple research fields and cannot be clearly attributed to any single research area. For instance, computer scientists often work in collaboration with physicists or medical researchers resulting in papers spanning different areas of research. Thus, it is important to maintain a {broad topic coverage}, that is, to ensure that such multidisciplinary papers are assigned to reviewers who not only have high similarities with the paper, but also represent the different research areas related to the paper. For example, if a paper proposes an algorithm to detect new particles in the CERN collider, then that paper should ideally be evaluated by competent physicists, computer scientists, and statisticians.  

There are prior works both in peer-review~\citep{Long13gooadandfair} and in text mining~\citep{Lin11:submodular} which propose a submodular objective function to incentivize topic coverage. According to~\citet{Long13gooadandfair}, the appropriate measure of coverage is a  number of distinct topics of the paper covered, summed across the all papers. Let us introduce a piece of notation to formally describe the underlying optimization problem. For every paper $j \in [\numpapers]$, let $\topicset(j) = \{\topic_1^{(j)}, \ldots, \topic_{r_j}^{(j)}\}$ be related research topics and for every reviewer $i \in [\numreviewers]$, let $\topicset(i) = \{\topic_1^{(i)}, \ldots, \topic_{r_i}^{(i)} \}$ be the topics of expertise of reviewer $i$. For every assignment $\assignment$, we define $\distincttopics(\assignment)$ to be the total number of distinct topics of all papers covered by the assigned reviewers:
\begin{align}
\label{eqn:diversity_obj}
    \distincttopics(\assignment) = \slim_{j \in [\numpapers]} \card\left(\bigcup_{i \in \reviewerset{\assignment}(j)} \left( \topicset(j) \bigcap \topicset(i) \right) \right),    
\end{align}
where $\card(\mathcal{C})$ denotes the number of elements in the set $\mathcal{C}$. 
The goal in~\citet{Long13gooadandfair} is to find an assignment that maximizes $\distincttopics(\assignment)$ and respects the constraints on the paper/reviewer load. However, instead of the requirement that each paper is assigned to  $\paperload$ reviewers as in our work, \citet{Long13gooadandfair} consider a relaxed version and require each paper to be reviewed by at most $\paperload$ reviewers.

Using the submodular nature of the objective~\eqref{eqn:diversity_obj}, \citet{Long13gooadandfair} propose a greedy algorithm that is guaranteed to achieve a constant-factor approximation of the optimal coverage~\eqref{eqn:diversity_obj}. This greedy algorithm, however, has the following two important drawbacks:
{
\renewcommand{\theenumi}{(\roman{enumi})}
\renewcommand{\labelenumi}{\theenumi}
\begin{enumerate} 
    \item Like the \algtpms algorithm, the greedy algorithm aims at optimizing the global functional, and consequently may fare poorly in terms of fairness. Indeed, in order to optimize the global objective~\eqref{eqn:diversity_obj}, the greedy algorithm may sacrifice the topic coverage for some of the papers, assigning relevant reviewers to other papers. \label{Coverage:issue1}
    \item While guaranteed to achieve a constant factor approximation of the objective~\eqref{eqn:diversity_obj}, the greedy algorithm may yield an assignment in which papers are reviewed by (much) less than $\paperload$ reviewers. It is not even guaranteed that in the resulting assignment each paper has at least one reviewer. \label{Coverage:issue2}
\end{enumerate}
}
Nevertheless, both the \algo algorithm and the algorithm of~\citet{Long13gooadandfair} can benefit from each other if the latter is used as a heuristic to choose a feasible assignment in Step~\ref{Sstep:pick_flow} of the subroutine of the former. In what follows we detail the procedure to combine the two algorithms. The greedy algorithm of~\citet{Long13gooadandfair} picks (reviewer, paper) pairs one-by-one and adds them to the assignment. At each step, it picks the pair that yields the largest incremental gain to~\eqref{eqn:diversity_obj} while still meeting  the paper/reviewer load constraints. In Step~\ref{Sstep:pick_flow} of the subroutine of \algodot, we may use the greedy algorithm, restricted to the (reviewer, paper) pairs added to the network in the previous steps, to find an assignment that approximately maximizes~\eqref{eqn:diversity_obj}. Next, for every (reviewer, paper) pair that belongs to this assignment, we set the cost of the corresponding edge in the flow network to $1$ and the costs of the remaining edges to $0$. Finally, we compute the maximum flow with maximum cost in the resulting network and fix (reviewer, paper) pairs that correspond to edges employed in that flow in the final output of the subroutine. 

Let us now discuss the benefits of this approach. First, in \algo we modify only the procedure of tie-breaking among max-flows, and hence all the guarantees established in the paper continue to hold. Second, the introduced procedure allows to overcome the issue~\ref{Coverage:issue2}, because the max-flow guarantees that each paper is assigned with exactly requested number of reviewers. Third, by setting the cost of selected edges to $1$, we encourage the topic coverage (although the pproximation guarantee of the greedy algorithm no longer holds). Finally, we do not allow the algorithm of~\citet{Long13gooadandfair} to sacrifice some papers in order to maximize the global coverage~\eqref{eqn:diversity_obj}, because the subroutine ensures that in the resulting assignment all the papers are assigned to pre-selected reviewers with high similarity, thereby overcoming~\ref{Coverage:issue1}.  

\end{document}